\documentclass{article}
\PassOptionsToPackage{numbers, compress}{natbib}
 \usepackage[preprint]{neurips_2026}

\usepackage[utf8]{inputenc} 
\usepackage[T1]{fontenc}    
\usepackage{hyperref}       
\usepackage{url}            
\usepackage{booktabs}       
\usepackage{array}          
\usepackage{amsmath}        
\usepackage{amsfonts}       
\usepackage{amsthm}         
\usepackage{nicefrac}       
\usepackage{microtype}      
\usepackage[table]{xcolor}  
\usepackage{graphicx}       
\usepackage{wrapfig}        

\usepackage{tabularx}
\usepackage{array}
\usepackage{caption}
\usepackage{pifont}
\usepackage{makecell}
\usepackage{multirow}
\usepackage{placeins}        
\usepackage{float}           

\newtheorem{lemma}{Lemma}
\newtheorem{proposition}{Proposition}

\graphicspath{{images/}{figures/}}    

\definecolor{tabfirst}{rgb}{1, 0.7, 0.7}
\definecolor{tabsecond}{rgb}{1, 0.85, 0.7}
\definecolor{tabthird}{rgb}{1, 1, 0.7}

\definecolor{mygray}{gray}{0.95}

\title{Spectral Progressive Diffusion \\ for Efficient Image and Video Generation}

\begin{document}
\author{%
  Howard Xiao \quad Brian Chao \quad Lior Yariv \quad Gordon Wetzstein\vspace{1ex}\\
  Stanford University \\
  \url{https://howardxiao.ca/speed/}
}

\maketitle

\makeatletter
\let\arxiv@savedaddcontentsline\addcontentsline
\def\arxiv@tocfile{toc}
\renewcommand{\addcontentsline}[3]{%
  \def\arxiv@targetfile{#1}%
  \ifx\arxiv@targetfile\arxiv@tocfile
  \else
    \arxiv@savedaddcontentsline{#1}{#2}{#3}%
  \fi
}
\makeatother

\definecolor{mg}{rgb}{0.639,0.984,0.722}
\definecolor{my}{rgb}{0.996,0.875,0.643}
\definecolor{mr}{rgb}{0.941,0.561,0.620}
\newcommand{\ccg}{\cellcolor{mg}}  
\newcommand{\ccy}{\cellcolor{my}}  
\newcommand{\ccr}{\cellcolor{mr}}  

\newcommand{\cmark}{\ding{51}}%
\newcommand{\xmark}{\ding{55}}%
\newcommand{\bluecheck}{{\color{blue}\checkmark}}
\newcommand{\blackcheck}{{\color{black}\cmark}}
\newcommand{\redx}{{\Large\color{\red}\xmark} }

\newcommand{\headfontsize}{\scriptsize}
\newcommand{\tablerefsize}{\tiny}
\newcommand{\vertice}{\mathbf{v}}
\newcommand{\normal}{\mathbf{n}}

\newcommand*\colourcheck[1]{%
  \expandafter\newcommand\csname #1check\endcsname{\textcolor{#1}{\ding{51}}}%
}
\colourcheck{red}
\colourcheck{green}

\newcommand*\colourxx[1]{%
  \expandafter\newcommand\csname #1xx\endcsname{\textcolor{#1}{\ding{55}}}%
}
\colourxx{red}
\colourxx{green}

\newcommand*\colourtriangle[1]{%
  \expandafter\newcommand\csname #1triangle\endcsname{\textcolor{#1}{$\varDelta$}}%
}
\colourtriangle{yellow}

\newcommand{\greenvcell}{\ccg \cmark}
\newcommand{\redxcell}{\ccr \xmark}
\newcommand{\yellowtrcell}{\ccy $\varDelta$} 
\newcommand{\greenxcell}{\ccg \xmark}
\newcommand{\redvcell}{\ccr \cmark}


\newcommand{\hlineB}[1]{\noalign{\hrule height #1\arrayrulewidth}}

\newcommand{\bc}[1]{\textcolor{blue}{bc: #1}}
\newcommand{\hx}[1]{\textcolor{purple}{hx: #1}}
\newcommand{\ly}[1]{\textcolor{teal}{ly: #1}}

\newcommand{\methodname}{\textbf{Spectral Progressive Diffusion}}
\begin{abstract}
Diffusion models have been shown to implicitly generate visual content autoregressively in the frequency domain, where low-frequency components are generated earlier in the denoising process while high-frequency details emerge only in later timesteps. 
This structure offers a natural opportunity for efficient generation, as high-resolution computation on noise-dominated frequencies is largely redundant.
We propose \emph{Spectral Progressive Diffusion}, a general framework that progressively grows resolution along the denoising trajectory of pretrained diffusion models. 
To this end, we develop a spectral noise expansion mechanism and derive an optimal resolution schedule from the model's power spectrum. Our framework supports training-free acceleration and a novel fine-tuning recipe that further improves efficiency and quality.
We demonstrate significant speedups on state-of-the-art pretrained image and video generation models while preserving visual quality.


\end{abstract}

\section{Introduction}
\label{sec:intro}

Computational demands for visual generative models are increasing rapidly as image resolutions and video sequence lengths continue to grow. This trend reveals a fundamental scaling crisis: while scaling model and data sizes consistently yields better generation quality, the underlying cost of self-attention in Diffusion Transformers (DiTs)~\citep{dit} scales quadratically with the number of generated tokens. This creates a growing conflict between the desire for high-fidelity, long-duration content and the compute cost of existing architectures. A shift toward token-efficient representations offers a flexible and broadly compatible path for further scaling image and video generation.

Rissanen et al.~\citep{rissanen2023generative} and Dieleman~\citep{dieleman2024spectral} observed that diffusion models implicitly learn to generate visual content autoregressively in the frequency domain, where low-frequency components are generated earlier in the denoising process while high-frequency details emerge only in later timesteps.  While this observation has motivated a variety of frequency-domain designs for diffusion models \cite{wavediff, spectral-reg, fourierspace, frecas}, they either do not support existing pretrained models or provide limited efficiency gains. A natural way to exploit spectral autoregression is through progressive resolution growth, since the frequency content representable by a signal is intrinsically tied to its spatial resolution. However, prior progressive resolution approaches \cite{pyrflow, scalespace, ralu, lssgen} either require significant modifications to the model architecture or rely heavily on heuristics for when and how to upsample, limiting their compatibility with state-of-the-art pretrained models \cite{zimage, flux-2, wan, cogvideox}.

In this paper, we propose \emph{Spectral Progressive Diffusion}: a general framework for progressive-resolution generation, guided by the spectral structure of the denoising process. Motivated by the spectral autoregression property of diffusion models \cite{dieleman2024spectral}, we expand the resolution only at the timesteps where high-frequency content begins to emerge from noise (see Fig.~\ref{fig:method}). This keeps early denoising steps in a reduced token space and avoids redundant computation on noise-dominated frequencies. 
To achieve this, we introduce a spectral noise expansion mechanism that uses a spectral transformation to inject high-frequency noise at the correct level while preserving the partially-denoised low-frequency content. We further derive optimal resolution transition times directly from the model's power spectrum, determined by a single error-tolerance hyperparameter. 

Our framework applies directly to state-of-the-art pretrained diffusion and flow-matching models without architectural modifications. We demonstrate the flexibility of our framework across three visual generation modalities: latent-space image generation~\cite{black2024flux, zimage}, pixel-space image generation~\cite{pixelgen}, and latent-space video generation~\cite{wan}. In a training-free setting, our method delivers immediate speedups on existing pretrained models. We additionally introduce a fine-tuning recipe that further improves efficiency and quality, a direction previously unexplored for progressive-resolution generation. Extensive experiments show speedups of up to $7\times$ on image generation and $2.5\times$ on video generation, outperforming prior spatial acceleration methods in both runtime and visual fidelity.

To summarize: \vspace{-3pt}
\begin{itemize}
\item We propose a progressive-resolution generation framework guided by the spectral autoregression of diffusion models, that applies directly to pretrained models and improves efficiency.
\item We introduce a principled spectral noise expansion mechanism and derive optimal resolution transition times from the model's power spectrum.
\item We demonstrate our framework on both image and video generation models, supporting both training-free inference and fine-tuning while maintaining high generation quality.
\end{itemize}

\begin{figure}[t!]
  \centering
  \includegraphics[width=\textwidth]{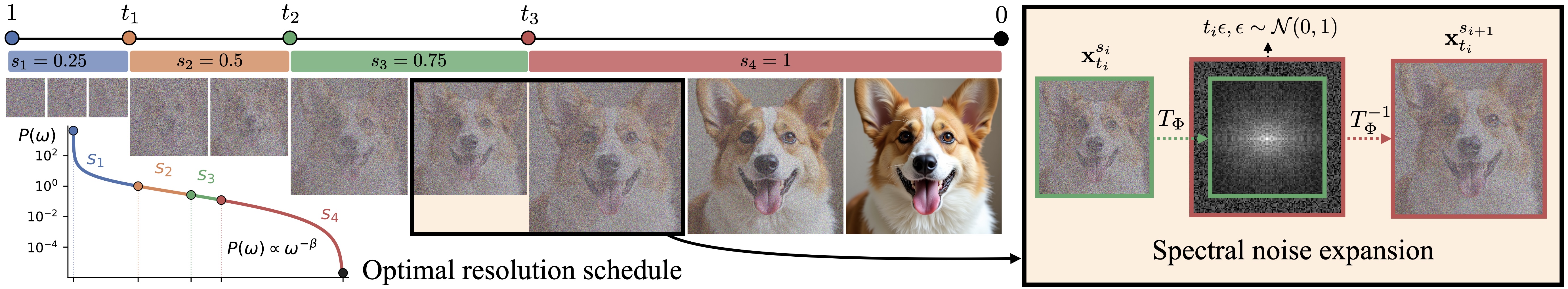}\vspace{-2pt}
  \caption{\textbf{Spectral Progressive Diffusion.} We progressively grow the resolution along the denoising trajectory using an \textit{optimal resolution schedule} derived from the spectral power of pretrained models \textbf{(left)}. At each scheduled transition, our \textit{spectral noise expansion} mechanism \textbf{(right)} injects high-frequency noise at the correct level while preserving the partially-denoised low-frequency content.}
  \label{fig:method}
\end{figure}

\section{Related Work}

\label{sec:related}



\vspace{-4pt}
\paragraph{Spectral-domain designs in diffusion models.} The seminal discussion by \citet{dieleman2024spectral} on the spectral autoregression property of diffusion has inspired many spectral-domain designs in diffusion models. Most of these models \cite{wavediff, wavescore, dimsum, wfvae, wavedm, du2025flowk, dctdiff} require training from scratch, making them incompatible with existing pretrained image and video generation models \cite{flux-2,zimage,wan,cogvideox}. Other methods use frequency-aware losses~\cite{lwd, spectral-reg}, autoencoder regularization~\cite{diffusability}, or noise schedules~\cite{beta-sampling, spectralschedule, spectrallyguided, fourierspace} to improve the generation quality of existing models but do not improve efficiency. Recent training-free methods use spectral-domain designs for efficient high-resolution generation \cite{frecas, diffusehigh, hiwave}. However, as they require fully denoised lower-resolution images, their overall efficiency gains are limited. Our method leverages similar insights of spectral autoregression but progressively grows resolution within a single denoising trajectory, achieving greater efficiency gains while maintaining compatibility with pretrained models. 




\vspace{-3pt}
\paragraph{Progressive resolution for efficient generation.} 
Progressive resolution approaches have been proposed to improve generation efficiency by reducing token counts early in the denoising process.
However, most existing methods~\cite{cdm, jing2022subspace, edifyimage, relaydiff, pyrflow, matryoshka, pyrdiff, cogview3, scalespace} require training from scratch due to model-specific architectures, limiting their compatibility with pretrained models. Current training-free~\cite{bottleneck, ralu, demofusion, fresco, freescale} and fine-tuning progressive resolution approaches~\cite{selfcascade, lssgen} either require significant heuristics-based hyperparameter tuning or specialized modules that limit their effectiveness and their compatibility with adaptation methods such as LoRA~\cite{hu2021lora}. Progressive resolution generation has also been explored in visual autoregressive models \cite{var, infinity, flowar, huang2025spectralar}, where next-token prediction is reformulated as next-scale prediction. Our method uniquely leverages the spectral autoregression property of diffusion models to design a principled progressive resolution growing mechanism for efficient image and video generation using minimal hyperparameters. Furthermore, our method is fully compatible with current pretrained models for both training-free inference and fine-tuning.

\paragraph{Other methods for efficient generation.} Aside from progressive resolution pipelines, a wide variety of acceleration methods have been proposed for efficient generation. Model distillation approaches reduce the number of denoising steps by training a student model to approximate the outputs of a pretrained teacher \cite{dmd, dmd2, causvid, lcm, pcm, prog-dist}. Feature caching approaches reuse features across denoising timesteps to reduce redundant computation \cite{teacache, toca, freqca, ma2023deepcache}. Token merging techniques identify redundant tokens and merge similar tokens based on predefined heuristics \cite{lu2025toma, haurum2024agglomerative, bolya2023token, kim2024tokenfusion, wu2025importance, lee2025local, fang2025attend}. Sparse attention mechanisms identify sparse patterns in the attention maps and use block-wise calculations to improve efficiency \cite{li2025radial, xia2025trainingfree, zhan2025bidirectional, katharopoulos202linear, wang2020linformer, choromanski2021performer, beltagy2020longformer}. In this paper, we present an orthogonal and complementary progressive resolution generation approach that accelerates both image and video generation through spectral-domain transformations.

\section{Preliminaries}
\label{sec:method:prelim}

\subsection{Diffusion Models and Flow Matching}
\label{sec:method:prelim:basics}

Diffusion models~\citep{ho2020denoising,song2021score} define a generative process that gradually transforms samples from a simple prior (e.g. Gaussian) to data samples via a learned reverse-time process. Flow Matching \cite{liu2022rectified,lipman2023flow} reformulates the diffusion process as a continuous-time optimal transport problem and learns a velocity field that deterministically transforms noisy samples to data samples along straight paths.

Specifically, given a clean data point $\mathbf{x}_0$ sampled from a data distribution $p_{\text{data}}$, and a noise sample $\mathbf{x}_1 = \boldsymbol{\epsilon} \sim \mathcal{N}(0, I)$, the noise-to-data path is a straight line defined by
\begin{equation}
  \mathbf{x}_t \;=\; (1-t)\, \mathbf{x}_0 \,+\, t\,\boldsymbol{\epsilon},
  \qquad t \in [0, 1].
  \label{eq:flow-forward}
\end{equation}
A neural network
$\mathbf{v}_\theta(\mathbf{x}_t, t)$ is trained to predict the target velocity $\dot{\mathbf{x}}_t = \boldsymbol{\epsilon} - \mathbf{x}_0$ with the following objective:
\begin{equation}
  \mathcal{L}_\text{flow}(\theta) \;=\; \mathbb{E}_{t, \mathbf{x}_0, \boldsymbol{\epsilon}}\!\left[\,\|\mathbf{v}_\theta(\mathbf{x}_t, t) - (\boldsymbol{\epsilon} - \mathbf{x}_0)\|^2\,\right].
  \label{eq:flow-loss}
\end{equation}

A network with sufficient capacity that is trained on enough data converges to the \textit{Bayes-optimal velocity predictor} $\mathbf{v}_\theta(\mathbf{x}_t, t) = \mathbf{v}^*(\mathbf{x}_t, t)$~\citep{liu2022rectified, lipman2023flow}.

During inference, data samples are generated through sampling from the prior $\mathbf{x}_1 \sim \mathcal{N}(0, I)$ and solving the probability-flow Ordinary Differential Equation (ODE) $\dot{\mathbf{x}}_t = \mathbf{v}_\theta(\mathbf{x}_t, t)$. Due to its faster training and stable convergence, recent image and video generative models \cite{flux-2,zimage,sd3,wan} have increasingly adopted the flow matching paradigm.

Most state-of-the-art diffusion and flow matching models adopt the Diffusion Transformer (DiT) architecture \citep{dit}, which processes tokens through a sequence of self-attention and Multilayer Perceptron (MLP)-based operations. This motivates our approach to reduce the number of tokens processed along the denoising trajectory using the spectral autoregression property of diffusion models \citep{dieleman2024spectral}.

\begin{figure}[t]
  \centering
  \includegraphics[width=\linewidth]{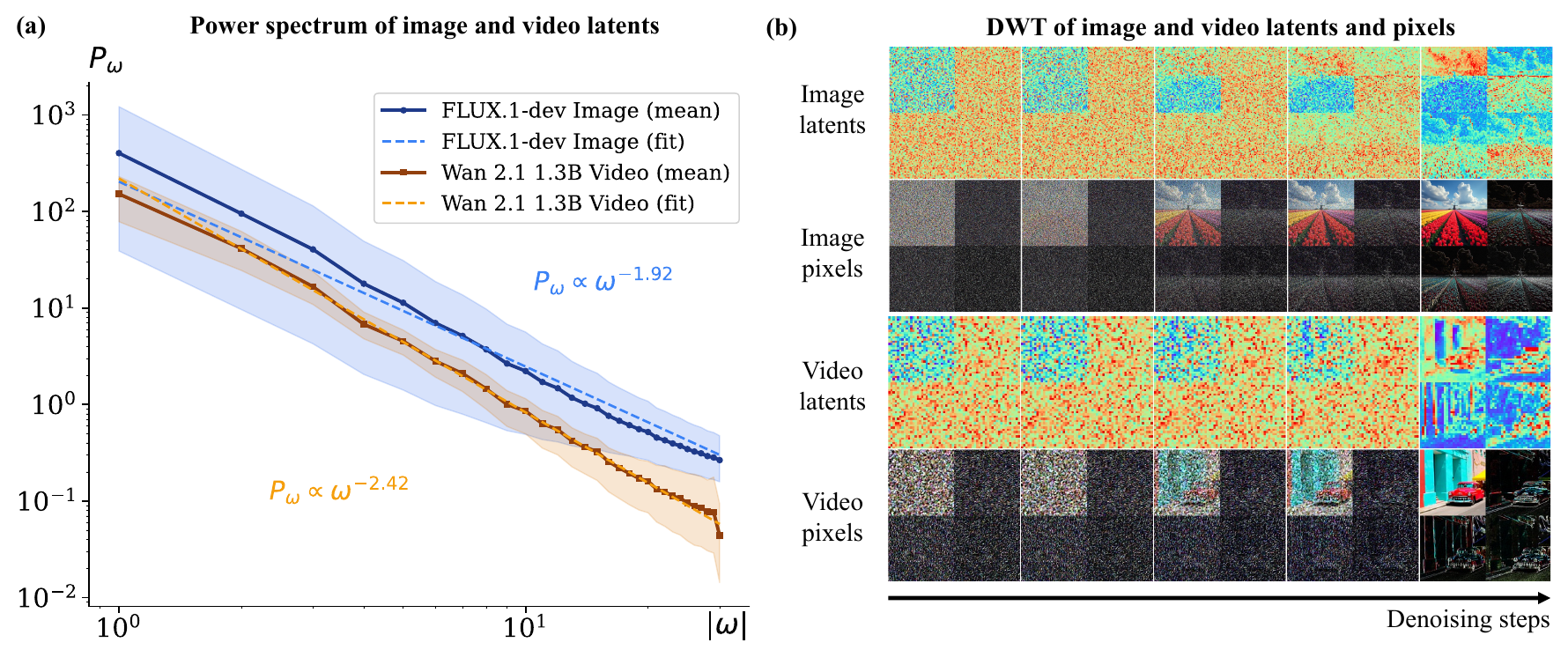}
  \caption{\textbf{Diffusion process in the spectral domain.} Latent power spectra in both image and video models decay rapidly with frequency (Fig.~\textbf{(a)}), consistent with natural images. Diffusion exhibits a frequency-domain autoregressive structure (Fig.~\textbf{(b)}) due to the aforementioned property: low frequencies emerge early in the denoising process, while high frequencies remain noise-dominated.}
  \label{fig:wavelet-spectral}
\end{figure}

\subsection{Spectral Autoregression in Diffusion Models}
\label{sec:method:prelim:spectral}

Natural images exhibit a characteristic power-law decay \citep{ruderman1997origins}, which is the foundation of the spectral autoregression property of diffusion models \citep{dieleman2024spectral}. Specifically, let $\Phi = \{\phi_\omega\}_{\omega \in \Omega}$ be a spectral basis indexed by frequency $\omega \in \Omega$ (e.g., Fourier~\citep{fourier1822theorie}, discrete cosine~\citep{ahmed1974dct}, or Haar wavelet~\citep{haar1909theorie}), and denote the associated spectral transformation by $T_\Phi$. We define the per-frequency signal power, or \emph{power spectrum}, of a data distribution $p_{\text{data}}$ as
\begin{equation}
  P_\omega \;:=\; \mathbb{E}_{\mathbf{x}_0}\!\left[\,\bigl|x_0^{(\omega)}\bigr|^2\,\right],
  \quad \text{where  } x_0^{(\omega)} := \langle \mathbf{x}_0, \phi_\omega \rangle \text{ under $T_\Phi$}.
  \label{eq:Pb-def}
\end{equation}

When $p_\text{data}$ represents natural images, $P_\omega$ is known to exhibit a power-law decay, that is
\begin{equation}
  P_\omega \;\propto\; |\omega|^{-\beta},
  \label{eq:power-law}
\end{equation}
with the exponent term typically in the range $\beta \in [2, 3]$~\citep{dieleman2024spectral,fourierspace}. We validated similar trends in both image and video latent representations of modern latent-space diffusion models (see Fig. \ref{fig:wavelet-spectral}).



The power-law decay of $P_\omega$ in Eq.~\eqref{eq:power-law} implies that high frequencies carry substantially weaker signal than low frequencies. Since the Gaussian noise $\boldsymbol{\epsilon}$ added during the forward process has a flat, i.e. frequency-independent, power spectrum, higher frequencies are dominated by noise early in the denoising trajectory. In this work, we leverage this spectral autoregression property to progressively grow resolution within a single denoising trajectory via spectral domain transformations for efficient image and video generation.


\section{Spectral Progressive Diffusion}
In this section, we present a spectral-transformation-based progressive generation framework that enables arbitrary resolution transitions within a single denoising trajectory for pretrained image and video generation models (Sec.~\ref{sec:method:noise-injection}). We then derive an optimal resolution transition schedule based on power spectrum analysis and resolution-dependent frequency limits (Sec.~\ref{sec:method:schedule:signal}). Finally, we introduce a principled fine-tuning strategy to further improve both generation quality and efficiency (Sec.~\ref{sec:method:lora}).

\label{sec:method:schedule}


\subsection{Training-Free Inference with Spectral Noise Expansion}
\label{sec:method:noise-injection}


Motivated by the spectral autoregression property of diffusion (Sec.~\ref{sec:method:prelim:spectral}), we progressively increase image resolution by injecting higher-frequency components along the denoising trajectory using a spectral transformation. Our framework applies natively to existing pretrained image and video generation models, without additional custom modules or architectural modifications.

Given an orthonormal spectral basis $\Phi$, let $T_\Phi$ and $T_\Phi^{-1}$ denote the forward and inverse spectral transformations. We define $S$ progressive resolution scales $s_{1:S}$, where $0 < s_1 < s_2 < \cdots < s_S = 1$, and their corresponding resolution transition times $t_{1:S-1}$, with $1 > t_1 > \cdots > t_{S-1} > 0$, that are matched to the noise schedules of pretrained models. We denote the final full-resolution state as $\mathbf{x}_0^{s_S} \in \mathbb{R}^{C \times T \times H \times W}$, and assume stage $i$ runs at spatial resolution $(s_iH, s_iW)$ for $t \in (t_i,\, t_{i-1}]$. 

At each transition time $t_i$, we expand the current low-resolution state $\mathbf{x}^{s_i}_{t_i} \in \mathbb{R}^{C \times T \times (s_iH) \times (s_iW)}$ to the next resolution state $\mathbf{x}^{s_{i+1}}_{t_i} \in \mathbb{R}^{C \times T \times (s_{i+1}H) \times (s_{i+1} W)}$ using \emph{spectral noise expansion} and timestep alignment (see Fig.~\ref{fig:method}).

\paragraph{Spectral noise expansion.}
Given a low-resolution $\mathbf{x}^{s_i}_{t_i}$ at transition time $t_i$, we expand its resolution to the next scale $s_{i + 1}$ through the following steps:
\begin{enumerate}
  \item[(i)] Compute the spectrum $\boldsymbol{\xi}^{s_i}_{t_i} = T_\Phi(\mathbf{x}^{s_i}_{t_i})$ supported on the frequency set $\Omega_{s_i}$ representable at scale $s_i$.
  \item[(ii)] Embed $\boldsymbol{\xi}^{s_i}_{t_i}$ in the lower-frequency part of the spectrum for $\Omega_{s_{i + 1}}$ and fill the slots $\Omega_{s_{i+1}} \setminus \Omega_{s_i}$ with $t_i \epsilon$, $\epsilon \sim \mathcal{N}(0, 1)$ to get $\boldsymbol{\xi}^{s_{i + 1}}_{t_i}$. 
  \item[(iii)] Convert the spectrum back to spatial domain $\mathbf{x}^{s_{i+1}}_{t_i}$ at resolution $(s_{i+1}H, s_{i+1}W)$ via $\mathbf{x}^{s_{i+1}}_{t_i} = T_\Phi^{-1}(\boldsymbol{\xi}^{s_{i + 1}}_{t_i})$.
\end{enumerate}
Steps~(i)--(iii) expand the resolution scale from $s_i$ to $s_{i + 1}$ for $\mathbf{x}_{t_i}^{s_{i + 1}}$ by injecting high frequencies at the correct noise level while preserving the partially-denoised low-frequency content of $\mathbf{x}^{s_i}_{t_i}$.

\paragraph{Timestep alignment.}
As $\Phi$ is orthonormal, the high-frequency noise padding in $\Omega_{s_{i + 1}} \setminus \Omega_{s_i}$ raises the overall noise level of the enlarged $\mathbf{x}_{t_i}^{{s_{i+1}}}$, which makes it no longer correspond to the original timestep $t_i$. We therefore rescale the inverse-transformed output and re-index the timestep by:
\begin{equation}
  \tilde{\mathbf{x}}^{s_{i+1}}_{\tilde{t}_i} := \frac{s_{i+1}/s_i}{1 + ((s_{i+1}/s_i) - 1)t_i} \cdot \mathbf{x}^{s_{i+1}}_{t_i},
  \label{eq:scale-align}
\end{equation}
where
\begin{equation}
  \tilde{t}_i := \frac{(s_{i+1}/s_i)\,t_i}{1 + ((s_{i+1}/s_i) - 1)t_i}.
  \label{eq:time-align}
\end{equation}
Eq.~\ref{eq:scale-align} and Eq.~\ref{eq:time-align} give us the aligned state $\tilde{\mathbf{x}}^{s_{i+1}}_{\tilde{t}_i}$ at the aligned timestep $\tilde{t}_i$. The integration of the probability-flow ODE then resumes at resolution $(s_{i+1}H, s_{i+1}W)$ until the next transition time $t_{i+1}$. 
The full derivation of Eqs.~\eqref{eq:scale-align}--\eqref{eq:time-align} is provided in Appendix~\ref{app:time-align}. In practice, $S$ and $s_{1:S}$ are chosen to align with the multi-resolution training distribution of pretrained generative models. By default, we choose $T_\Phi$ as the Discrete Cosine Transform (DCT)~\citep{ahmed1974dct} with discrete cosine basis $\Phi$. Alternative spectral transforms (such as Discrete Wavelet Transform (DWT) and Fourier Transform (FFT)) are compared and evaluated in Sec.~\ref{sec:exp:ablations}.

\renewcommand{\arraystretch}{1.0}
\setlength{\aboverulesep}{0.2ex}
\begin{table*}[!t]
  \centering
  \scriptsize
  \begin{tabular*}{\textwidth}{@{\extracolsep{\fill}} l ccccccc @{}}
    \toprule
    \multirow{2}{*}{Method} & \multirow{2}{*}{\makecell{Speedup (s) $\uparrow$}} & \multirow{2}{*}{\makecell{TFLOPs $\downarrow$}} & Overall & \multicolumn{2}{c}{Image quality} & \multicolumn{2}{c}{Text alignment} \\
    \cmidrule(lr){4-4} \cmidrule(lr){5-6} \cmidrule(lr){7-8}
     & & & ImageReward $\uparrow$ & CLIP-IQA $\uparrow$ & NIQE $\downarrow$ & T2I-Comp. $\uparrow$ & GenEval $\uparrow$ \\
    \midrule
    FLUX (50 steps)             & 1.00$\times$              & 2991.01              & \textbf{1.095}    & 0.707             & 6.75             & \textbf{0.634}    & \textbf{0.698} \\
    RALU~\citep{ralu}     & \underline{1.58}$\times$              & \textbf{1749.94}  & 1.028             & \underline{0.712} & \textbf{6.07}    & 0.613                 & 0.648     \\
    \textbf{Ours ($S=2$)} & \textbf{1.66}$\times$  & \underline{1755.22}              & \underline{1.049} & \textbf{0.719}    & \underline{6.43}             & \underline{0.617} & \underline{0.654}     \\
    \midrule[0.3pt]
    FLUX (10 steps) & 4.84$\times$ & 610.02 & 0.981 & 0.679 & 6.93 & 0.618 & 0.647 \\
    Bottleneck~\citep{bottleneck} & 4.67$\times$              & 571.23              & 0.889          & 0.661             & 9.16          & 0.620             & \textbf{0.687}    \\
    RALU~\citep{ralu}             & 4.98$\times$              & 540.47              & 1.022          & \underline{0.700} & \textbf{6.43} & \textbf{0.626}    & 0.652             \\
    \textbf{Ours ($S=2$)}         & \underline{5.77}$\times$  & \underline{500.34}  & \textbf{1.059} & 0.696             & 6.69          & \underline{0.624} & \underline{0.655} \\
    \textbf{Ours ($S=3$)}         & \textbf{6.09}$\times$     & \textbf{469.15}     & \underline{1.042} & \textbf{0.701} & \underline{6.53} & 0.623          & 0.637             \\
    \midrule[0.3pt]
    FLUX (7 steps) & 6.62$\times$ & 431.45 & 0.920 & 0.660 & 8.25 & 0.594 & 0.583 \\
    Bottleneck~\citep{bottleneck} & 6.64$\times$              & 431.52              & 0.792          & 0.631             & 8.71          & 0.605             & \underline{0.672} \\
    RALU~\citep{ralu}             & 6.69$\times$              & \underline{426.01}  & 0.999          & 0.681             & 6.87          & \textbf{0.633}    & \textbf{0.682}    \\
    \textbf{Ours ($S=2$)}         & \underline{6.78}$\times$  & 427.03              & \textbf{1.039} & \underline{0.689} & \underline{6.78} & 0.620          & 0.667             \\
    \textbf{Ours ($S=3$)}         & \textbf{7.09}$\times$     & \textbf{406.24}     & \underline{1.015} & \textbf{0.694} & \textbf{5.99} & \underline{0.627} & 0.637             \\
    \bottomrule
  \end{tabular*}
  \vspace{-2pt} 
  \caption{\textbf{Training-free quantitative comparisons on FLUX.1-dev~\citep{black2024flux}.} Image resolution is $1024^2$ and baseline rows are copied from Table 1 in RALU~\citep{ralu}. We group results based on $1\times, 5\times, 7\times$ speedup settings where progressive resolution approaches are applied in combination with reducing denoising steps. Our method improves the speed--quality tradeoff across all speedup settings compared to baseline approaches. Further evaluation details and comparisons are shown in Appendix Sec.~\ref{subsec:image_latent}.}
  \label{tab:flux-main}
\end{table*}

\subsection{Optimal Resolution Schedule}

\label{sec:method:schedule:signal}

Section~\ref{sec:method:noise-injection} described our progressive resolution transition mechanism during the denoising process given a predetermined resolution transition schedule $t_{1:S-1}$. In practice, while increasing the number of resolution scales $S$ improves token efficiency, it introduces additional transition times as hyperparameters to tune. As a result, existing progressive resolution approaches typically limit $S = 2$ or leave $t_{1:S-1}$ as a brittle, model-specific design choice~\citep{bottleneck, ralu}. 
In contrast, our method supports an arbitrary number of resolutions $S$ and arbitrary resolution scales $s_{1:S}$. In the following section, we derive a $\delta$-\textit{optimal resolution schedule} for pretrained flow matching models that determines the optimal transition times $t^*_{1:S-1}$ from a single error threshold parameter $\delta$ independent of $S$.


\paragraph{Spectral-domain flow matching.}
The spatial-domain flow matching forward process defined in Eq.~\eqref{eq:flow-forward} can be written in the spectral domain under transformation $T_\Phi$ with orthonormal basis $\Phi = \{\phi_\omega\}_{\omega \in \Omega}$:
\begin{equation}
  x_t^{(\omega)} \;=\; (1-t)\,x_0^{(\omega)} \,+\, t\,\epsilon^{(\omega)},
  \qquad \omega \in \Omega,
  \label{eq:forward-mode}
\end{equation}
where $\epsilon^{(\omega)} := \langle \boldsymbol{\epsilon}, \phi_\omega \rangle$. By orthonormality of $\Phi$, $\epsilon^{(\omega)} \sim \mathcal{N}(0, 1)$ is i.i.d.\ Gaussian noise for each $\omega \in \Omega$. We denote the per-frequency component of the Bayes-optimal velocity predictor described in Sec. \ref{sec:method:prelim:basics} by $v^{*(\omega)}(\mathbf{x}_t, t)$. We then determine when a spectral component $\omega$ is noise-dominated in Proposition 1 and link this to spatial resolution to derive the $\delta$-optimal resolution schedule in Proposition 2. 


\textbf{Proposition 1 (Per-frequency $\delta$-optimal activation time):} Under the simplified modelling assumption $x_0^{(\omega)} \sim \mathcal{N}(0, P_\omega)$ with $P_\omega > 0$, for all $t \ge t_\omega$ and $\delta\in(0, 1)$, we have:
\begin{equation}
  \mathbb{E}\!\left[\bigl|v^{*(\omega)}(\mathbf{x}_t, t) - \epsilon^{(\omega)}\bigr|^2\right] \;\le\; \delta,
  \label{eq:noise-dominated}
\end{equation}
where $t_\omega$ is the $\delta$-optimal activation time for $\omega$:
\begin{equation}
  t_\omega \;:=\; \frac{1}{1 + \sqrt{\dfrac{\delta}{P_\omega\,(1 + P_\omega - \delta)}}}.
  \label{eq:activation}
\end{equation}
Intuitively, for $t \geq t_\omega$, spectral component $\omega$ is noise-dominated as its optimal velocity predictions are approximately noise $\epsilon^{(\omega)}$. After $t_\omega$, the signal power overcomes the error tolerance $\delta$ and structural information is recovered. The proof of Proposition 1 is given in Appendix Sec.~\ref{app:proof-prop1}. We further include empirical evidence supporting Proposition 1 in Appendix Sec.~\ref{sec:motivation}.

\textbf{Proposition 2 (Per-resolution $\delta$-optimal transition time):} Under the setting of Proposition~1, and assuming $P_\omega$ is monotonically decreasing in $|\omega|$ (consistent with the power-law decay Eq.~\eqref{eq:power-law}), for any resolution scale $s_i, s_{i + 1}$, the optimal transition time from scale $s_i$ up to $s_{i+1}$ is
\begin{equation}
  t_i^* \;:=\; \min_{\omega \in \Omega_{s_i}}\, t_\omega \;=\; t_{\omega=s_i \cdot \omega_{\max}(H,W)},
  \label{eq:stage-transition}
\end{equation}
where $\Omega_{s_i}$ is the set of frequencies representable on the $(s_iH, s_iW)$ grid. The maximum representable frequency of the full-resolution spatial grid $\Omega_{s_S} = \omega_{\max}(H, W) = \min(H, W)/2$ is given by the Nyquist--Shannon sampling theorem~\citep{nyquist1928certain,shannon1949communication}. 

Proposition 2 connects frequency activation times with spatial resolution transitions through the Nyquist limit. It implies that high-resolution computation in early denoising steps is redundant as most representable frequencies $\omega$ are still noise-dominated, motivating our approach of Spectral Progressive Diffusion. We apply spectral noise expansion at $t_i^*$ to progressively increase resolution precisely when finer details emerge from noise, thereby maximizing token efficiency. The hyperparameter $\delta$ is independent of $S$, making our method particularly robust to tuning compared to prior methods. The proof of Proposition 2 is deferred to Appendix~\ref{app:proof-prop2} and we analyze the effect of the number of resolution stages $S$ and the tolerance $\delta$ in Sec.~\ref{sec:exp:ablations}.

\renewcommand{\arraystretch}{1.0}
\setlength{\doublerulesep}{0.5pt}
\setlength{\aboverulesep}{0.2ex}
\begin{table*}[!t]
  \centering
  \scriptsize
  \begin{tabular*}{\textwidth}{@{\extracolsep{\fill}} l ccccccc @{}}
    \toprule
    Method & \makecell{Speedup (s) $\uparrow$} & \makecell{TFLOPs $\downarrow$} & ImageReward $\uparrow$ & CLIP-IQA $\uparrow$ & NIQE $\downarrow$ & T2I-Comp. $\uparrow$ & GenEval $\uparrow$ \\
    \midrule
    Z-Image (50 steps)           & 1.00$\times$             & 4941.23             & 0.965    & 0.700    & 5.41    & 0.731    & 0.745 \\
    \midrule
    \textbf{Ours} (TF, $S=2$)  & {1.65}$\times$ & {3132.03} & {0.904}             & {0.688}             & {5.87}             & {0.658} & {0.730} \\
    \textbf{Ours} (LoRA, $S = 2$)   & {1.65}$\times$             & {3132.03}             & \textbf{0.982} & \textbf{0.699} & \textbf{5.72}             & \textbf{0.725}                 & \textbf{0.731} \\
    \midrule
    \textbf{Ours} (TF, $S=3$)  & {1.74}$\times$ & {2871.09} & {0.875}             & {0.690}             & \textbf{5.59}             & {0.650} & {0.682} \\
    \textbf{Ours} (LoRA, $S = 3$)   & {1.74}$\times$             & {2871.09}             & \textbf{0.954} & \textbf{0.697} & {5.75}             & \textbf{0.717}                 & \textbf{0.728} \\

    \midrule
    \midrule
    PixelGen (25 steps)         & 1.00$\times$             & 65.36             & 0.921    & 0.734    & 5.95    & 0.574    & 0.794 \\
    \midrule
    \textbf{Ours} (TF, $S=2$) & {1.60}$\times$             & {33.72}             & {0.799}    & {0.718}    & {6.10}    &  {0.568}  & \textbf{0.782} \\
    \textbf{Ours} (LoRA, $S=2$) & {1.55}$\times$             & {33.72}             & \textbf{0.913}    & \textbf{0.728}    & \textbf{5.87}    & \textbf{0.580}    & {0.776} \\
    \bottomrule
  \end{tabular*}
  \caption{\textbf{Fine-tuning quantitative comparisons on latent- (Z-Image~\citep{zimage}) and pixel-space image generation (PixelGen~\citep{pixelgen}).} Image resolution is $1024^2$ for Z-Image and $512^2$ for PixelGen. Across both latent- and pixel-space image generation, our fine-tuning method bridges the gap between model pretraining and progressive resolution inference, further improving generation quality while preserving efficiency gains. Further evaluation details and comparisons are shown in Appendix Sec.~\ref{subsec:image_pixel}.}
  \label{tab:zimage-main}
\end{table*}

\renewcommand{\arraystretch}{1.0}
\setlength{\aboverulesep}{0.2ex}
\begin{table*}[!t]
  \centering
  \scriptsize
  \begin{tabular*}{\textwidth}{@{\extracolsep{\fill}} l cccccccc @{}}
    \toprule
    \makecell{Method} &
    \makecell{Speedup (s) $\uparrow$} &
    \makecell{TFLOPs $\downarrow$} &
    \makecell{Subject\\Consistency $\uparrow$} &
    \makecell{Background\\Consistency $\uparrow$} &
    \makecell{Motion\\Smoothness $\uparrow$} &
    \makecell{Dynamic\\Degree $\uparrow$} &
    \makecell{Aesthetic\\Quality $\uparrow$} &
    \makecell{Image\\Quality $\uparrow$} \\
    \midrule
    WAN~2.1 (50)           & 1.00$\times$ & 119292         & 0.9492          & 0.9621          & 0.9874          & 0.4800          & 0.5993          & 0.6133          \\
    \midrule[0.3pt]
    WAN~2.1 (25)           & 2.00$\times$ & 59646          & 0.9434          & 0.9597          & \textbf{0.9879} & 0.4250          & 0.5893          & 0.5706          \\
    \textbf{Ours ($S=2$)} & \textbf{2.03}$\times$ & \textbf{57417} & \textbf{0.9462} & \textbf{0.9598} & 0.9859          & \textbf{0.4950} & \textbf{0.5975} & \textbf{0.6114} \\
    \midrule[0.3pt]
    \textbf{Ours ($S=3$)} & 2.54$\times$ & 45953 & 0.9299          & 0.9499          & 0.9815          & 0.5700 & 0.5696          & 0.5990 \\
    \bottomrule
  \end{tabular*}
  \caption{\textbf{Quantitative comparison on latent-space video generation on WAN~2.1~\citep{wan}.} Video resolution is maintained at 720P. Our training-free approach demonstrates more than $2\times$ speedup while maintaining high generation quality and outperforming the 25-step full-resolution baseline.}
  \label{tab:wan_vbench_720p}
\end{table*}


\subsection{Spectral-transformation-based Fine-tuning}
\label{sec:method:lora}
While Spectral Progressive Diffusion is compatible with pretrained image and video generation models in a training-free manner (Sec.~\ref{sec:method:noise-injection}), it implicitly assumes multi-resolution generation capability of pretrained models across scales $s_{1:S}$. Moreover, our optimal resolution schedule (Sec.~\ref{sec:method:schedule:signal}) may not accurately reflect pretrained models' training dynamics, potentially introducing training-inference gaps. 
In this subsection, we outline a spectral-transformation-based fine-tuning approach that directly follows our Spectral Progressive Diffusion framework and our optimal resolution schedule. We include additional implementation details of our fine-tuning framework in Sec.~\ref{sec:exp:image} and Appendix Sec.~\ref{subsec:image_pixel}.

\paragraph{Resolution-specific velocity targets.} 
We assume $S$ progressive resolution scales $s_{1:S}$ and their corresponding $\delta$-optimal resolution transition times $t_{1:S-1}$ from Sec.~\ref{sec:method:schedule:signal} (denoted here as $t_i$ instead of $t^*_i$ for simplicity). For resolution stage $i$, the model operates at scale $s_i$ for $t \in (t_i, \tilde{t}_{i-1}]$, where $\tilde{t}_{i-1}$ is the aligned timestep obtained from Eq.~\eqref{eq:time-align} after expanding from $s_{i-1}$ to $s_i$ at transition time $t_{i-1}$. The state at the beginning of the stage $i$ is hence the enlarged and timestep-aligned $\tilde{\mathbf{x}}_{\tilde{t}_{i - 1}}^{s_{i}}$. The model is fine-tuned to follow a straight-line path from $\tilde{\mathbf{x}}_{\tilde{t}_{i - 1}}^{s_{i}}$ to the standard flow-matching state at the next transition time:
\begin{equation}
  \mathbf{x}_{t_i}^{s_i}
  =
  (1-t_i)\mathbf{x}_0^{s_i}
  +
  t_i\boldsymbol{\epsilon}^{s_i}.
  \label{eq:lora-stagewise}
\end{equation}
where $\mathbf{x}_0^{s_i}$ denotes the clean data sample at scale $s_i$.
The corresponding stage-specific velocity target is therefore:
\begin{equation}
  \mathbf{v}^{s_i}
  =
  \frac{
  \tilde{\mathbf{x}}_{\tilde{t}_{i - 1}}^{s_{i}}
  -
  \mathbf{x}_{t_i}^{s_i}
  }{
  \tilde{t}_{i - 1} - t_i
  }.
  \label{eq:lora-stagewise-velocity}
\end{equation}

\paragraph{Training and inference.}
At each training step we sample $\mathbf{x}_0 \sim p_\text{data}$ and $t \sim \mathcal{U}(0,1)$. We assign $t$ to the resolution stage $i$ satisfying $t \in (t_i,t_{i-1}]$, and then sample $\boldsymbol{\epsilon}^{s_i} \sim \mathcal{N}(0, I)$ at scale $s_i$. We construct the stage input $\tilde{\mathbf{x}}_{\tilde{t}_{i - 1}}^{s_{i}}$ by
applying the same spectral noise expansion and timestep alignment used at
inference, where newly introduced spectral coefficients are filled using
$t_{i-1} \cdot T_\Phi(\boldsymbol{\epsilon}^{s_i})$. We use the same
$\boldsymbol{\epsilon}^{s_i}$ to construct the endpoint
$\mathbf{x}^{s_i}_{t_i}$ from Eq.~\eqref{eq:lora-stagewise}, so that $\tilde{\mathbf{x}}_{\tilde{t}_{i - 1}}^{s_{i}}$ and $\mathbf{x}^{s_i}_{t_i}$ are correlated through a shared noise realization. The training sample is the point on the straight path between them:
\begin{equation}
  \mathbf{x}_t^{s_i}
  =
  \mathbf{x}_{t_i}^{s_i}
  +
  (t-t_i)\,\mathbf{v}^{s_i},
\end{equation}
with target velocity $\mathbf{v}^{s_i}$ from
Eq.~\eqref{eq:lora-stagewise-velocity}.

We fine-tune a pretrained model $\mathbf{v}_\theta$ by minimizing
\begin{equation}
  \mathcal{L}(\theta) \;=\; \mathbb{E}_{t,\, \mathbf{x}_0,\, \boldsymbol{\epsilon}}\!\left[\,\|\,\mathbf{v}_\theta(\mathbf{x}_t^{s_i}, t) - \mathbf{v}^{s_i}\,\|^2\,\right].
  \label{eq:lora-loss}
\end{equation}
At inference, we use the same procedure as in Sec.~\ref{sec:method:noise-injection} with the fine-tuned model, using the same resolution schedule $t_{1:S-1}$ and transform $T_\Phi$.

\section{Experiments}
\label{sec:exp}

\begin{figure}[!t]
  \centering
  \includegraphics[width=\linewidth]{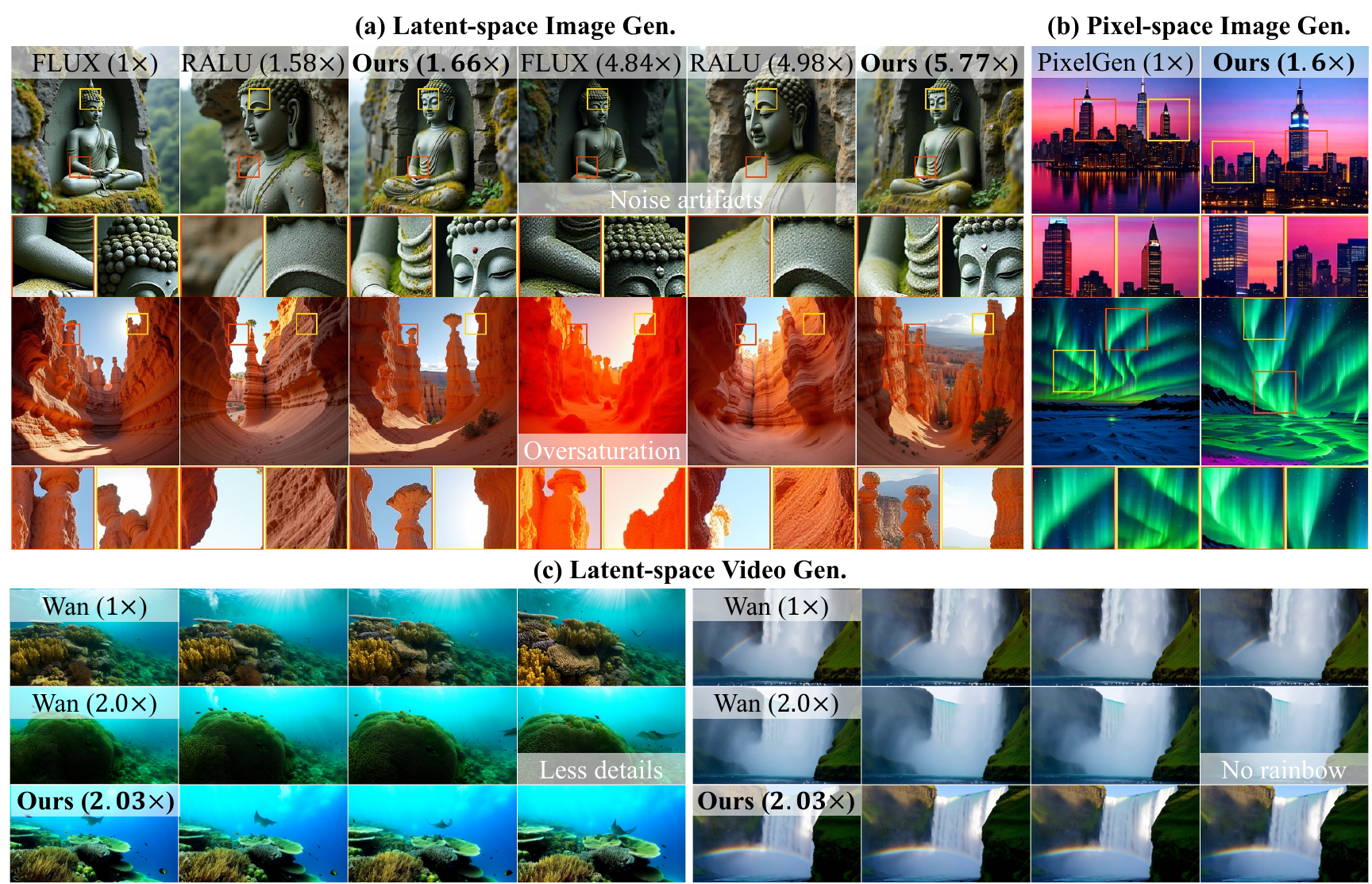}
  \caption{\textbf{Visual Generation Qualitative Comparisons.} For the main comparison of latent-space image generation, our method outperforms the state-of-the-art spatial acceleration method RALU \cite{ralu} in both visual fidelity and inference speed. Across all evaluated modalities (latent/pixel-space image generation and latent-space video generation), we achieve substantial acceleration over standard high-resolution baselines while preserving generation quality.}
  \label{fig:qual}
\end{figure}

\subsection{Setup}
\label{sec:exp:setup}

We extensively evaluate our Spectral Progressive Diffusion framework on latent-space image generation with \textbf{FLUX.1-dev}~\citep{black2024flux} and \textbf{Z-Image}~\citep{zimage}, covering both training-free inference acceleration and fine-tuning with LoRA~\citep{hu2021lora}. We further demonstrate that our framework applies to pixel-space image generation with \textbf{PixelGen-XXL/16 T2I}~\citep{pixelgen} and latent-space video generation with \textbf{WAN~2.1-T2V-1.3B}~\citep{wan}. Each pretrained backbone uses its native resolution and default model-specific inference schedule.

For latent- and pixel-space image generation, we follow RALU's~\citep{ralu} evaluation protocol, reporting ImageReward~\citep{imagereward} for overall quality, CLIP-IQA~\citep{wang2023clipiqa} and NIQE~\citep{mittal2013niqe} for image quality together with T2I-CompBench~\citep{t2icompbench} and GenEval~\citep{geneval} for prompt alignment. For video generation, we report VBench~\citep{vbench} scores across its standard evaluation dimensions. We report total FLOPs integrated over the full denoising trajectory using the same convention as RALU and end-to-end normalized wall-clock speedup. We use $\delta=0.01$ and Discrete Cosine Transform (DCT) as our spectral transformation $T_\Phi$ across the experiments and include ablation studies on $\delta$, $S$, and $T_\Phi$ in Sec.~\ref{sec:exp:ablations}. We include additional implementation and experimental details in Appendix Secs.~\ref{sec:image_generation} and~\ref{sec:video_generation}.

\vspace{-5pt}
\subsection{Latent- and Pixel-Space Image Generation}
\vspace{-5pt}
\label{sec:exp:image}



\paragraph{Training-free acceleration.} In Table~\ref{tab:flux-main} and Fig.~\ref{fig:qual}, we compare our training-free acceleration approach (Sec.~\ref{sec:method:noise-injection}) on FLUX.1-dev with training-free acceleration baselines that implement progressive resolution growing, including Bottleneck Sampling~\citep{bottleneck} and RALU~\citep{ralu}. Additional FLUX.1-dev comparisons are included in Appendix Sec.~\ref{subsec:image_latent}. Our approach consistently improves the speed-quality tradeoff over other methods across all three evaluated speedup regimes. Increasing $S=2$ to $S=3$ further improves efficiency, offering up to $7.36 \times$ FLOPs speedup and $7.09 \times$ wall clock speedup while maintaining high generation quality.




\vspace{-5pt}
\paragraph{Spectral-transformation-based fine-tuning.} In Table~\ref{tab:zimage-main}, we demonstrate that our spectral-transformation-based fine-tuning method (Sec.~\ref{sec:method:lora}) further improves image quality and efficiency compared to training-free acceleration. Since FLUX.1-dev is guidance-distilled and its base model is not open-sourced, we select Z-Image \citep{zimage} as the base model for our latent-space image fine-tuning experiments. All models are fine-tuned with LoRA~\citep{hu2021lora} using rank 32 for 2000 iterations in both latent-space and pixel-space image generation experiments. Across both latent- and pixel-space image generation, our fine-tuning framework narrows the gap between training and inference, further improving generation quality while maintaining efficiency gains. 
We include additional implementation details and results in Appendix Sec.~\ref{subsec:image_pixel}.

\vspace{-5pt}
\subsection{Latent-space Video Generation}
\label{sec:exp:video}

We further demonstrate the flexibility and robustness of our training-free acceleration approach on latent-space video generation (Table \ref{tab:wan_vbench_720p}, Fig. \ref{fig:qual}). To our knowledge, there are no diffusion acceleration approaches that were validated on all three generation modalities presented in Secs.~\ref{sec:exp:image}--\ref{sec:exp:video}. We observe trends consistent with image generation, where our method achieves significant speedups while preserving generation quality. We include additional implementation details and results in Appendix Sec.~\ref{sec:video_generation}.

\vspace{-5pt}
\subsection{Ablation Studies}
\label{sec:exp:ablations}

We ablate our core design choices, including error threshold $\delta$, resolution stages $S$, and spectral transformation $T_\Phi$, on latent-space image generation using FLUX.1-dev. We show the speed-quality Pareto frontier in Fig. \ref{fig:ablation}(a) for $\delta \in [0.001, 0.1]$, $S \in \{2, 3, 4\}$, and different spectral transformations $T_\Phi$. Consistent with our theory, increasing $\delta$ or $S$ further accelerates generation at the cost of gradual image quality loss. The Discrete Cosine Transform (DCT), Discrete Wavelet Transform (DWT), and Fourier Transform (FFT) achieve similar high-fidelity performance. Qualitative comparisons are given in Appendix Sec.~\ref{sec:ablation}.

\begin{figure}[t]
  \centering
  \includegraphics[width=\linewidth]{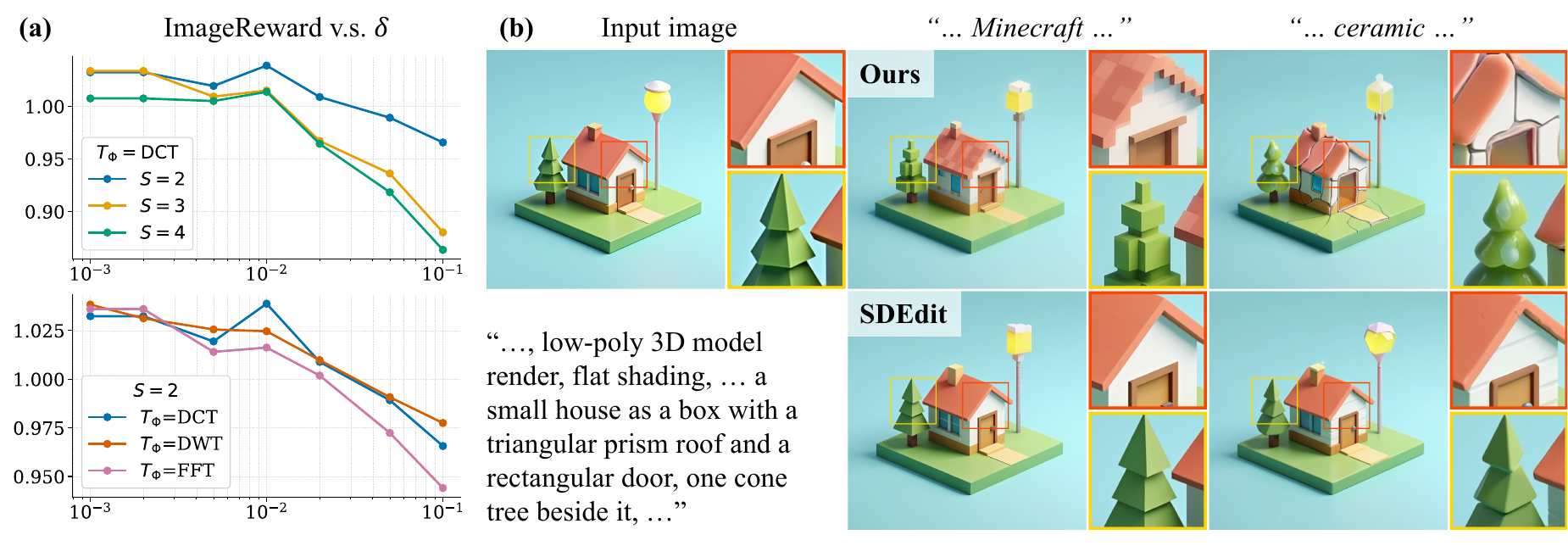}
  \caption{\textbf{(a): Ablation Studies on $\delta, S$ and $T_\Phi$.} We observe a clear tradeoff between image quality and efficiency when varying $\delta$ and $S$ as shown in the top plot. As shown in the bottom plot, all three transforms (DCT, DWT, FFT) achieve comparable quality and follow the same quality--efficiency tradeoff as $\delta$ increases. \textbf{(b): Frequency-based Image Editing.} Our method demonstrates superior prompt alignment and geometric consistency compared to standard SDEdit-style spatial-domain editing under identical step counts. Our method successfully edits the texture and style of the input image, whereas SDEdit fails to do so.}
  \label{fig:ablation}
\end{figure}

\subsection{Frequency-based Image Editing}
\label{sec:exp:applications}
We demonstrate a unique \textit{frequency-based} image editing capability enabled by our method. Specifically, given an input $\mathbf{x}_\text{in}$ to edit, we add noise to the low-frequency part of $T_\Phi(\mathbf{x}_\text{in})$, perform spectral noise expansion and timestep alignment (Sec.~\ref{sec:method:noise-injection}), and then resume denoising from the timestep $t$ corresponding to the resolution schedule with the editing prompt. Fig.~\ref{fig:ablation}(b) shows our method achieves substantially improved prompt alignment and geometric consistency compared to a spatial-domain SDEdit-style baseline \cite{meng2022sdedit}. Additional details and editing results are in Appendix Sec.~\ref{sec:image_editing}.

\section{Discussion}\vspace{-5pt}
\label{sec:conclusion}
In this work, we introduce Spectral Progressive Diffusion, a generation framework that leverages spectral autoregression in diffusion models to match computation to frequencies where signal emerges from noise. 
Through a principled spectral noise expansion mechanism and an optimal resolution schedule derived from the model’s power spectrum, we enable progressive resolution growth along the denoising trajectory of pre-trained image and video models, achieving notable speedups. While our approach applies directly in a training-free setting, it implicitly assumes that the underlying model can generalize across varying resolutions. Though modern DiT models are trained on multiple resolutions, the continuous schedules we employ may include intermediate resolutions not seen during training. In practice, using a small number of transitions near the training distribution suffices to achieve substantial efficiency gains in both image and video generation. Our fine-tuning strategy aligns the model with the progressive-resolution trajectory, reducing the training--inference gap and improving quality, suggesting the use of our method during pretraining as a promising direction. While we focus on spatial spectra, extending the framework to temporal frequencies is a natural direction for accelerating video generation. Overall, our approach provides a principled and complementary approach for improving generation efficiency via the spectral autoregressive structure of diffusion models while maintaining high generation quality.

\section*{Acknowledgments}
Howard Xiao and Brian Chao are supported by Stanford Graduate Fellowships (SGF). Brian Chao is also supported by the NSF Graduate Research Fellowship Program (GRFP). We thank Google and Toyota Research Institute (TRI). We thank Shengqu Cai, Hansheng Chen, Kiyohiro Nakayama, and Zichun Xu for fruitful discussions. Compute resources were provided by the Marlowe cluster at Stanford University~\cite{marlowe2025}.

\clearpage   
{
\small
\bibliographystyle{plainnat}
\bibliography{references_arxiv}
}

\makeatletter
\let\addcontentsline\arxiv@savedaddcontentsline
\makeatother

\newpage
\setcounter{tocdepth}{2}
\renewcommand{\contentsname}{Appendix Contents}
\tableofcontents
\newpage
\appendix

\section{Supporting Definitions and Proofs for Section~\ref{sec:method:schedule:signal}}
\label{app:proofs-schedule}

This appendix provides the justification of timestep alignment in Spectral Progressive Diffusion, the deferred proofs of Propositions~1 and~2 from Sec.~\ref{sec:method:schedule:signal}, along with supporting definitions.

\subsection{Definitions}
\label{app:prelim-schedule}

\paragraph{Per-frequency SNR.}
Following~\citet{kingma2021vdm}, we define the \emph{per-frequency signal-to-noise ratio (SNR)} as the ratio of the clean- and noise-component second moments of the per-frequency forward process in Eq.~\eqref{eq:forward-mode}:
\begin{equation}
  \mathrm{SNR}_\omega(t)
  \;:=\;
  \frac{\mathbb{E}[\,|(1-t)\,x_0^{(\omega)}|^2\,]}{\mathbb{E}[\,|t\,\epsilon^{(\omega)}|^2\,]}
  \;=\;
  \underbrace{\frac{(1-t)^2}{t^2}}_{\mathrm{SNR}(t)}\;\cdot\;P_\omega,
  \label{eq:snr}
\end{equation}
which factorizes into a frequency-independent timestep term $\mathrm{SNR}(t) = (1-t)^2/t^2$ and the frequency-specific signal power $P_\omega$.

\paragraph{Resolution grids and representable frequencies.}
By the Nyquist--Shannon sampling theorem~\citep{nyquist1928certain,shannon1949communication}, a spatial grid of size $H \times W$ can represent 2D frequencies in the rectangle $|\omega_x| \le W/2$, $|\omega_y| \le H/2$. We define the maximum (fully) representable radial frequency as
\begin{equation}
  \omega_{\max}(H, W) \;=\; \frac{\min(H, W)}{2}.
  \label{eq:nyquist-cap}
\end{equation}
Note that our definition of the maximum (fully) representable frequency above is conservative, as it captures only the maximum radial frequency fully supported on the spatial grid $H \times W$.
A stage operating at reduced resolution $(sH, sW)$ for $s \in (0,1]$ can represent only a subset of the representable frequencies at full-resolution:
\begin{equation}
  \Omega_s \;:=\; \{\,\omega \in \Omega : |\omega| \le s \cdot \omega_{\max}(H, W)\,\} \;\subseteq\; \Omega.
  \label{eq:nyquist-set}
\end{equation}

\subsection{Justification of Timestep Alignment}
\label{app:time-align}

We verify that applying the scalar rescaling in Eq.~\eqref{eq:scale-align} and querying the pretrained model at the aligned timestep $\tilde{t}_i$ in Eq.~\eqref{eq:time-align} maps the spectral noise expansion output $\mathbf{x}^{s_{i+1}}_{t_i}$ to a valid flow-matching state $\tilde{\mathbf{x}}^{s_{i+1}}_{\tilde{t}_i}$ at the new resolution scale $s_{i+1} > s_i$. Throughout this subsection we write $r := s_{i+1}/s_i > 1$ for the resolution ratio; the final formulas are stated in both the $r$ and $s_{i+1}/s_i$ forms to tie directly back to Eqs.~\eqref{eq:scale-align}--\eqref{eq:time-align} in Sec.~\ref{sec:method:noise-injection}. The derivations below assume the ideal setting in which the network outputs match the ground-truth flow targets.

\paragraph{1. Setup.}
The spectral noise expansion produces an expanded state $\mathbf{x}^{s_{i+1}}_{t_i}$ in the frequency domain of the expanded resolution grid. For a grid with $N_{s_{i+1}}$ points, the coefficients are:
$$
\left(x^{s_{i+1}}_{t_i}\right)^{(\omega)} =
\begin{cases}
(1-t_i)\left(x_{0}^{s_i}\right)^{(\omega)} + t_i\epsilon^{(\omega)}, & \omega \in \Omega_{s_i}, \\[2pt]
t_i\epsilon'^{(\omega)}, & \omega \in \Omega_{s_{i+1}} \setminus \Omega_{s_i},
\end{cases}
$$
where $\epsilon^{(\omega)}, \epsilon'^{(\omega)} \sim \mathcal{N}(0, 1)$ are independent standard Gaussian variables. Under the orthonormal identification of spectra across resolutions, the clean-signal coefficient at a shared frequency $\omega \in \Omega_{s_i}$ relates to its scale-$s_{i+1}$ counterpart by the resolution ratio $r$:
$$
\left(x_{0}^{s_{i+1}}\right)^{(\omega)} = r \cdot \left(x_{0}^{s_i}\right)^{(\omega)} = (s_{i+1}/s_i) \cdot \left(x_{0}^{s_i}\right)^{(\omega)}, \quad \omega \in \Omega_{s_i}.
$$
For the newly exposed high-frequency slots $\omega \in \Omega_{s_{i+1}} \setminus \Omega_{s_i}$, the clean coefficients are treated as noise-dominated at timestep $t_i$ per Proposition~1, so spectral noise expansion initializes them with the correct noise level.

\paragraph{2. Timestep alignment.}
We seek a scaling factor $\kappa_i$ and an aligned timestep $\tilde{t}_i \in (0, 1)$ such that, for every $\omega \in \Omega_{s_{i+1}}$, the transformed state satisfies:
$$
\left(\tilde{x}^{s_{i+1}}_{\tilde{t}_i}\right)^{(\omega)} = \kappa_i\left(x^{s_{i+1}}_{t_i}\right)^{(\omega)} = (1-\tilde{t}_i)\left(x_{0}^{s_{i+1}}\right)^{(\omega)} + \tilde{t}_i\tilde\epsilon^{(\omega)}, \quad \tilde\epsilon^{(\omega)} \sim \mathcal{N}(0,1) \text{ i.i.d.}
$$
By matching the noise coefficients across all bands and the signal coefficients on the shared band $\Omega_{s_i}$, we derive two scalar conditions:
$$
\kappa_i \cdot t_i = \tilde{t}_i, \quad \kappa_i(1-t_i) = r(1-\tilde{t}_i).
$$
Substituting $\kappa_i = \tilde{t}_i/t_i$ from the noise match into the signal match equation gives:
$$
\frac{\tilde{t}_i}{t_i}(1-t_i) = r(1-\tilde{t}_i)
$$
Solving for $\tilde{t}_i$ and re-expressing the result in $s_{i+1}/s_i$ notation gives:
$$
\tilde{t}_i = \frac{r \cdot t_i}{1 + (r - 1)t_i} = \frac{(s_{i+1}/s_i) \cdot t_i}{1 + ((s_{i+1}/s_i) - 1)t_i}.
$$
Consequently, the scaling factor $\kappa_i$ is:
$$
\kappa_i = \frac{r}{1 + (r - 1)t_i} = \frac{s_{i+1}/s_i}{1 + ((s_{i+1}/s_i) - 1)t_i}.
$$
Substituting these back into the setup confirms that $\left(\tilde{x}^{s_{i+1}}_{\tilde{t}_i}\right)^{(\omega)} = (1-\tilde{t}_i)\left(x_{0}^{s_{i+1}}\right)^{(\omega)} + \tilde{t}_i\tilde\epsilon^{(\omega)}$ holds for all $\omega \in \Omega_{s_{i+1}}$. This proves that the resulting scaled and timestep-aligned state $\tilde{\mathbf{x}}^{s_{i+1}}_{\tilde{t}_i}$ is a valid flow-matching state at resolution $s_{i+1}$ and timestep $\tilde{t}_i$. $\square$

\subsection{Proof of Proposition 1: Per-frequency $\delta$-optimal Activation Time}
\label{app:proof-prop1}

We first establish a lemma giving the Bayes-optimal velocity error in closed form under a simplified modelling assumption $x_0^{(\omega)} \sim \mathcal{N}(0, P_\omega)$. In practice, we measure the power spectrum on centered (i.e. zero-mean) inputs $\mathbf{x}_0$ to get the power spectrum $P_\omega$, and this modelling assumption serves as a second moment approximation that allows the analytical derivation of our $\delta$-optimal resolution schedule. The proof of Proposition~1 then follows by solving the $\delta$-bound on the Bayes-optimal velocity error for $t$ to obtain the activation time $t_\omega$ in Eq.~\eqref{eq:activation}.

\begin{lemma}[Bayes-optimal velocity error] Let $v^{*(\omega)}$ be the per-frequency component of the Bayes-optimal velocity predictor $\mathbf{v}^{*}$ defined in Sec.~\ref{sec:method:prelim:basics}:

\begin{equation}
  \mathbf{v}^*(\mathbf{x}_t, t) \;=\; \mathbb{E}\!\left[\,\boldsymbol{\epsilon} - \mathbf{x}_0 \,\big|\, \mathbf{x}_t\,\right].
  \label{eq:optimal-v}
\end{equation}
Under the modelling assumption $x_0^{(\omega)} \sim \mathcal{N}(0, P_\omega)$, $v^{*(\omega)}$ deviates from the noise-only prediction $\epsilon^{(\omega)}$ in expectation by
\begin{equation}
  \mathbb{E}\!\left[\bigl|v^{*(\omega)}(\mathbf{x}_t, t) - \epsilon^{(\omega)}\bigr|^2\right]
  \;=\; \frac{\mathrm{SNR}_\omega(t)\,(1 + P_\omega)}{1 + \mathrm{SNR}_\omega(t)}.
  \label{eq:velocity-error-optimal}
\end{equation}
\end{lemma}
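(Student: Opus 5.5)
The plan is to reduce everything to a scalar jointly Gaussian computation for a single frequency $\omega$. I will first justify that $v^{*(\omega)}(\mathbf{x}_t,t)$ depends only on the coefficient $x_t^{(\omega)}$.

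Because $\Phi$ is orthonormal, $v^{*(\omega)} = \langle \mathbf{v}^*, \phi_\omega\rangle = \mathbb{E}[\epsilon^{(\omega)} - x_0^{(\omega)} \mid \mathbf{x}_t]$, and $\mathbf{x}_t$ is equivalent to the collection $\{x_t^{(\omega')}\}_{\omega'}$. I will interpret the modelling assumption as saying that the spectral coefficients $x_0^{(\omega)}$ are independent across $\omega$, i.e. a Gaussian with diagonal covariance in the basis $\Phi$. Together with the i.i.d.\ noise coefficients $\epsilon^{(\omega)}$ from Eq.~\eqref{eq:forward-mode}, this makes the pairs $(x_0^{(\omega)},\epsilon^{(\omega)})$ independent across frequencies. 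Hence conditioning on $\mathbf{x}_t$ collapses to conditioning on $x_t^{(\omega)}$ alone. This independence step is the only conceptual obstacle: the lemma is false without it, so I will state it explicitly as part of the modelling assumption.

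Next I compute the conditional expectation in closed form. Write $a = 1-t$, $b = t$, $P = P_\omega$ and $x = x_t^{(\omega)} = a x_0^{(\omega)} + b\epsilon^{(\omega)}$. Then $(x_0^{(\omega)},\epsilon^{(\omega)},x)$ is zero-mean jointly Gaussian with
\[
D := \mathrm{Var}(x) = a^2P + b^2, \qquad \mathrm{Cov}(x_0^{(\omega)},x) = aP, \qquad \mathrm{Cov}(\epsilon^{(\omega)},x) = b .
\]
Gaussian conditioning gives $\mathbb{E}[x_0^{(\omega)}\mid x] = aPx/D$ and $\mathbb{E}[\epsilon^{(\omega)}\mid x] = bx/D$. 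Therefore $v^{*(\omega)} = c\,x$ with $c = (b - aP)/D$.

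The error is then
\[
\mathbb{E}\bigl[|c x - \epsilon^{(\omega)}|^2\bigr] = c^2 D - 2cb + 1 .
\]
Substituting $c$ and expanding gives
\[
\frac{(b-aP)^2 - 2b(b-aP) + D}{D} .
\]
In the numerator the $b^2$ terms cancel and the $abP$ cross terms cancel, leaving $a^2P^2 + a^2P = a^2P(1+P)$.

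Finally, I divide numerator and denominator by $b^2$. Using Eq.~\eqref{eq:snr}, $\mathrm{SNR}_\omega(t) = a^2P/b^2$, so the error becomes
\[
\frac{\mathrm{SNR}_\omega(t)\,(1+P_\omega)}{1+\mathrm{SNR}_\omega(t)},
\]
which is Eq.~\eqref{eq:velocity-error-optimal}. The algebra is routine; the only care needed is the cancellation in the numerator.
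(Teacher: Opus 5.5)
Your proposal is correct and follows essentially the same route as the paper's proof: you make explicit the diagonal-covariance assumption that collapses conditioning on $\mathbf{x}_t$ to conditioning on $x_t^{(\omega)}$, use Gaussian conditioning to get the linear predictor, and expand $c^2D - 2cb + 1$ to $a^2P(1+P)/D$. The last step, rewriting this via $\mathrm{SNR}_\omega(t)$, matches the paper's steps 3--5 exactly.
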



\begin{proof}
\leavevmode

\paragraph{1. Assumptions.} The proof assumes a linear-Gaussian model, where:
\begin{enumerate}
  \item[(A1)] we model the clean data spectrum as zero-mean Gaussian $x_0^{(\omega)} \sim \mathcal{N}(0,\, P_\omega)$ with second moment $P_\omega > 0$;
  \item[(A2)] we assume $\epsilon^{(\omega)} \sim \mathcal{N}(0, 1)$, by orthonormality of $\Phi$ applied to $\boldsymbol{\epsilon} \sim \mathcal{N}(0, I)$;
  \item[(A3)] we assume $x_0^{(\omega)} \perp \epsilon^{(\omega)}$, i.e. the signal and noise coefficients are independent.
\end{enumerate}
(A1) is the simplified modelling assumption; (A2) and (A3) follow directly from the flow-matching forward process. Under (A1), the triple $(x_0^{(\omega)},\, \epsilon^{(\omega)},\, x_t^{(\omega)})$ is jointly Gaussian, so conditional expectations are linear and closed-form.

\paragraph{2. Second-moment quantities.} From the per-frequency forward process in Eq.~\eqref{eq:forward-mode} and (A1)--(A3),
\begin{align}
  \mathbb{E}[x_t^{(\omega)}]                     &= 0, \qquad
  \mathrm{Var}(x_t^{(\omega)}) \;=\; (1-t)^2 P_\omega + t^2,  \\
  \mathrm{Cov}(x_t^{(\omega)},\, x_0^{(\omega)})      &= (1-t)\,P_\omega, \qquad
  \mathrm{Cov}(x_t^{(\omega)},\, \epsilon^{(\omega)}) \;=\; t.
\end{align}

\paragraph{3. Bayes-optimal velocity as a linear predictor.}
Under (A1)--(A3), the triplet $(x_0^{(\omega)},\, \epsilon^{(\omega)},\, x_t^{(\omega)})$ is jointly Gaussian with zero mean, since $x_t^{(\omega)} = (1-t) x_0^{(\omega)} + t\,\epsilon^{(\omega)}$ is a linear combination of two independent zero-mean Gaussians. For any pair of zero-mean jointly Gaussian random variables $(X, Y)$ with $\mathrm{Var}(Y) > 0$, the conditional expectation is a linear function of $Y$:
\begin{equation}
  \mathbb{E}[X \,|\, Y] \;=\; \frac{\mathrm{Cov}(X, Y)}{\mathrm{Var}(Y)}\,Y.
  \label{eq:gauss-cond}
\end{equation}

We apply \eqref{eq:gauss-cond} twice, using the second moments computed above. First, with $X = x_0^{(\omega)}$ and $Y = x_t^{(\omega)}$,
\begin{equation}
  \mathbb{E}\bigl[x_0^{(\omega)} \,\big|\, x_t^{(\omega)}\bigr]
  \;=\; \frac{(1-t)\,P_\omega}{(1-t)^2 P_\omega + t^2}\, x_t^{(\omega)}.
  \label{eq:cond-x0}
\end{equation}
Second, with $X = \epsilon^{(\omega)}$ and $Y = x_t^{(\omega)}$,
\begin{equation}
  \mathbb{E}\bigl[\epsilon^{(\omega)} \,\big|\, x_t^{(\omega)}\bigr]
  \;=\; \frac{t}{(1-t)^2 P_\omega + t^2}\, x_t^{(\omega)}.
  \label{eq:cond-eps}
\end{equation}

The Bayes-optimal velocity at frequency $\omega$ is $v^{*(\omega)} = \mathbb{E}\bigl[\epsilon^{(\omega)} - x_0^{(\omega)} \,\big|\, \mathbf{x}_t\bigr]$, the $\phi_\omega$-projection of the full-input optimal velocity in Eq.~\eqref{eq:optimal-v}. Under the modelling assumption that $\mathbf{x}_0$ has diagonal covariance in basis $\Phi$ (i.e., (A1) holds independently for every $\omega$), the joint Gaussian $(\mathbf{x}_0, \boldsymbol{\epsilon}, \mathbf{x}_t)$ factorizes across $\Omega$, so the posterior collapses to its single-coefficient marginal:
\begin{equation}
  \mathbb{E}\bigl[\epsilon^{(\omega)} - x_0^{(\omega)} \,\big|\, \mathbf{x}_t\bigr]
  \;=\; \mathbb{E}\bigl[\epsilon^{(\omega)} - x_0^{(\omega)} \,\big|\, x_t^{(\omega)}\bigr].
  \label{eq:full-to-scalar}
\end{equation}
Subtracting \eqref{eq:cond-x0} from \eqref{eq:cond-eps} and combining with \eqref{eq:full-to-scalar},
\begin{equation}
  v^{*(\omega)} \;=\; A \cdot x_t^{(\omega)},
  \qquad
  A \;:=\; \frac{t - (1-t)\,P_\omega}{(1-t)^2 P_\omega + t^2}.
  \label{eq:vstar-linear}
\end{equation}

\paragraph{4. Substitute and simplify.} Plugging $x_t^{(\omega)} = (1-t)\, x_0^{(\omega)} + t\,\epsilon^{(\omega)}$ into \eqref{eq:vstar-linear} and collecting terms in $x_0^{(\omega)}$ and $\epsilon^{(\omega)}$,
\begin{equation}
  v^{*(\omega)} - \epsilon^{(\omega)}
  \;=\; A\bigl[(1-t)\,x_0^{(\omega)} + t\,\epsilon^{(\omega)}\bigr] - \epsilon^{(\omega)}
  \;=\; A(1-t)\,x_0^{(\omega)} \,+\, (At - 1)\,\epsilon^{(\omega)}.
\end{equation}
Squaring and taking expectation, using (A1)--(A3) (so $\mathbb{E}[(x_0^{(\omega)})^2] = P_\omega$, $\mathbb{E}[(\epsilon^{(\omega)})^2] = 1$, and $\mathbb{E}[x_0^{(\omega)}\,\epsilon^{(\omega)}] = 0$), the cross term vanishes:
\begin{equation}
  \mathbb{E}\!\left[\bigl|v^{*(\omega)} - \epsilon^{(\omega)}\bigr|^2\right]
  \;=\; A^2 (1-t)^2\,P_\omega \,+\, (At - 1)^2.
\end{equation}
Expanding $(At-1)^2 = A^2 t^2 - 2At + 1$ and grouping the $A^2$ terms,
\begin{equation}
  \mathbb{E}\!\left[\bigl|v^{*(\omega)} - \epsilon^{(\omega)}\bigr|^2\right]
  \;=\; A^2\!\left[(1-t)^2 P_\omega + t^2\right] - 2At + 1.
  \label{eq:verror-expanded}
\end{equation}
Denote $D := (1-t)^2 P_\omega + t^2 = \mathrm{Var}(x_t^{(\omega)})$; from Eq.~\eqref{eq:vstar-linear},
\begin{equation}
  A \;=\; \frac{t - (1-t)\, P_\omega}{D}.
  \label{eq:A-in-D}
\end{equation}
Substituting into Eq.~\eqref{eq:verror-expanded} term by term,
\begin{align}
  A^2\!\left[(1-t)^2 P_\omega + t^2\right] \;&=\; A^2 D \;=\; \frac{(t - (1-t)\, P_\omega)^2}{D}, \\
  2At \;&=\; \frac{2t\,(t - (1-t)\, P_\omega)}{D},
\end{align}
so the right-hand side of Eq.~\eqref{eq:verror-expanded} combines into a single fraction over $D$:
\begin{equation}
  \mathbb{E}\!\left[\bigl|v^{*(\omega)} - \epsilon^{(\omega)}\bigr|^2\right]
  \;=\; \frac{(t - (1-t)\, P_\omega)^2 \,-\, 2t\,(t - (1-t)\, P_\omega) \,+\, D}{D}.
  \label{eq:verror-singlefrac}
\end{equation}
Expanding $(t - (1-t)\, P_\omega)^2 = t^2 - 2t(1-t) P_\omega + (1-t)^2 P_\omega^2$, the numerator of Eq.~\eqref{eq:verror-singlefrac} simplifies as
\begin{align}
  & \bigl[t^2 - 2t(1-t) P_\omega + (1-t)^2 P_\omega^2\bigr]
    - \bigl[2t^2 - 2t(1-t) P_\omega\bigr]
    + \bigl[(1-t)^2 P_\omega + t^2\bigr] \nonumber\\
  ={}& (1-t)^2 P_\omega^2 + (1-t)^2 P_\omega
    \;=\; (1-t)^2\,P_\omega\,(1 + P_\omega).
\end{align}
Substituting back into Eq.~\eqref{eq:verror-singlefrac},
\begin{equation}
  \mathbb{E}\!\left[\bigl|v^{*(\omega)} - \epsilon^{(\omega)}\bigr|^2\right]
  \;=\; \frac{(1-t)^2\,P_\omega\,(1 + P_\omega)}{(1-t)^2 P_\omega + t^2}.
\end{equation}

\paragraph{5. Rewrite via $\mathrm{SNR}_\omega(t)$.} Using $\mathrm{SNR}_\omega(t) = (1-t)^2 P_\omega / t^2$ from Eq.~\eqref{eq:snr}, the denominator is $(1-t)^2 P_\omega + t^2 = t^2(1 + \mathrm{SNR}_\omega(t))$ and the numerator equals $t^2\,\mathrm{SNR}_\omega(t)\,(1 + P_\omega)$. The $t^2$ cancels, recovering the right-hand side of Eq.~\eqref{eq:velocity-error-optimal} and completing the proof of the lemma.

\end{proof}

\begin{proposition}[Per-frequency $\delta$-optimal activation time] Denote the per-frequency Bayes-optimal velocity predictor by $v^{*(\omega)}(\mathbf{x}_t, t)$, whose spatial-domain counterpart is defined in Eq.~\eqref{eq:optimal-v}. Under the simplified modelling assumption $x_0^{(\omega)} \sim \mathcal{N}(0, P_\omega)$ with $P_\omega > 0$, we have
\begin{equation}
  \mathbb{E}\!\left[\bigl|v^{*(\omega)}(\mathbf{x}_t, t) - \epsilon^{(\omega)}\bigr|^2\right] \;\le\; \delta
  \label{eq:noise-dominated-appendix}
\end{equation}
for all $t \ge t_\omega$ and $\delta\in(0, 1)$, where
\begin{equation}
  t_\omega \;:=\; \frac{1}{1 + \sqrt{\dfrac{\delta}{P_\omega\,(1 + P_\omega - \delta)}}}.
  \label{eq:activation-appendix}
\end{equation}
\end{proposition}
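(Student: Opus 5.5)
The plan is to reduce the statement to the closed form in the Lemma (Bayes-optimal velocity error). From that lemma, the error at time $t$ is
\begin{equation*}
  E_\omega(t) := \mathbb{E}\!\left[\bigl|v^{*(\omega)}(\mathbf{x}_t, t) - \epsilon^{(\omega)}\bigr|^2\right] = \frac{\mathrm{SNR}_\omega(t)\,(1+P_\omega)}{1+\mathrm{SNR}_\omega(t)},
\end{equation*}
where $\mathrm{SNR}_\omega(t) = (1-t)^2 P_\omega/t^2$ as in Eq.~\eqref{eq:snr}.

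\textbf{Step 1: monotonicity.} The map $u \mapsto u(1+P_\omega)/(1+u)$ is strictly increasing on $u \ge 0$. For $t \in (0,1]$, the factor $(1-t)/t$ is strictly decreasing, and hence so is $\mathrm{SNR}_\omega(t)$. It follows that $E_\omega(t)$ is decreasing in $t$. It therefore suffices to show $E_\omega(t_\omega) \le \delta$; in fact I will show equality.

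\textbf{Step 2: solve the threshold equation.} Set $E_\omega = \delta$ and solve for the SNR:
\begin{equation*}
  u(1+P_\omega) = \delta(1+u) \quad\Longrightarrow\quad u = \frac{\delta}{1+P_\omega-\delta}.
\end{equation*}
The condition $\delta\in(0,1)$ together with $P_\omega>0$ gives $1+P_\omega-\delta>0$, so this $u$ is positive and well defined. Next, solve $(1-t)^2 P_\omega/t^2 = u$ on $t\in(0,1)$, taking the positive root:
\begin{equation*}
  \frac{1-t}{t} = \sqrt{\frac{u}{P_\omega}} = \sqrt{\frac{\delta}{P_\omega(1+P_\omega-\delta)}},
\end{equation*}
which rearranges to $t = 1/\bigl(1+\sqrt{\delta/(P_\omega(1+P_\omega-\delta))}\bigr) = t_\omega$, exactly Eq.~\eqref{eq:activation-appendix}.

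\textbf{Step 3: conclude.} For $t \ge t_\omega$, Step 1 gives $E_\omega(t) \le E_\omega(t_\omega) = \delta$. The endpoint $t=1$ is harmless: there the SNR and the error are both $0$.

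There is no real obstacle here. The only care needed is tracking the sign and positivity conditions, namely $1+P_\omega-\delta>0$, the choice of the positive square root, and the direction of the monotonicity. Since $\delta<1<1+P_\omega$, the threshold is attainable, so $t_\omega \in (0,1)$ and the bound is meaningful.
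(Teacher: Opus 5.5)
Your proposal is correct and is essentially the paper's argument: both substitute the Lemma's closed form into the $\delta$-bound, reduce it to $\mathrm{SNR}_\omega(t) \le \delta/(1+P_\omega-\delta)$ using $1+P_\omega-\delta>0$, and take the positive square root to recover $t_\omega$. The only difference is packaging. You phrase it as monotonicity of $E_\omega(t)$ in $t$ plus equality at $t = t_\omega$, which makes the required implication $t \ge t_\omega \Rightarrow E_\omega(t) \le \delta$ explicit, whereas the paper writes the chain in the direction $E_\omega(t) \le \delta \Rightarrow t \ge t_\omega$ and leaves the reversibility of each step implicit.
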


\begin{proof}
\leavevmode

\paragraph{Solving the $\delta$-bound for $t$.} Substituting the closed-form error into the tolerance bound $\mathbb{E}[|v^{*(\omega)} - \epsilon^{(\omega)}|^2] \le \delta$ of Proposition~1 yields
\begin{equation}
  \frac{\mathrm{SNR}_\omega(t)\,(1 + P_\omega)}{1 + \mathrm{SNR}_\omega(t)} \;\le\; \delta.
\end{equation}
Clearing the (positive) denominator and collecting terms in $\mathrm{SNR}_\omega(t)$,
\begin{equation}
  \mathrm{SNR}_\omega(t)\,(1 + P_\omega - \delta) \;\le\; \delta,
  \qquad \text{i.e.,} \qquad
  \mathrm{SNR}_\omega(t) \;\le\; \frac{\delta}{1 + P_\omega - \delta},
\end{equation}
where the rearrangement is valid because $\delta < 1 \le 1 + P_\omega$ keeps $1 + P_\omega - \delta > 0$. Substituting $\mathrm{SNR}_\omega(t) = (1-t)^2 P_\omega / t^2$ and taking the positive square root (all quantities positive for $t \in (0, 1)$),
\begin{equation}
  \frac{1-t}{t} \;\le\; \sqrt{\frac{\delta}{P_\omega\,(1 + P_\omega - \delta)}}.
\end{equation}
Adding $1$ to both sides gives $1/t \le 1 + \sqrt{\delta / (P_\omega(1 + P_\omega - \delta))}$, which inverts to
\begin{equation}
  t \;\ge\; \frac{1}{1 + \sqrt{\dfrac{\delta}{P_\omega\,(1 + P_\omega - \delta)}}} \;=\; t_\omega,
\end{equation}
recovering Eq.~\eqref{eq:activation-appendix}.
\end{proof}

\subsection{Proof of Proposition 2: Per-resolution $\delta$-optimal Transition Time}
\label{app:proof-prop2}

\begin{proposition}[Per-resolution $\delta$-optimal transition time] Under the setting of Proposition~1, and assuming $P_\omega$ is monotonically decreasing in $|\omega|$ based on the power-law decay in Eq.~\eqref{eq:power-law}, for any resolution scale $s_i, s_{i+1} \in (0, 1]$, the optimal transition time from scale $s_i$ up to $s_{i+1}$ is
\begin{equation}
  t_i^* \;:=\; \min_{\omega \in \Omega_{s_i}}\, t_\omega \;=\; t_{\omega=s_i \cdot \omega_{\max}(H,W)},
  \label{eq:stage-transition-appendix}
\end{equation}
where $\Omega_{s_i}$ is the set of frequencies representable on the $(s_iH, s_iW)$ grid.
\end{proposition}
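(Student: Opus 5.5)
The plan is to reduce the statement to a monotonicity property of the activation time $t_\omega$ from Proposition~1, viewed as a function of the signal power $P_\omega$. Then we compose it with the assumed monotone decay of $P_\omega$ in $|\omega|$. The minimum over $\Omega_{s_i}$ should then sit at the largest radial frequency in that set. By the definition in Eq.~\eqref{eq:nyquist-set}, that frequency is $|\omega| = s_i\,\omega_{\max}(H,W)$.

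\textbf{Step 1 (monotonicity in $P$).} Define $g(P) := 1/\bigl(1+\sqrt{\delta/(P(1+P-\delta))}\bigr)$ for $P>0$ and fixed $\delta\in(0,1)$, so that $t_\omega = g(P_\omega)$ by Eq.~\eqref{eq:activation-appendix}.
\begin{itemize}
\item The map $P\mapsto P(1+P-\delta)$ has derivative $1+2P-\delta>0$ because $\delta<1$. It is also positive on $P>0$.
\item Hence $\sqrt{\delta/(P(1+P-\delta))}$ is strictly decreasing in $P$.
\item Therefore $g$ is strictly increasing in $P$.
\end{itemize}
Intuitively, stronger signal becomes non-noise-dominated earlier in the denoising trajectory, i.e.\ at a larger $t$.

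\textbf{Step 2 (composition with spectral decay).} Since $P_\omega$ is monotonically decreasing in $|\omega|$, the map $\omega\mapsto t_\omega = g(P_\omega)$ is monotonically non-increasing in $|\omega|$. So on $\Omega_{s_i}=\{\omega: |\omega|\le s_i\,\omega_{\max}(H,W)\}$, the minimum of $t_\omega$ is attained at the boundary radius $|\omega| = s_i\,\omega_{\max}(H,W)$. This gives the equality in Eq.~\eqref{eq:stage-transition-appendix}.

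Here we read $t_{\omega=s_i\omega_{\max}}$ as $g$ evaluated at the power of that radial frequency. This relies on the implicit radial modelling $P_\omega = P(|\omega|)$, as in Eq.~\eqref{eq:power-law}. If $P_\omega$ depends only on $|\omega|$, then ties within a radius are harmless.

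\textbf{Step 3 (why this time is optimal).} I would argue two sides, both resting only on Proposition~1.
\begin{itemize}
\item \emph{Staying at scale $s_i$ is safe down to $t_i^*$.} Every discarded frequency $\omega\notin\Omega_{s_i}$ has $|\omega|> s_i\omega_{\max}$, hence $t_\omega\le t_i^*$ by Step~2. So for all $t\ge t_i^*$, Proposition~1 gives $\mathbb{E}|v^{*(\omega)}-\epsilon^{(\omega)}|^2\le\delta$: these frequencies are $\delta$-noise-dominated. Representing them by pure noise, as spectral noise expansion does, incurs at most $\delta$ error per frequency.
\item \emph{Staying at scale $s_i$ is not safe below $t_i^*$.} For $t<t_i^*$, the boundary frequency of $\Omega_{s_i}$ has already activated. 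By Step~2, the frequencies just outside $\Omega_{s_i}$ are the next to activate, and their activation times can approach $t_i^*$. Hence $t_i^*$ is the latest time at which the next frequency band should be introduced.
\end{itemize}

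\textbf{Main obstacle.} I expect the difficulty to be in making the notion of ``optimal'' precise, since the proposition essentially defines $t_i^*$ as a minimum. The cleanest route is to state optimality as follows: $t_i^*$ is the smallest $t$ such that all frequencies of $\Omega_{s_i}$ are still $\delta$-noise-dominated. The analytic content is then just the monotonicity argument of Steps~1–2. The condition $\delta<1$, needed to keep $1+P-\delta>0$, is exactly what makes Step~1 go through.
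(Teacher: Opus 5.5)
Your Steps~1--2 are essentially the paper's proof: strict monotonicity of $t_\omega$ in $P_\omega$ (derivative $2P+1-\delta>0$), composed with the decay of $P_\omega$ in $|\omega|$, places the minimum over $\Omega_{s_i}$ at radius $s_i\,\omega_{\max}(H,W)$; the paper does not attempt your Step~3 and simply takes $t_i^*$ as the definition of the optimal time. One slip is in your proposed precise notion of optimality: because $t_\omega$ decreases in $|\omega|$, the smallest $t$ at which \emph{all} of $\Omega_{s_i}$ is still $\delta$-noise-dominated is $\max_{\omega\in\Omega_{s_i}} t_\omega$, attained at the lowest frequency, whereas $t_i^*$ is the smallest $t$ at which \emph{some} frequency of $\Omega_{s_i}$ (its boundary) is still noise-dominated, equivalently every frequency with $|\omega|\ge s_i\,\omega_{\max}$ is---which matches your own first bullet of Step~3.
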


\begin{proof}
\leavevmode

We show that the activation time $t_\omega$ of Eq.~\eqref{eq:activation} is strictly decreasing in $|\omega|$ under the assumptions of the proposition; the claim then follows immediately.

\paragraph{1. $t_\omega$ is strictly increasing in $P_\omega$.}
For $\delta \in (0, 1)$ and $P_\omega > 0$, the quadratic $P_\omega\,(1 + P_\omega - \delta) = P_\omega^2 + (1-\delta)\, P_\omega$ has derivative $2 P_\omega + (1-\delta) > 0$, hence is strictly increasing in $P_\omega$. Therefore the radicand $\delta / \bigl(P_\omega(1 + P_\omega - \delta)\bigr)$ in Eq.~\eqref{eq:activation} is strictly decreasing in $P_\omega$, and so is its square root. Hence
\begin{equation}
  t_\omega \;=\; \frac{1}{1 + \sqrt{\dfrac{\delta}{P_\omega\,(1 + P_\omega - \delta)}}}
\end{equation}
is strictly increasing in $P_\omega$.

\paragraph{2. $P_\omega$ is strictly decreasing in $|\omega|$.}
The power-law assumption in Eq.~\eqref{eq:power-law} gives $P_\omega \propto |\omega|^{-\beta}$ with $\beta > 0$. Thus $P_\omega$ is strictly decreasing in $|\omega|$.

\paragraph{3. Minimum over $\Omega_s$.}
Combining the above observations, $t_\omega$ is strictly decreasing in $|\omega|$. The set $\Omega_s = \{\omega \in \Omega : |\omega| \le s \cdot \omega_{\max}(H, W)\}$ consists exactly of the frequencies with radius at most $s \cdot \omega_{\max}(H, W)$, so
\begin{equation}
  \min_{\omega \in \Omega_s}\, t_\omega \;=\; t_{\,s \cdot \omega_{\max}(H, W)},
\end{equation}
which is Eq.~\eqref{eq:stage-transition}.
\end{proof}

\clearpage
\section{Empirical Validation of the Per-frequency Activation Criterion}
\label{sec:motivation}

We empirically validate Proposition 1 with a simple \textit{spectral noise passthrough} experiment.
While maintaining full-resolution processing at all denoising steps, for each frequency $\omega \in \Omega$, if the current timestep $t$ is before its activation time $t_\omega$ (i.e., $t > t_\omega$), we replace that frequency component with $t \cdot T_\Phi(\boldsymbol{\epsilon})^{(\omega)}$, where $\boldsymbol{\epsilon} \sim \mathcal{N}(0, I)$ is the initial noise at $t = 1$. We use the Discrete Cosine Transform (DCT) as $T_\Phi$ in this experiment and sweep over multiple $\delta$ values. As described in Sec.~\ref{sec:method:schedule:signal}, a larger $\delta$ corresponds to transitioning to the high-resolution stage later in denoising with more speedup, while a smaller $\delta$ indicates transitioning to high-resolution earlier in denoising with less speedup.

From Fig. \ref{fig:spectral-passthrough}, we observe that at smaller $\delta \in [0.0001, 0.001]$, there is almost no observable difference compared to native full-resolution generation. As $\delta$ increases, we see a clear transition to blurry and lower-quality images around $\delta=0.05$, and larger values further degrade generation quality. This finding provides empirical support for the spectral autoregression property of diffusion models and Proposition 1, which serves as the foundation of our Spectral Progressive Diffusion framework and the $\delta$-optimal resolution schedule. It also implies that the Gaussian modelling assumption in Proposition 1 serves as a reasonable approximation for subsequent analytical resolution schedule derivations.

\begin{figure}[h!]
    \centering
    \includegraphics[width=\linewidth]{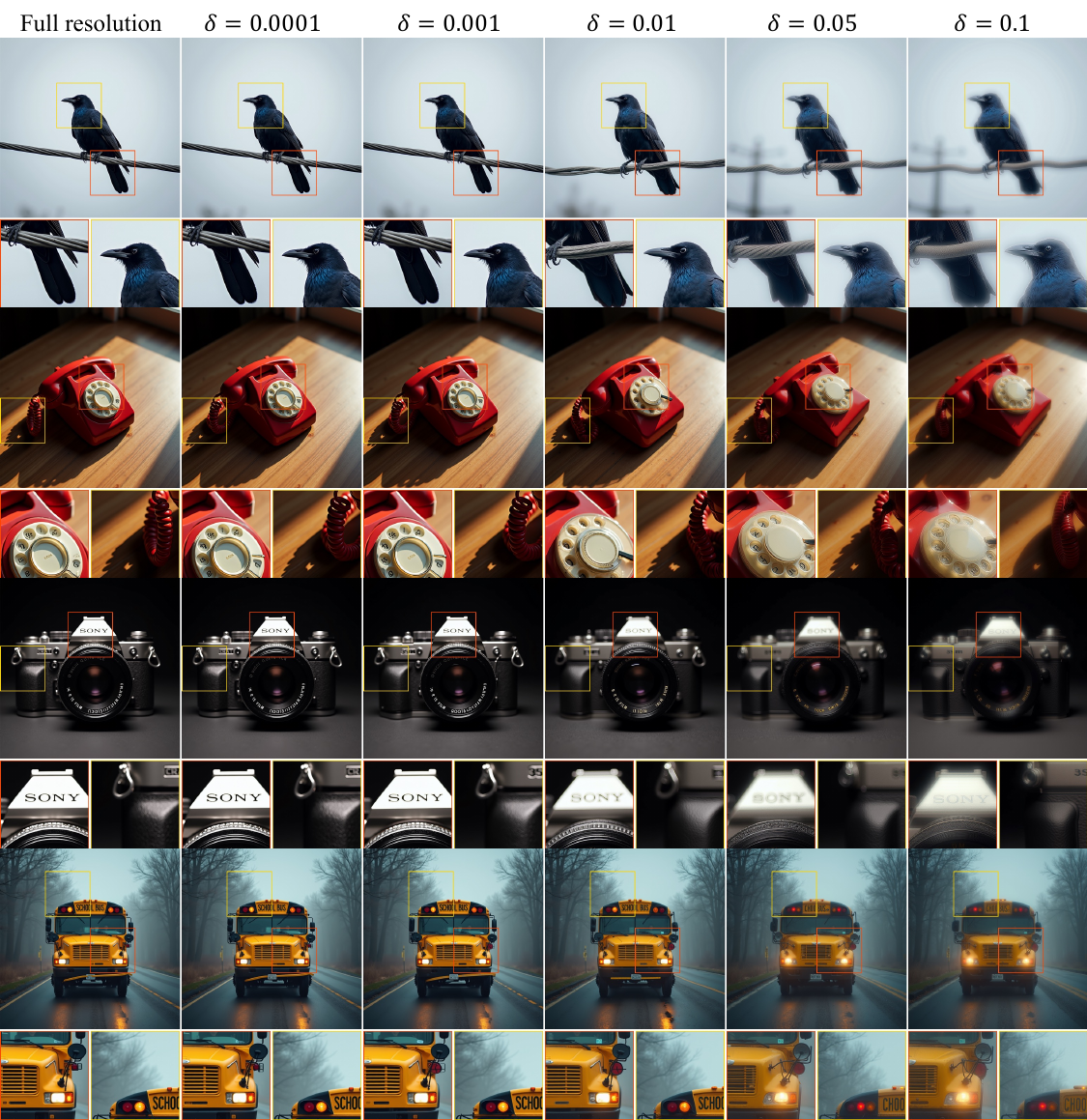}
    \caption{\textbf{Spectral noise passthrough experiment.} At smaller $\delta \in [0.0001, 0.001]$, there is almost no observable difference compared to native full-resolution generation. As larger $\delta$ values cause high-frequency replacement to persist later in the denoising trajectory, we observe increasingly blurry and distorted results (i.e., ghosting artifacts and ``CHOOLBUS'' instead of ``SCHOOLBUS''). }
    \label{fig:spectral-passthrough}
\end{figure}

\clearpage
\section{Additional Details on Power Spectrum Measurement}
\label{sec:power_spec}
To remain consistent with the zero-mean Gaussian modelling assumption in Sec.~\ref{sec:method:schedule:signal}, we center each data sample before measuring the radially averaged power spectrum of the VAE latent space of FLUX.1-dev~\citep{black2024flux} and Z-Image~\citep{zimage} for image generation, and the VAE latent space of WAN 2.1~\citep{wan} for video generation.
We fit the power-law model $P_\omega \approx A \cdot |\omega|^{-\beta}$ and report the fitted $A$ and $\beta$ values for image latents, video latents, and image pixels in Table~\ref{tab:power-spec-fit}.
These fitted values instantiate the per-frequency signal power $P_\omega$ in the activation-time and transition-time formulas in Sec.~\ref{sec:method:schedule:signal}, which we use to compute the $\delta$-optimal resolution schedule for each model family.

For image latents, we run the FLUX.1-dev VAE on 100K images from the Aesthetics-Train-V2~\citep{zhang2025diffusion4k} dataset, center-cropped and resized to $1024^2$ resolution. We then average power over channels and radial frequency bins before fitting the spectrum. For video latents, we use the same procedure with the WAN 2.1 VAE on 10K videos from the VChitect-T2V-Dataverse~\citep{fan2025vchitect} dataset, averaging over channels, latent frames, and radial frequency bins to estimate the spatial power spectrum. We further measure the pixel-space power spectrum on images from the Aesthetics-Train-V2 dataset, center-cropped and resized to $512^2$, the native resolution supported by the PixelGen model.

\begin{table}[h!]
  \centering
  \small
  \caption{\textbf{Power-law fits to measured power spectra.} We fit $P_\omega \approx A \cdot |\omega|^{-\beta}$ on centered samples for image latents, video latents, and image pixels.}
  \label{tab:power-spec-fit}
  \resizebox{\linewidth}{!}{
  \begin{tabular}{lcccc}
    \toprule
    Spectrum & $\beta$ & $A$ & $R^2$ & Samples \\
    \midrule
    FLUX.1-dev (Z-Image) latent & 1.9155 & 203.62 & 0.978 & 100K images \\
    WAN latent & 2.4227 & 219.48 & 0.999 & 10K videos \\
    Pixel $512^2$ & 2.4493 & 3745.46 & 1.000 & 100K images \\
    \bottomrule
  \end{tabular}
  }
\end{table}

\clearpage
\section{Additional Details and Results on Image Generation}
\label{sec:image_generation}

\subsection{Training-free Acceleration of Latent-space Image Generation}
\label{subsec:image_latent}

For training-free inference acceleration of image generation, we use NVIDIA A100 GPUs with 80~GB of VRAM to match RALU~\citep{ralu}'s evaluation protocol. For training-free inference acceleration of video generation, as well as image editing and ablation studies, we use NVIDIA H100 GPUs with 80~GB of VRAM. For training-free acceleration of latent-space image generation, we use FLUX.1-dev~\citep{black2024flux} as the base model and follow the evaluation protocol of RALU~\citep{ralu}. Table~\ref{tab:flux-extended} extends Table~\ref{tab:flux-main} with additional training-free acceleration baselines from RALU. We organize the comparison into three wall-clock speedup tiers:

\begin{itemize}
    \item \textbf{Small speedup}: the 50-step native full-resolution baseline, compared against RALU (50 steps) and our method (40 steps), yielding approximately $1.7\times$ speedup;
    \item \textbf{Medium speedup}: the 10-step native full-resolution baseline, used as an iso-compute reference for RALU (15 steps) and our method (12 steps), yielding approximately $5\times$ speedup; and
    \item \textbf{Large speedup}: the 7-step native full-resolution baseline, used as an iso-compute reference for RALU (10 steps) and our method (10 steps), yielding approximately $7\times$ speedup.
\end{itemize}
All speedups are calculated relative to the wall-clock latency of the 50-step FLUX.1-dev baseline on a single A100 GPU, following RALU~\citep{ralu}. As in Table~\ref{tab:flux-main}, we evaluate image quality using CLIP-IQA~\citep{wang2023clipiqa} and NIQE~\citep{mittal2013niqe}, prompt alignment using T2I-CompBench~\citep{t2icompbench} and GenEval~\citep{geneval}, and overall quality using ImageReward~\citep{imagereward}.
Following RALU's evaluation protocol, ImageReward, CLIP-IQA, and NIQE are averaged over 5,000 images generated from MS-COCO validation prompts. For compositional alignment, GenEval is evaluated with four random seeds per prompt on its object-focused prompt suite, and T2I-CompBench is evaluated on its spatial, non-spatial, and complex prompt subsets with four random seeds per prompt. Unless otherwise specified, all metrics are computed using their official or default evaluation settings.

We keep the error threshold fixed at $\delta=0.01$, use $T_\Phi = \text{DCT}$, and compare $S=2$ and $S=3$ resolution-stage settings. To align with the multi-resolution training distribution of FLUX.1-dev, we use latent dimensions $64^2$ ($s_1 = 0.5$) and $128^2$ $(s_2 = 1.0)$ (pixel dimensions $512^2$ and $1024^2$) for $S=2$, and latent dimensions $32^2$ $(s_1 = 0.25)$, $64^2$ $(s_2 = 0.5)$, and $128^2$ ($s_3 = 1.0$) (pixel dimensions $256^2$, $512^2$, and $1024^2$) for $S=3$.

By combining spectral transformations for progressive-resolution generation with our derived resolution schedule, our method achieves a better speed-quality tradeoff than the reduced-step FLUX.1-dev baselines and existing temporal and spatial diffusion acceleration methods, including progressive-resolution methods such as RALU~\citep{ralu}. In the largest speedup tier, our $S=3$ setting reaches $7.09\times$ wall-clock speedup and $7.36\times$ FLOPs speedup while preserving competitive image quality. We further provide extended qualitative baseline comparisons in Figures~\ref{fig:flux-1}--\ref{fig:flux-3}. FLUX.1-dev with reduced steps degrades image quality and exhibits over-saturation artifacts, while RALU introduces noticeable noise artifacts in reduced-step settings.

\begin{table*}[t!]
    \centering
    \caption{\textbf{Extended quantitative comparisons on training-free latent-space image generation.} We evaluate FLUX.1-dev at $1024^2$ resolution and group methods into the same $1\times$, $5\times$, and $7\times$ wall-clock speedup tiers as Table~\ref{tab:flux-main}. Rows shared with Table~\ref{tab:flux-main} use the same values. Speedup is calculated relative to the 50-step FLUX.1-dev baseline wall-clock runtime on a single A100 GPU. \textcolor{blue}{\textbf{T}} and \textcolor{red}{\textbf{S}} denote temporal and spatial acceleration methods, respectively. Baseline metrics are copied from RALU~\citep{ralu} under the same evaluation protocol.}
    \label{tab:flux-extended}
    \resizebox{\textwidth}{!}{
    \begin{scriptsize}
    \begin{tabular}{cccccccccc}
         \toprule
        \multirow{2}{*}{Method} & \multirow{2}{*}{Accel.} & \multirow{2}{*}{Latency (s) $\downarrow$} & \multirow{2}{*}{Speedup (s) $\uparrow$} & \multirow{2}{*}{TFLOPs $\downarrow$} & Overall & \multicolumn{2}{c}{Image quality} & \multicolumn{2}{c}{Text alignment} \\
        \cmidrule(lr){6-10}
        & & & & & ImageReward $\uparrow$ & CLIP-IQA $\uparrow$ & NIQE $\downarrow$ & T2I-Comp. $\uparrow$ & GenEval $\uparrow$  \\
        \midrule
        FLUX (50 steps) & - & 25.1 & 1.00$\times$ & 2991.01 & \textbf{1.095} & 0.707 & 6.75 & \textbf{0.634} & \textbf{0.698}  \\
        RALU~\citep{ralu}   & \textcolor{red}{\textbf{S}}  & 15.85 & 1.58$\times$ & \underline{1749.94} & 1.028 & \underline{0.712} & \textbf{6.07} & 0.613 & 0.648 \\
        \textbf{Ours ($S=2$)}     & \textcolor{red}{\textbf{S}}       & \underline{15.16} & \underline{1.66}$\times$ & 1755.22 & \underline{1.049} & \textbf{0.719} & 6.43 & \underline{0.617} & \underline{0.654} \\
        \textbf{Ours ($S=3$)}     & \textcolor{red}{\textbf{S}}       & \textbf{14.54} & \textbf{1.73}$\times$ & \textbf{1672.04} & 1.015 & 0.711 & \underline{6.33} & 0.593 & 0.640 \\
        \cmidrule(lr){1-10}
        FLUX (10 steps) & \textcolor{blue}{\textbf{T}} & 5.18 & 4.84$\times$ & 610.02 & 0.981 & 0.679 & 6.93 & 0.618 & 0.647 \\
        $\Delta$-DiT~\citep{deltadit}      & \textcolor{blue}{\textbf{T}} & 7.42 & 3.38$\times$ & 772.10 & 0.102 & 0.487 & 9.60 & 0.306 & 0.397 \\
        ToCa~\citep{toca}                  & \textcolor{blue}{\textbf{T}} & 15.5 & 1.62$\times$ & 601.12 & -1.827 & 0.253 & 10.6 & 0.259 & 0.137 \\
        TeaCache~\citep{teacache}          & \textcolor{blue}{\textbf{T}} & 5.23 & 4.80$\times$ & 610.59 & 0.944 & 0.665 & 7.92 & 0.620 & 0.647 \\
        TaylorSeer~\citep{taylorseer}      & \textcolor{blue}{\textbf{T}} & 9.34 & 2.69$\times$ & 556.72 & 0.972 & 0.684 & 6.77 & 0.594 & 0.619 \\
        Bottleneck~\citep{bottleneck}      & \textcolor{red}{\textbf{S}}  & 5.37 & 4.67$\times$ & 571.23 & 0.889 & 0.661 & 9.16 & 0.620 & \textbf{0.687} \\
        RALU~\citep{ralu}                  & \textcolor{red}{\textbf{S}}  & 5.04 & 4.98$\times$ & 540.47 & 1.022 & \underline{0.700} & \textbf{6.43} & \textbf{0.626} & 0.652 \\
        \textbf{Ours ($S=2$)}     & \textcolor{red}{\textbf{S}}       & \underline{4.35} & \underline{5.77}$\times$ & \underline{500.34} & \textbf{1.059} & 0.696 & 6.69 & \underline{0.624} & \underline{0.655} \\
        \textbf{Ours ($S=3$)}     & \textcolor{red}{\textbf{S}}       & \textbf{4.12} & \textbf{6.09}$\times$ & \textbf{469.15} & \underline{1.042} & \textbf{0.701} & \underline{6.53} & 0.623 & 0.637 \\
        \cmidrule(lr){1-10}
        FLUX (7 steps) & \textcolor{blue}{\textbf{T}} & 3.79 & 6.62$\times$ & 431.45 & 0.920 & 0.660 & 8.25 & 0.594 & 0.583 \\
        TeaCache~\citep{teacache}          & \textcolor{blue}{\textbf{T}} & 4.21 & 5.96$\times$ & 431.83 & 0.733 & 0.623 & 13.7 & 0.599 & 0.594 \\
        TaylorSeer~\citep{taylorseer}      & \textcolor{blue}{\textbf{T}} & 7.00 & 3.59$\times$ & 431.74 & 0.660 & 0.646 & 9.43 & 0.514 & 0.446 \\
        Bottleneck~\citep{bottleneck}      & \textcolor{red}{\textbf{S}}  & 3.78 & 6.64$\times$ & 431.52 & 0.792 & 0.631 & 8.71 & 0.605 & \underline{0.672} \\
        RALU~\citep{ralu}                  & \textcolor{red}{\textbf{S}}  & 3.75 & 6.69$\times$ & \underline{426.01} & 0.999 & 0.681 & 6.87 & \textbf{0.633} & \textbf{0.682} \\
        \textbf{Ours ($S=2$)}     & \textcolor{red}{\textbf{S}}       & \underline{3.70} & \underline{6.78}$\times$ & 427.03 & \textbf{1.039} & \underline{0.689} & \underline{6.78} & 0.620 & 0.667 \\
        \textbf{Ours ($S=3$)}     & \textcolor{red}{\textbf{S}}       & \textbf{3.54} & \textbf{7.09}$\times$ & \textbf{406.24} & \underline{1.015} & \textbf{0.694} & \textbf{5.99} & \underline{0.627} & 0.637 \\
        \bottomrule
    \end{tabular}
    \end{scriptsize}}
\end{table*}

\subsection{Spectral-transformation-based Fine-tuning for Latent- and Pixel-Space Image Generation}
\label{subsec:image_pixel}
\paragraph{Additional latent-space fine-tuning results.} Since FLUX.1-dev is a guidance-distilled model and its base model is not open-sourced, we additionally conduct latent-space image generation fine-tuning experiments on Z-Image~\citep{zimage}, which employs the same VAE architecture as FLUX.1-dev and hence the same $\delta$-optimal resolution schedules. We adopt Low-Rank Adaptation (LoRA) of rank 32 and train for 2000 steps following the spectral-transformation-based fine-tuning procedure in Sec.~\ref{sec:method:lora}. For fair comparison with the full-resolution and training-free acceleration baselines, we fine-tune on synthetic images generated by full-resolution Z-Image with 50 denoising steps using a set of 5K MS-COCO prompts that is disjoint from the 5K MS-COCO validation prompts used for evaluation. Training is conducted on our internal cluster of 8 NVIDIA H100 GPUs, each with 80~GB of VRAM, at an effective batch size of 8 and takes approximately 2 hours. We keep the optimizer and all other LoRA fine-tuning hyperparameters at the default settings of the \texttt{DiffSynth Studio} library, including a constant learning rate of 1e-4. Fine-tuning compute scales linearly with the number of sampled training images, training iterations, and batch size. In our experiments we fix the synthetic fine-tuning set to 5K prompts and train LoRA adapters for 2000 iterations, so increasing the fine-tuning dataset size would only affect compute through additional sampled training examples and does not change the inference-time cost of our method.

All latent-image generation experiments are conducted at a native resolution of $1024^2$ pixels, which is the default configuration for both FLUX.1-dev and Z-Image. The progressive resolution settings for the $S = 2$ and $S = 3$ cases are the same as in the FLUX.1-dev training-free acceleration experiments in Sec.~\ref{subsec:image_latent}.

Further model fine-tuning allows a more aggressive error tolerance $\delta$, leading to an even higher speedup. For fine-tuning experiments, we additionally evaluate $\delta=0.05$, which is larger than the default $\delta=0.01$. As in Sec.~\ref{subsec:image_latent}, we organize the comparison into three wall-clock speedup tiers, all calculated relative to the 50-step Z-Image baseline on a single H100 GPU (we adopt H100 wall-clock measurements for Z-Image and all subsequent pixel-space image generation and video generation experiments):

\begin{itemize}
    \item \textbf{Small speedup}: the 50-step native full-resolution Z-Image baseline, compared against our training-free acceleration and LoRA fine-tuned variants at $S=2$ and $S=3$ ($\delta=0.01$, 50 steps), as well as our LoRA fine-tuned variant with the configuration ($\delta=0.05$, $S=2$, 50 steps), yielding $1.65\times$ to $2.01\times$ speedup;
    \item \textbf{Medium speedup}: the 10-step native full-resolution Z-Image baseline, used as an iso-compute reference for our training-free acceleration and LoRA fine-tuned variants at $S=2$ and $S=3$ with reduced denoising steps, yielding approximately $5\times$ speedup; and
    \item \textbf{Large speedup}: the 7-step native full-resolution Z-Image baseline, used as an iso-compute reference for our training-free acceleration and LoRA fine-tuned variants at $S=2$ and $S=3$ with further reduced denoising steps, yielding approximately $7\times$ to $8\times$ speedup.
\end{itemize}
For quantitative evaluation, we use the same metric evaluation settings as Sec.~\ref{subsec:image_latent}, following the RALU evaluation protocol, and report the numbers in Table~\ref{tab:zimage-supp}. We show extensive qualitative comparisons in Figures~\ref{fig:zimage-1}--\ref{fig:zimage-3}. Across all three speedup tiers, our training-free acceleration variants outperform the reduced-step Z-Image baselines, and our spectral-transformation-based LoRA fine-tuning further closes the gap to the 50-step full-resolution baseline while attaining up to $7.81\times$ wall-clock speedup. In particular, our LoRA fine-tuned model at $S=3$ outperforms the training-free acceleration variant at $S=2$ while improving efficiency, and our LoRA fine-tuned model at $S=2$ either with reduced denoising steps or at a larger $\delta=0.05$ attains image quality comparable to the training-free acceleration $S=2$, 50-step setting, providing additional efficiency at no quality cost.

\begin{table*}[t!]
    \centering
    \caption{\textbf{Quantitative comparisons on latent-space image generation with fine-tuning.} We evaluate Z-Image at $1024^2$ resolution and group methods into wall-clock speedup tiers. Speedup is calculated relative to the 50-step Z-Image baseline wall-clock runtime on a single H100 GPU. \textcolor{blue}{\textbf{T}} and \textcolor{red}{\textbf{S}} denote temporal and spatial acceleration methods, respectively. All metrics are evaluated using the same protocol as Table~\ref{tab:flux-main}.}
    \label{tab:zimage-supp}
    \resizebox{\textwidth}{!}{
    \begin{scriptsize}
    \begin{tabular}{cccccccccc}
         \toprule
        \multirow{2}{*}{Method} & \multirow{2}{*}{Accel.} & \multirow{2}{*}{Latency (s) $\downarrow$} & \multirow{2}{*}{Speedup (s) $\uparrow$} & \multirow{2}{*}{TFLOPs $\downarrow$} & Overall & \multicolumn{2}{c}{Image quality} & \multicolumn{2}{c}{Text alignment} \\
        \cmidrule(lr){6-10}
        & & & & & ImageReward $\uparrow$ & CLIP-IQA $\uparrow$ & NIQE $\downarrow$ & T2I-Comp. $\uparrow$ & GenEval $\uparrow$  \\
        \midrule
        Z-Image (50 steps) & - & 21.25 & 1.00$\times$ & 4941.23 & 0.965 & 0.700 & 5.41 & 0.731 & 0.745  \\
        \cmidrule(lr){1-10}
        Z-Image (32 steps) & \textcolor{blue}{\textbf{T}} & 13.59 & 1.56$\times$ & 3166.62 & \underline{0.957} & 0.697 & \textbf{5.44} & 0.686 & 0.725  \\
        \textbf{Ours (TF, $\delta=0.01$, $S=2$)}     & \textcolor{red}{\textbf{S}}       & 12.90 & 1.65$\times$ & 3132.03 & 0.904 & 0.688 & 5.87 & 0.658 & \underline{0.730} \\
        \textbf{Ours (TF, $\delta=0.01$, $S=3$)}     & \textcolor{red}{\textbf{S}}       & \underline{12.19} & \underline{1.74$\times$} & \underline{2871.09} & 0.875 & 0.690 & \underline{5.59} & 0.650 & 0.682 \\
        \textbf{Ours (LoRA, $\delta=0.01$, $S=2$)}     & \textcolor{red}{\textbf{S}}       & 12.87 & 1.65$\times$ & 3132.03 & \textbf{0.982} & \textbf{0.699} & 5.72 & \textbf{0.725} & \textbf{0.731} \\
        \textbf{Ours (LoRA, $\delta=0.01$, $S=3$)}     & \textcolor{red}{\textbf{S}}       & 12.23 & 1.74$\times$ & \underline{2871.09} & 0.954 & \underline{0.697} & 5.75 & \underline{0.717} & 0.728 \\
        \textbf{Ours (LoRA, $\delta=0.05$, $S=2$)}     & \textcolor{red}{\textbf{S}}       & \textbf{10.59} & \textbf{2.01$\times$} & \textbf{2436.19} & 0.919 & 0.684 & 6.12 & 0.661 & 0.718 \\
        \cmidrule(lr){1-10}
        Z-Image (10 steps) & \textcolor{blue}{\textbf{T}} & 4.26 & 4.99$\times$ & 997.68 & 0.851 & 0.659 & 5.95 & \underline{0.678} & \underline{0.705} \\
        \textbf{Ours (TF, $\delta=0.01$, $S=2$)}     & \textcolor{red}{\textbf{S}}       & \underline{4.22} & \underline{5.04$\times$} & \underline{962.95} & \underline{0.860} & \underline{0.668} & 6.17 & 0.677 & 0.693 \\
        \textbf{Ours (TF, $\delta=0.01$, $S=3$)}     & \textcolor{red}{\textbf{S}}       & \textbf{4.01} & \textbf{5.30$\times$} & \textbf{875.97} & 0.827 & 0.662 & \underline{5.78} & 0.676 & 0.655 \\
        \textbf{Ours (LoRA, $\delta=0.01$, $S=2$)}     & \textcolor{red}{\textbf{S}}       &  4.24 & 5.01$\times$ & \underline{962.95} & \textbf{0.923} & \textbf{0.683} & \textbf{5.58} & \textbf{0.706} & \textbf{0.738} \\
        \cmidrule(lr){1-10}
        Z-Image (7 steps) & \textcolor{blue}{\textbf{T}} & 3.56 & 5.97$\times$ & 701.92 & 0.763 & 0.631 & \textbf{6.14} & 0.648 & 0.667 \\
        \textbf{Ours (TF, $\delta=0.01$, $S=2$)}     & \textcolor{red}{\textbf{S}}       & 2.75 & 7.73$\times$ & \underline{678.77} & \underline{0.804} & \underline{0.641} & 6.46 & \underline{0.665} & \underline{0.680} \\
        \textbf{Ours (TF, $\delta=0.01$, $S=3$)}     & \textcolor{red}{\textbf{S}}       & \textbf{2.63} & \textbf{8.08$\times$} & \textbf{609.18} & 0.759 & 0.630 & \underline{6.16} & 0.658 & 0.645 \\
        \textbf{Ours (LoRA, $\delta=0.01$, $S=2$)}     & \textcolor{red}{\textbf{S}}       & \underline{2.72} & \underline{7.81$\times$} & \underline{678.77} & \textbf{0.918} & \textbf{0.658} & 6.57 & \textbf{0.683} & \textbf{0.744} \\
        \bottomrule
    \end{tabular}
    \end{scriptsize}
    }
\end{table*}


\paragraph{Additional pixel-space fine-tuning results.} For pixel-space image generation, we evaluate our method on PixelGen-XXL/16 T2I~\citep{pixelgen} at its native resolution of $512^2$ pixels. We use the same fine-tuning data construction and LoRA training procedure as the Z-Image latent-space experiments above, with synthetic fine-tuning images generated by full-resolution PixelGen with 25 denoising steps from a set of 5K MS-COCO prompts disjoint from the 5K MS-COCO validation prompts used for evaluation. Since PixelGen is pretrained with $x_0$ prediction rather than flow matching, we convert the spectral-transform velocity targets from the fine-tuning procedure in Sec.~\ref{sec:method:lora} to $x_0$ targets and supervise the LoRA adapter with the converted $x_0$.

We use $S=2$ with $\delta=0.01$ and, following Sec.~\ref{subsec:image_latent}, organize the comparison into two wall-clock speedup tiers, all calculated relative to the 25-step PixelGen baseline on a single H100 GPU:

\begin{itemize}
    \item \textbf{Small speedup}: the 25-step native full-resolution PixelGen baseline, used as the reference for the reduced 13-step PixelGen variant and our training-free acceleration and LoRA fine-tuned variants at $S=2$ (25 steps), yielding approximately $1.5\times$ to $1.9\times$ speedup;
    \item \textbf{Medium speedup}: the 8-step native full-resolution PixelGen variant, used as an iso-compute reference for our training-free acceleration and LoRA fine-tuned variants at $S=2$ with reduced denoising steps, yielding approximately $2\times$ to $3\times$ speedup.
\end{itemize}

For quantitative evaluation, we follow the same protocol used in latent-space image generation and report the numbers in Table~\ref{tab:pixelgen-main}. We show extensive qualitative comparisons in Figures~\ref{fig:pixelgen-1}--\ref{fig:pixelgen-3}. Because PixelGen does not natively support a high-fidelity $256^2$ resolution generation, our training-free acceleration variant falls short of the reduced-step PixelGen baselines on overall image quality across both speedup tiers. Our spectral-transformation-based LoRA fine-tuning closes this gap: at the same FLOPs as the training-free acceleration variant, the LoRA fine-tuned model matches or surpasses the reduced-step PixelGen baseline on most image-quality and prompt-alignment metrics, including ImageReward, CLIP-IQA, and T2I-CompBench. The qualitative comparisons in Figures~\ref{fig:pixelgen-1}--\ref{fig:pixelgen-3} corroborate this trend, showing that our LoRA fine-tuned model's outputs match the visual quality of the 25-step native-resolution PixelGen baseline at substantially higher throughput.

\begin{table*}[t!]
    \centering
    \caption{\textbf{Quantitative comparisons on pixel-space image generation.} The number in parentheses next to PixelGen indicates the total number of inference steps. $\uparrow / \downarrow$ denotes that a higher / lower metric is favourable. Speedup is calculated relative to the 25-step PixelGen baseline wall-clock runtime on a single H100 GPU. \textcolor{blue}{\textbf{T}} and \textcolor{red}{\textbf{S}} denote temporal and spatial acceleration, respectively.}
    \label{tab:pixelgen-main}
    \resizebox{\textwidth}{!}{
    \begin{scriptsize}
    \begin{tabular}{cccccccccc}
         \toprule
        \multirow{2}{*}{Method} & \multirow{2}{*}{Accel.} & \multirow{2}{*}{Latency (s) $\downarrow$} & \multirow{2}{*}{Speedup (s) $\uparrow$} & \multirow{2}{*}{TFLOPs $\downarrow$} & Overall & \multicolumn{2}{c}{Image quality} & \multicolumn{2}{c}{Text alignment} \\
        \cmidrule(lr){6-10}
        & & & & & ImageReward $\uparrow$ & CLIP-IQA $\uparrow$ & NIQE $\downarrow$ & T2I-Comp. $\uparrow$ & GenEval $\uparrow$  \\
        \midrule
        PixelGen (25 steps) & - & 0.48 & 1.00$\times$ & 65.36 & 0.921 & 0.734 & 5.95 & 0.574 & 0.794 \\
        \cmidrule(lr){1-10}
        PixelGen (13 steps) & \textcolor{blue}{\textbf{T}} & \textbf{0.25} & \textbf{1.92$\times$} & \underline{34.16} & \underline{0.886} & \underline{0.726} & \underline{5.89} & \underline{0.571} & \underline{0.781} \\
        \textbf{Ours (TF, $\delta=0.01$, $S = 2$)}     & \textcolor{red}{\textbf{S}}       & \underline{0.30} & \underline{1.60$\times$} & \textbf{33.72} & 0.799 & 0.718 & 6.10 & 0.568 & \textbf{0.782} \\
        \textbf{Ours (LoRA, $\delta=0.01$, $S = 2$)}   & \textcolor{red}{\textbf{S}}       & 0.31 & 1.55$\times$ & \textbf{33.72} & \textbf{0.913} & \textbf{0.728} & \textbf{5.87} & \textbf{0.580} & 0.776 \\
        \cmidrule(lr){1-10}
        PixelGen (8 steps) & \textcolor{blue}{\textbf{T}} & \textbf{0.16} & \textbf{3.00$\times$} & \underline{21.16} & \underline{0.858} & \underline{0.715} & \textbf{5.79} & \underline{0.572} & 0.756 \\
        \textbf{Ours (TF, $\delta=0.01$, $S = 2$)}     & \textcolor{red}{\textbf{S}}       & \underline{0.21} & \underline{2.29$\times$} & \textbf{20.75} & 0.779 & 0.713 & 6.02 & 0.567 & \underline{0.765} \\
        \textbf{Ours (LoRA, $\delta=0.01$, $S = 2$)}   & \textcolor{red}{\textbf{S}}       & 0.22 & 2.18$\times$ & \textbf{20.75} & \textbf{0.908} & \textbf{0.724} & \underline{5.85} & \textbf{0.577} & \textbf{0.770} \\
        \bottomrule
    \end{tabular}
    \end{scriptsize}
    }
\end{table*}

\clearpage
\begin{figure}
    \centering
    \includegraphics[width=\linewidth]{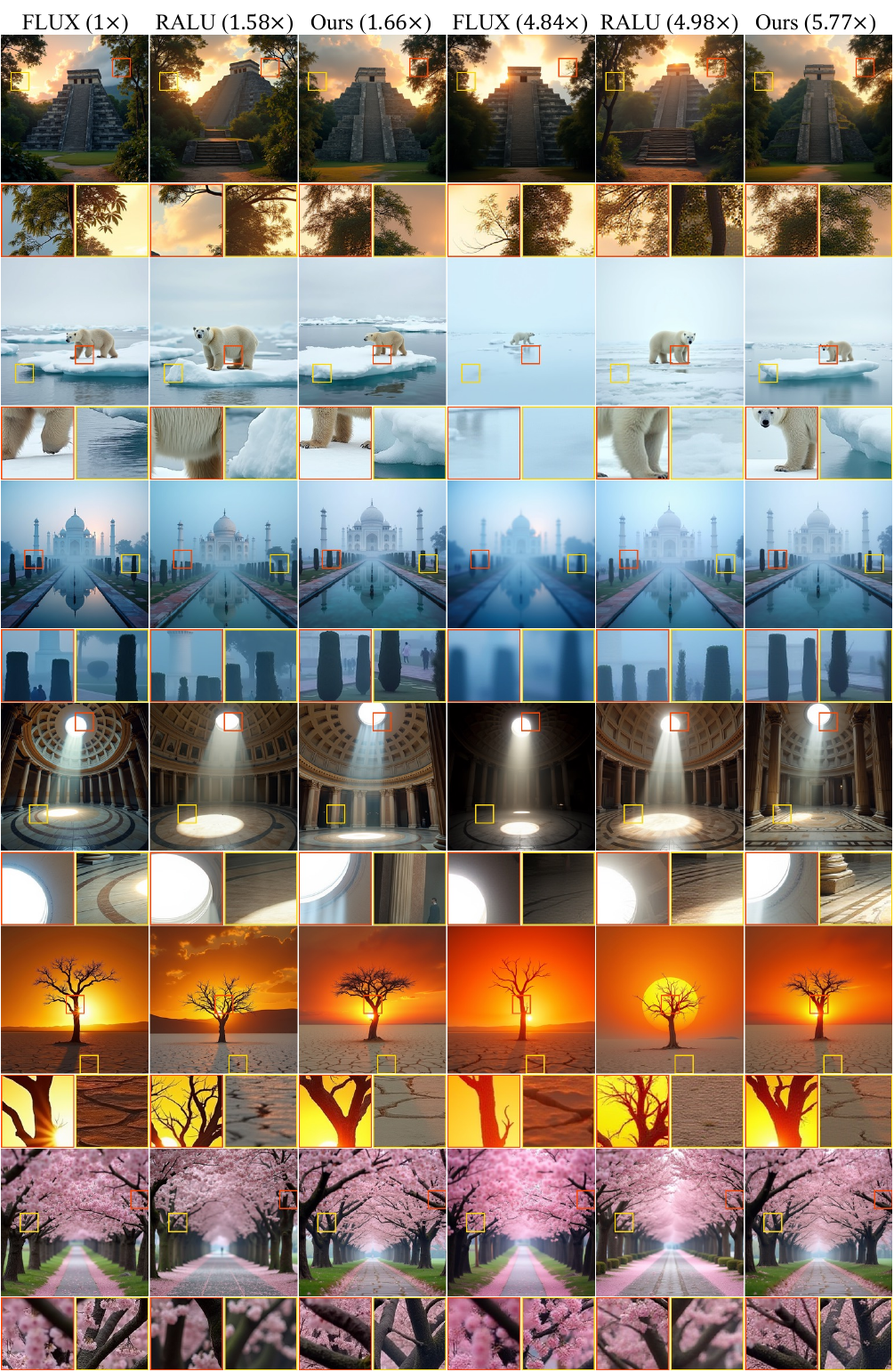}
    \caption{\textbf{Qualitative comparisons on latent-space image generation.} We compare our method against default-step generation, reduced-step native-resolution generation on FLUX.1-dev \citep{black2024flux}, and RALU~\citep{ralu}, a state-of-the-art acceleration baseline matched to similar speedups. Our method outperforms both baselines. FLUX.1-dev with reduced steps degrades image quality and exhibits over-saturation artifacts, while RALU introduces noticeable noise artifacts.}
    \label{fig:flux-1}
\end{figure}

\clearpage
\begin{figure}
    \centering
    \includegraphics[width=\linewidth]{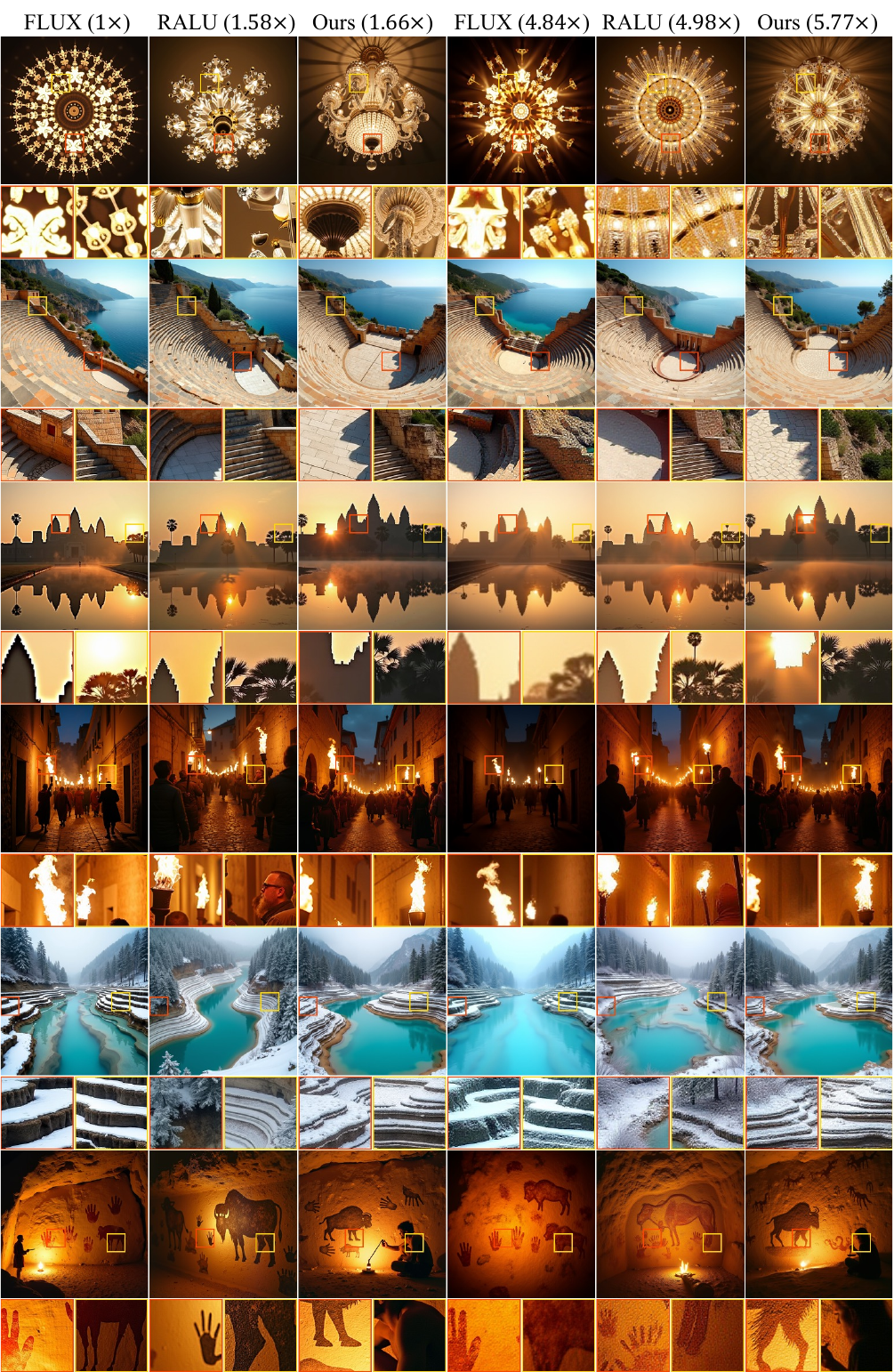}
    \caption{\textbf{Qualitative comparisons on latent-space image generation.} We compare our method against default-step generation, reduced-step native-resolution generation on FLUX.1-dev \citep{black2024flux}, and RALU~\citep{ralu}, a state-of-the-art acceleration baseline matched to similar speedups. Our method outperforms both baselines. FLUX.1-dev with reduced steps degrades image quality and exhibits over-saturation artifacts, while RALU introduces noticeable noise artifacts.}
    \label{fig:flux-2}
\end{figure}

\clearpage
\begin{figure}
    \centering
    \includegraphics[width=\linewidth]{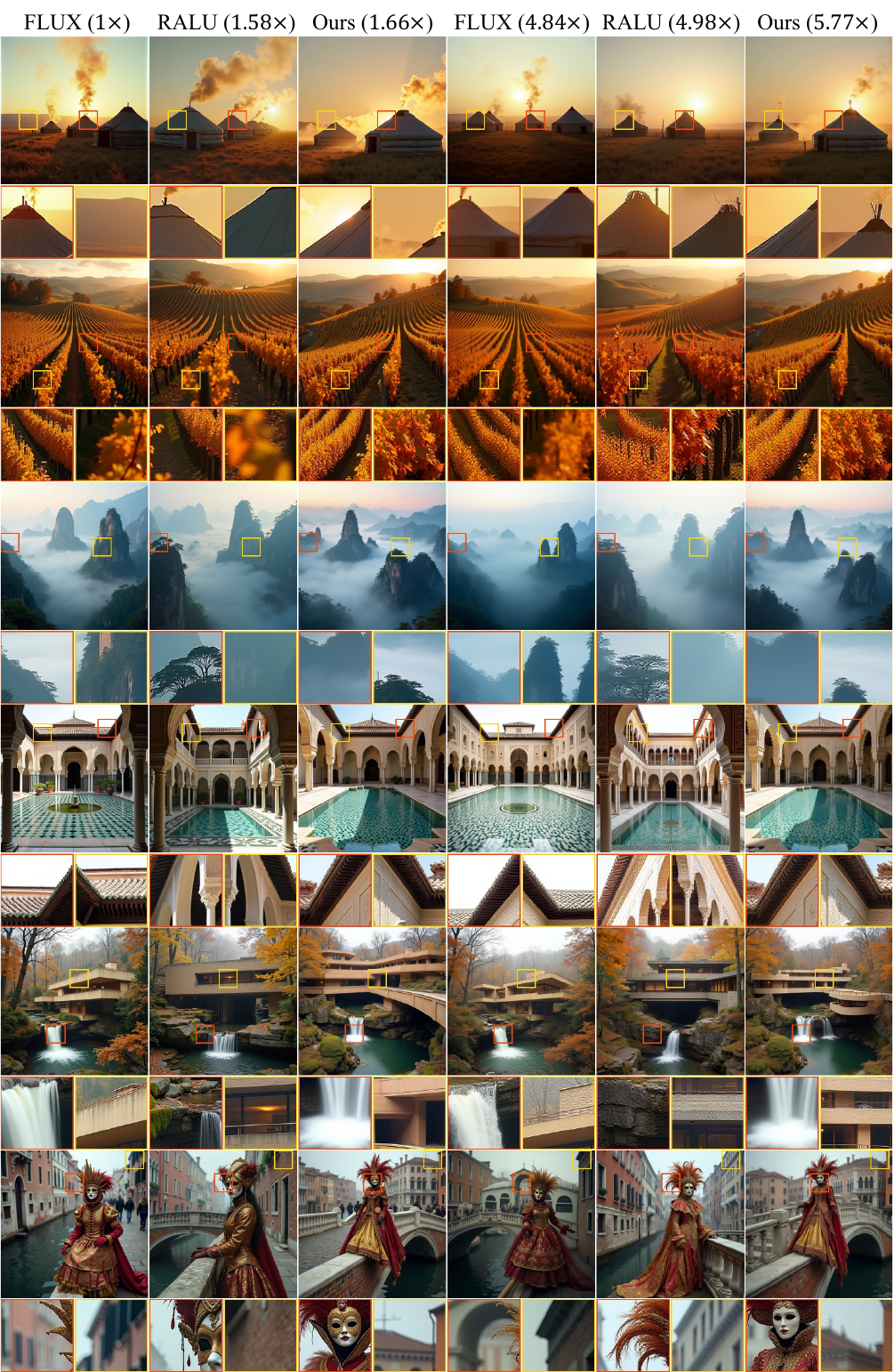}
    \caption{\textbf{Qualitative comparisons on latent-space image generation.} We compare our method against default-step generation, reduced-step native-resolution generation on FLUX.1-dev \citep{black2024flux}, and RALU~\citep{ralu}, a state-of-the-art acceleration baseline matched to similar speedups. Our method outperforms both baselines. FLUX.1-dev with reduced steps degrades image quality and exhibits over-saturation artifacts, while RALU introduces noticeable noise artifacts.}
    \label{fig:flux-3}
\end{figure}

\clearpage
\begin{figure}
     \centering
     \includegraphics[width=\linewidth]{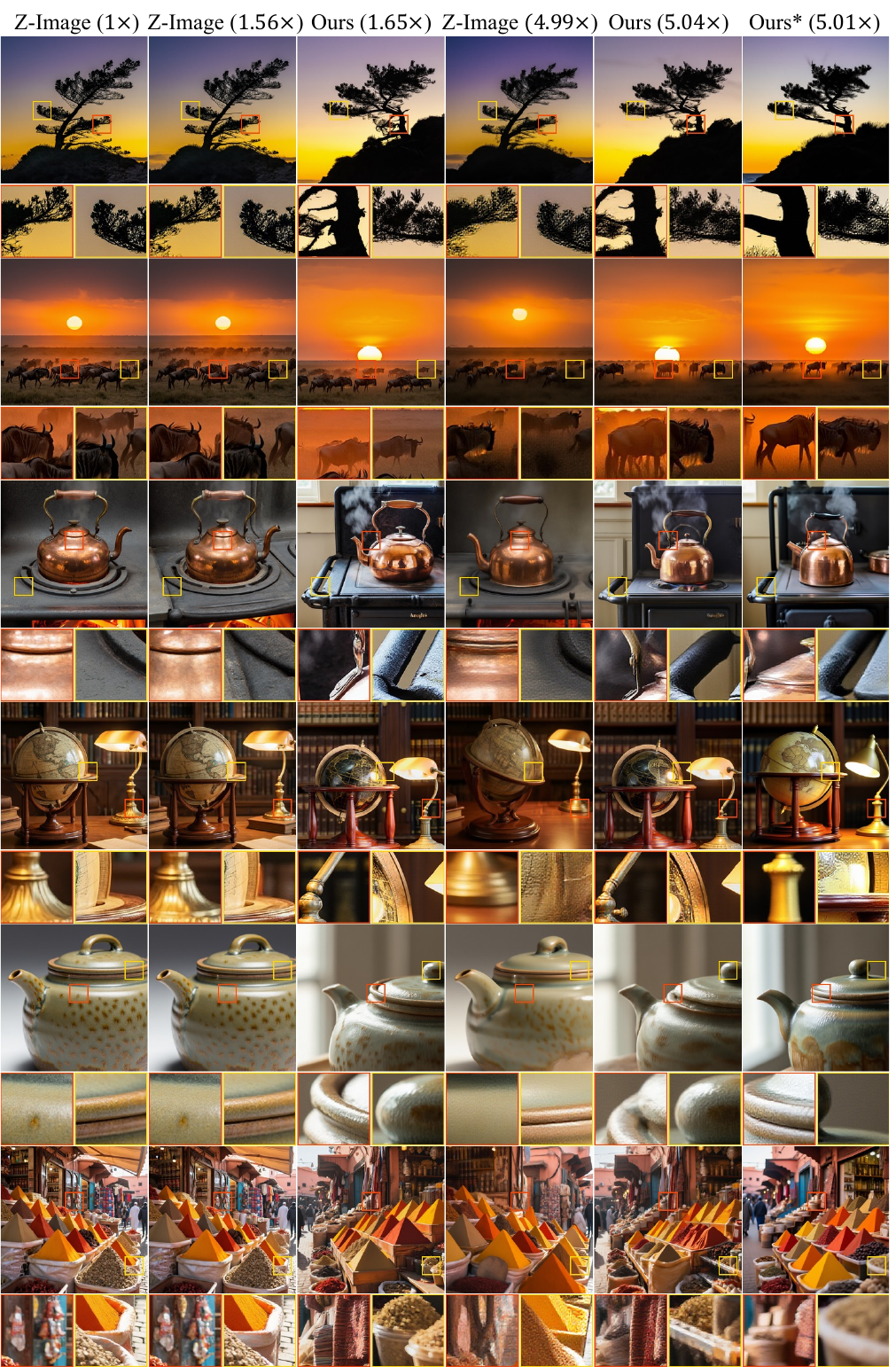}
     \caption{\textbf{Qualitative comparisons on latent-space image generation (fine-tuned).} We compare our method against default-step generation, reduced-step native-resolution generation on Z-Image \citep{zimage} matched to similar speedups. Our fine-tuned model (Ours$^*$) achieves even higher image quality compared to our training-free acceleration variant and outperforms the reduced-step baseline.}
     \label{fig:zimage-1}
\end{figure}

\clearpage
\begin{figure}
     \centering
     \includegraphics[width=\linewidth]{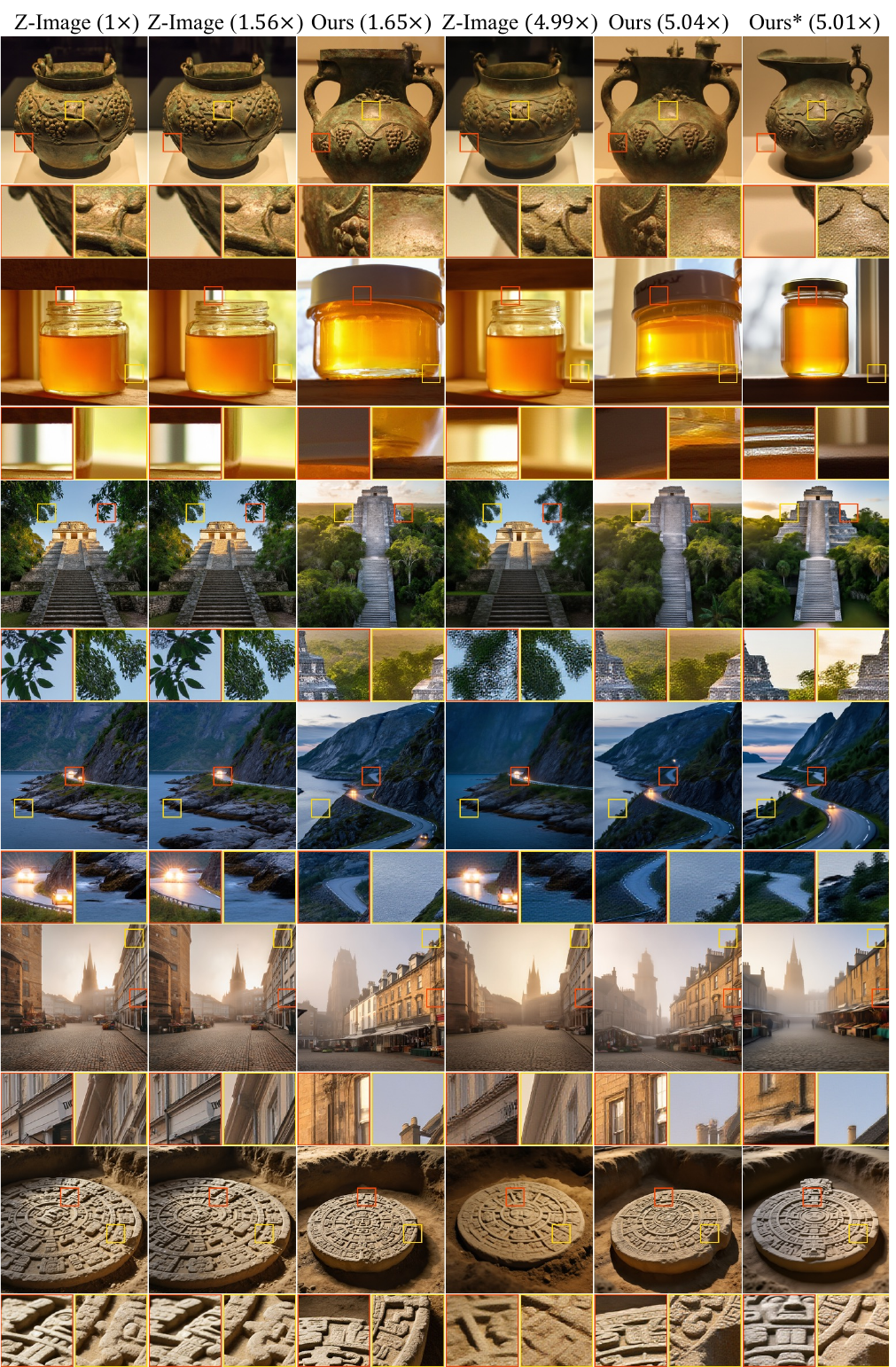}
     \caption{\textbf{Qualitative comparisons on latent-space image generation (fine-tuned).} We compare our method against default-step generation, reduced-step native-resolution generation on Z-Image \citep{zimage} matched to similar speedups. Our fine-tuned model (Ours$^*$) achieves even higher image quality compared to our training-free acceleration variant and outperforms the reduced-step baseline.}
     \label{fig:zimage-2}
\end{figure}

\clearpage
\begin{figure}
     \centering
     \includegraphics[width=\linewidth]{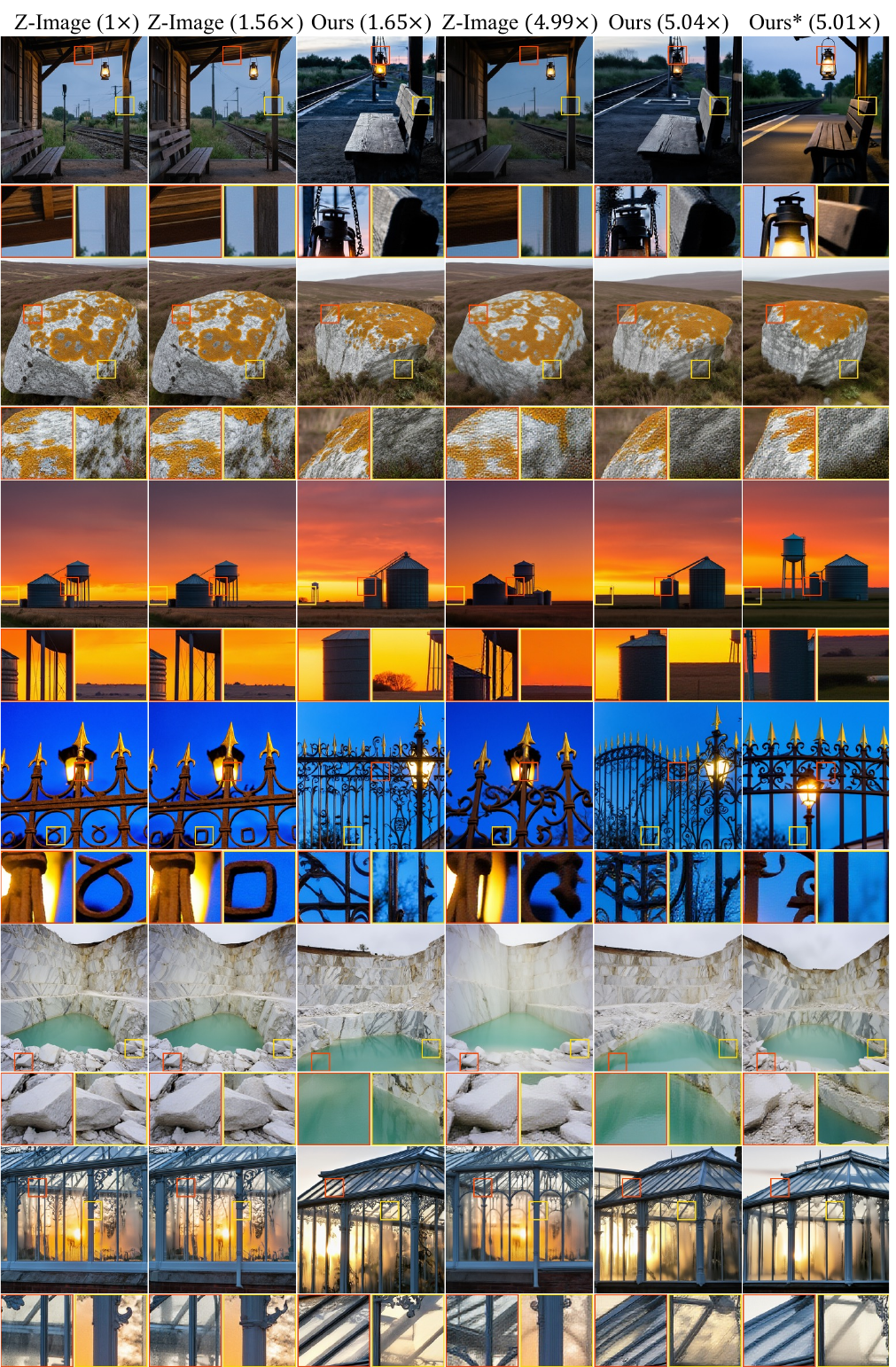}
     \caption{\textbf{Qualitative comparisons on latent-space image generation (fine-tuned).} We compare our method against default-step generation, reduced-step native-resolution generation on Z-Image \citep{zimage} matched to similar speedups. Our fine-tuned model (Ours$^*$) achieves even higher image quality compared to our training-free acceleration variant and outperforms the reduced-step baseline.}
     \label{fig:zimage-3}
\end{figure}

\clearpage
\begin{figure}
    \centering
    \includegraphics[width=0.67\linewidth]{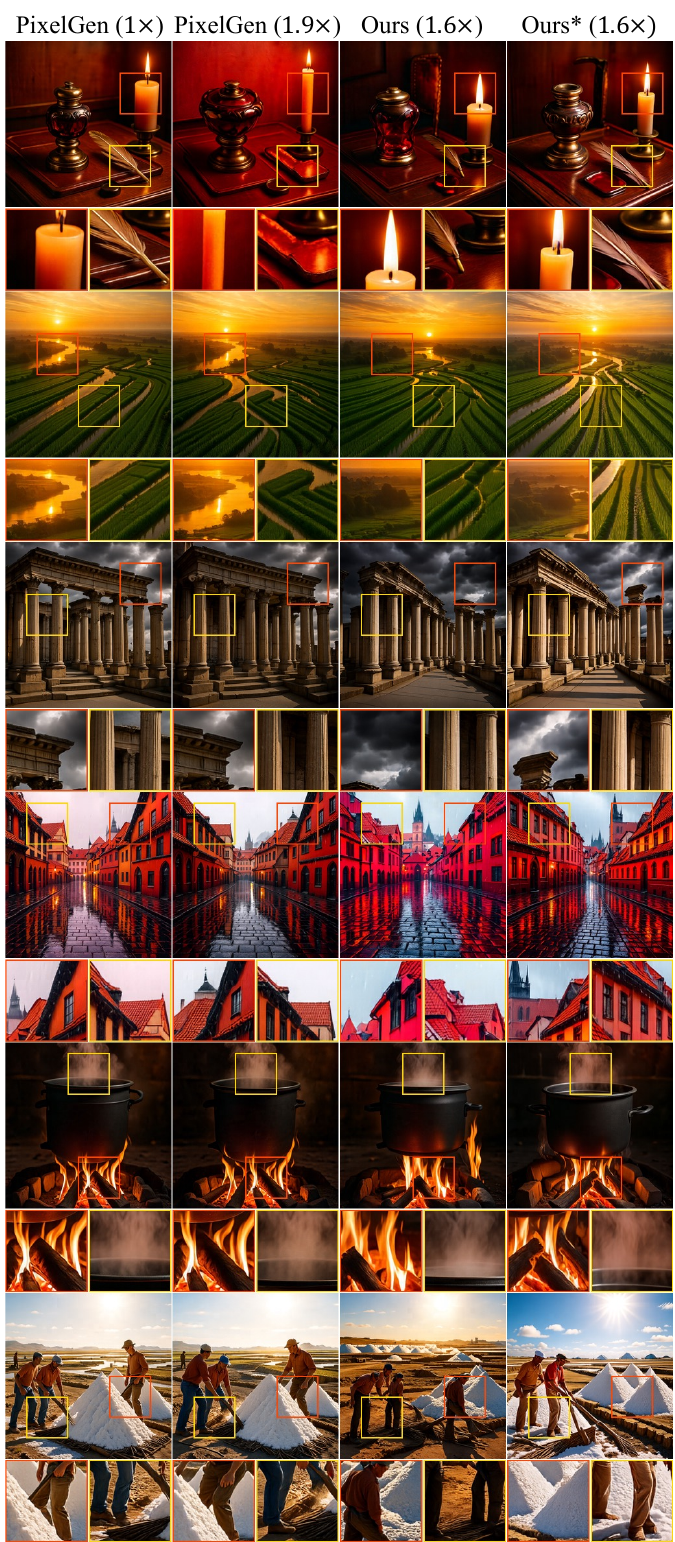}
    \caption{\textbf{Qualitative comparisons on pixel-space image generation.} We compare our method against default-step generation and reduced-step native-resolution generation on PixelGen~\citep{pixelgen}, matched to comparable speedups. An asterisk (Ours$^*$) marks the fine-tuned model. Our method achieves similar quality to full-resolution generation while attaining higher speedups.}
    \label{fig:pixelgen-1}
\end{figure}

\clearpage
\begin{figure}
    \centering
    \includegraphics[width=0.67\linewidth]{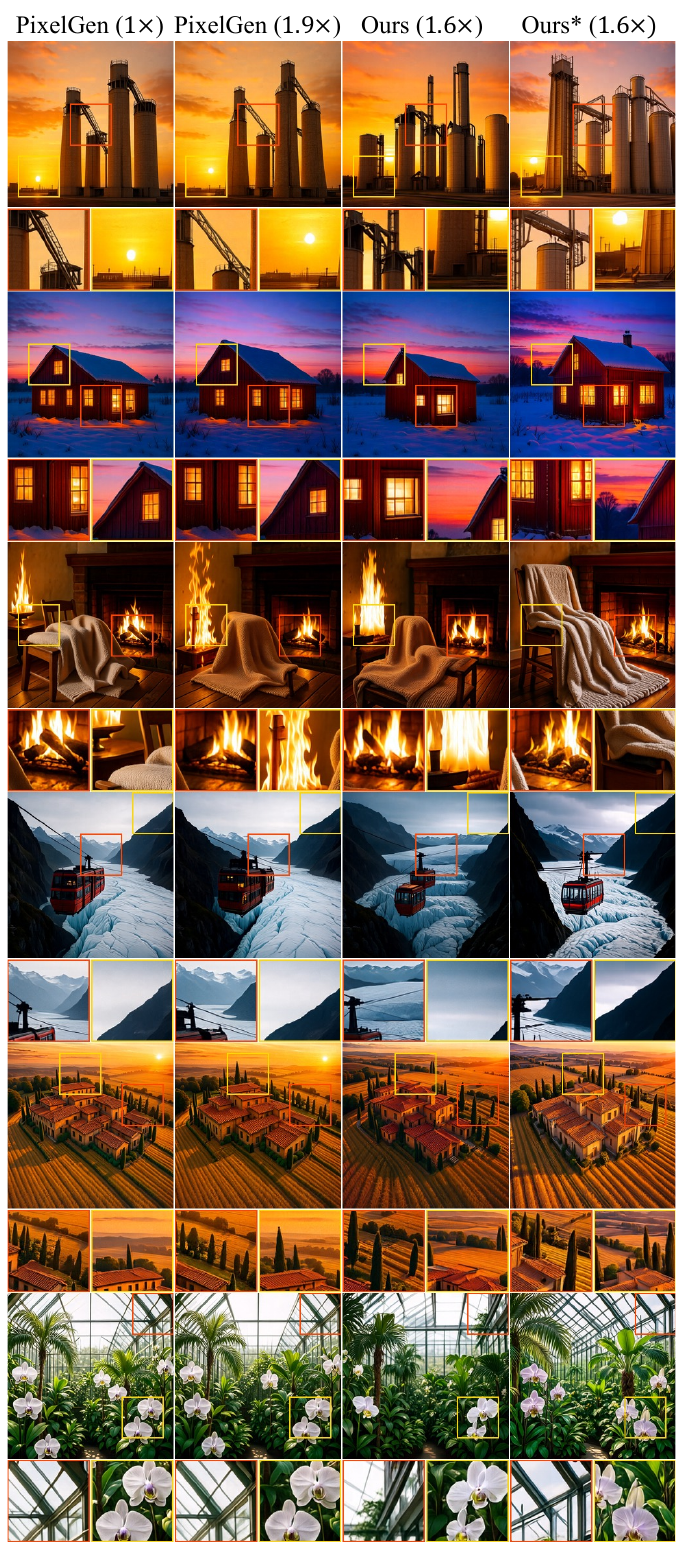}
    \caption{\textbf{Qualitative comparisons on pixel-space image generation.} We compare our method against default-step generation and reduced-step native-resolution generation on PixelGen~\citep{pixelgen}, matched to comparable speedups. An asterisk (Ours$^*$) marks the fine-tuned model. Our method achieves similar quality to full-resolution generation while attaining higher speedups.}
    \label{fig:pixelgen-2}
\end{figure}

\clearpage
\begin{figure}
    \centering
    \includegraphics[width=0.67\linewidth]{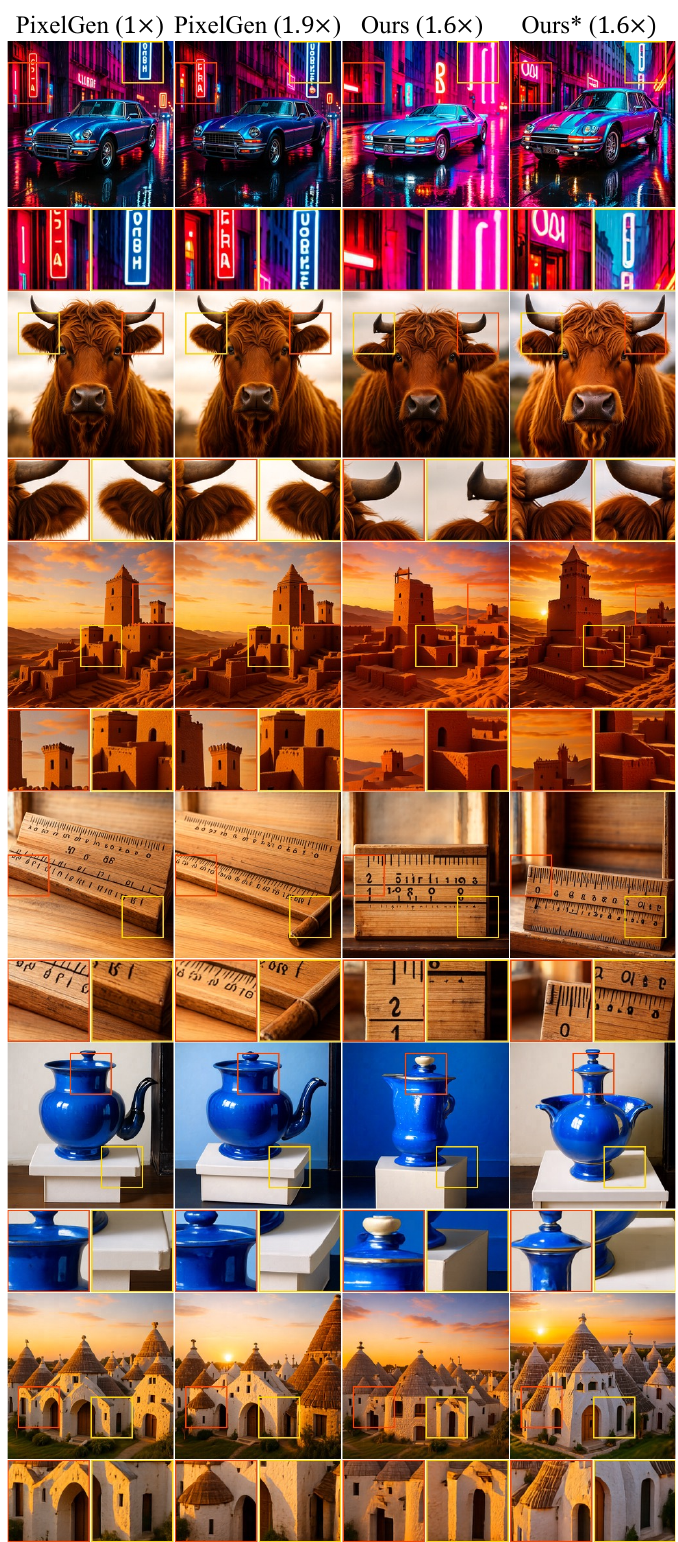}
    \caption{\textbf{Qualitative comparisons on pixel-space image generation.} We compare our method against default-step generation and reduced-step native-resolution generation on PixelGen~\citep{pixelgen}, matched to comparable speedups. An asterisk (Ours$^*$) marks the fine-tuned model. Our method achieves similar quality to full-resolution generation while attaining higher speedups.}
    \label{fig:pixelgen-3}
\end{figure}

\clearpage
\section{Additional Video Generation Results}
\label{sec:video_generation}

\paragraph{Training-free acceleration of latent-space video generation.} For training-free inference acceleration of video generation, we use our internal cluster of NVIDIA H100 GPUs with 80~GB of VRAM. For quantitative evaluation, we generate 200 videos using 200 test prompts and report the standard VBench scores across six evaluation dimensions, including Subject Consistency, Background Consistency, Motion Smoothness, Dynamic Degree, Aesthetic Quality, and Image Quality, in Table~\ref{tab:wan_vbench_720p}. All videos are generated in 720P resolution. For additional video results and qualitative comparisons, please see the project website at~\url{https://howardxiao.ca/speed/}.

\clearpage
\section{Extended Ablation Studies}
\label{sec:ablation}

We perform extended ablation studies on latent-space image generation. Specifically, we vary the error tolerance $\delta$, the number of resolution stages $S$, and the spectral transformation $T_\Phi$, and report wall-clock latency, speedup, and ImageReward in Table~\ref{tab:ablation-quan}, with qualitative comparisons in Figures~\ref{fig:ablation-T} to \ref{fig:ablation-s}.

\begin{table*}[t!]
    \centering
    \caption{\textbf{Ablation studies on the spectral transformation $T_\Phi$, error tolerance $\delta$, and number of resolution stages $S$.} Speedup is calculated relative to the 7-step FLUX.1-dev baseline wall-clock runtime on a single H100 GPU.}
    \label{tab:ablation-quan}
    \centering
    \begin{tiny}
    \begin{tabular}{cccccc}
        \toprule
        $T_\Phi$ & $\delta$ & $S$ & Latency (s) $\downarrow$ & Speedup (s) $\uparrow$ & ImageReward $\uparrow$ \\
        \midrule
        FLUX (7 steps) & - & - & 2.08 & 1.00$\times$ & 0.920 \\
        \midrule
        DCT & 0.001 & 2 & \textbf{2.08} & \textbf{1.00$\times$} & 1.032 \\
        DWT & 0.001 & 2 & \underline{2.08} & \textbf{1.00$\times$} & \textbf{1.038} \\
        FFT & 0.001 & 2 & 2.08 & \textbf{1.00$\times$} & \underline{1.036} \\
        \midrule
        DCT & 0.001 & 3 & \textbf{2.05} & \textbf{1.01$\times$} & \underline{1.034} \\
        DWT & 0.001 & 3 & \underline{2.06} & \textbf{1.01$\times$} & \underline{1.034} \\
        FFT & 0.001 & 3 & 2.06 & \textbf{1.01$\times$} & \textbf{1.040} \\
        \midrule
        DCT & 0.001 & 4 & \textbf{2.05} & \textbf{1.02$\times$} & \textbf{1.008} \\
        DWT & 0.001 & 4 & \underline{2.05} & \underline{1.01$\times$} & \underline{1.002} \\
        FFT & 0.001 & 4 & 2.06 & \underline{1.01$\times$} & 0.992 \\
        \midrule
        DCT & 0.002 & 2 & \textbf{2.07} & \textbf{1.01$\times$} & \underline{1.032} \\
        DWT & 0.002 & 2 & \underline{2.08} & \underline{1.00$\times$} & 1.031 \\
        FFT & 0.002 & 2 & 2.08 & \underline{1.00$\times$} & \textbf{1.036} \\
        \midrule
        DCT & 0.002 & 3 & \textbf{2.05} & \textbf{1.02$\times$} & \underline{1.034} \\
        DWT & 0.002 & 3 & \underline{2.06} & \underline{1.01$\times$} & \underline{1.034} \\
        FFT & 0.002 & 3 & 2.06 & \underline{1.01$\times$} & \textbf{1.040} \\
        \midrule
        DCT & 0.002 & 4 & \textbf{2.05} & \textbf{1.02$\times$} & \textbf{1.008} \\
        DWT & 0.002 & 4 & \underline{2.05} & \underline{1.01$\times$} & \underline{1.002} \\
        FFT & 0.002 & 4 & 2.06 & \underline{1.01$\times$} & 0.992 \\
        \midrule
        DCT & 0.005 & 2 & \textbf{1.91} & \textbf{1.09$\times$} & \underline{1.019} \\
        DWT & 0.005 & 2 & \underline{1.92} & \textbf{1.09$\times$} & \textbf{1.026} \\
        FFT & 0.005 & 2 & 1.92 & \underline{1.08$\times$} & 1.014 \\
        \midrule
        DCT & 0.005 & 3 & \textbf{1.87} & \textbf{1.12$\times$} & \underline{1.009} \\
        DWT & 0.005 & 3 & \underline{1.88} & \underline{1.11$\times$} & \underline{1.009} \\
        FFT & 0.005 & 3 & 1.88 & \underline{1.11$\times$} & \textbf{1.028} \\
        \midrule
        DCT & 0.005 & 4 & \textbf{1.87} & \textbf{1.12$\times$} & \underline{1.005} \\
        DWT & 0.005 & 4 & \underline{1.87} & \underline{1.11$\times$} & \textbf{1.008} \\
        FFT & 0.005 & 4 & 1.87 & \underline{1.11$\times$} & 0.994 \\
        \midrule
        DCT & 0.01 & 2 & \textbf{1.75} & \textbf{1.19$\times$} & \textbf{1.039} \\
        DWT & 0.01 & 2 & \underline{1.76} & \underline{1.18$\times$} & \underline{1.025} \\
        FFT & 0.01 & 2 & 1.76 & \underline{1.18$\times$} & 1.016 \\
        \midrule
        DCT & 0.01 & 3 & \textbf{1.71} & \textbf{1.22$\times$} & 1.015 \\
        DWT & 0.01 & 3 & \underline{1.72} & \underline{1.21$\times$} & \underline{1.018} \\
        FFT & 0.01 & 3 & 1.72 & \underline{1.21$\times$} & \textbf{1.030} \\
        \midrule
        DCT & 0.01 & 4 & \textbf{1.71} & \textbf{1.22$\times$} & \textbf{1.014} \\
        DWT & 0.01 & 4 & \underline{1.71} & \textbf{1.22$\times$} & \underline{1.006} \\
        FFT & 0.01 & 4 & 1.72 & \underline{1.21$\times$} & 0.996 \\
        \midrule
        DCT & 0.02 & 2 & \textbf{1.60} & \textbf{1.30$\times$} & \underline{1.009} \\
        DWT & 0.02 & 2 & \underline{1.61} & \textbf{1.30$\times$} & \textbf{1.010} \\
        FFT & 0.02 & 2 & \underline{1.61} & \textbf{1.30$\times$} & 1.002 \\
        \midrule
        DCT & 0.02 & 3 & \textbf{1.53} & \textbf{1.36$\times$} & \underline{0.967} \\
        DWT & 0.02 & 3 & \underline{1.53} & \textbf{1.36$\times$} & 0.966 \\
        FFT & 0.02 & 3 & 1.53 & \textbf{1.36$\times$} & \textbf{0.980} \\
        \midrule
        DCT & 0.02 & 4 & \textbf{1.53} & \textbf{1.36$\times$} & \textbf{0.964} \\
        DWT & 0.02 & 4 & \underline{1.53} & \textbf{1.36$\times$} & \underline{0.955} \\
        FFT & 0.02 & 4 & 1.53 & \textbf{1.36$\times$} & 0.949 \\
        \midrule
        DCT & 0.05 & 2 & \textbf{1.44} & \textbf{1.44$\times$} & \underline{0.989} \\
        DWT & 0.05 & 2 & \underline{1.45} & \textbf{1.44$\times$} & \textbf{0.991} \\
        FFT & 0.05 & 2 & 1.45 & \underline{1.43$\times$} & 0.972 \\
        \midrule
        DCT & 0.05 & 3 & \textbf{1.34} & \textbf{1.55$\times$} & \underline{0.936} \\
        DWT & 0.05 & 3 & \underline{1.34} & \textbf{1.55$\times$} & \textbf{0.937} \\
        FFT & 0.05 & 3 & 1.35 & \underline{1.54$\times$} & 0.933 \\
        \midrule
        DCT & 0.05 & 4 & \textbf{1.33} & \textbf{1.56$\times$} & \textbf{0.918} \\
        DWT & 0.05 & 4 & \underline{1.34} & \textbf{1.56$\times$} & \underline{0.902} \\
        FFT & 0.05 & 4 & 1.34 & \underline{1.55$\times$} & 0.895 \\
        \midrule
        DCT & 0.1 & 2 & \textbf{1.29} & \textbf{1.62$\times$} & \underline{0.966} \\
        DWT & 0.1 & 2 & \underline{1.29} & \textbf{1.62$\times$} & \textbf{0.977} \\
        FFT & 0.1 & 2 & 1.30 & \underline{1.61$\times$} & 0.944 \\
        \midrule
        DCT & 0.1 & 3 & \textbf{1.15} & \textbf{1.81$\times$} & \underline{0.880} \\
        DWT & 0.1 & 3 & 1.17 & 1.78$\times$ & \textbf{0.892} \\
        FFT & 0.1 & 3 & \underline{1.17} & \underline{1.79$\times$} & 0.877 \\
        \midrule
        DCT & 0.1 & 4 & \textbf{1.15} & \textbf{1.82$\times$} & \underline{0.864} \\
        DWT & 0.1 & 4 & \underline{1.15} & \underline{1.81$\times$} & \textbf{0.867} \\
        FFT & 0.1 & 4 & 1.16 & 1.80$\times$ & 0.841 \\
        \bottomrule
    \end{tabular}
    \end{tiny}
\end{table*}

\paragraph{Computational efficiency and scaling.} The inference cost of Spectral Progressive Diffusion is determined by the number of denoising steps and the token count processed at each resolution stage. For a fixed pretrained model and resolution schedule, the per-sample cost is independent of the dataset size, so generating or evaluating more samples scales linearly with the number of samples. The spectral transformation and noise-expansion operations add only lightweight per-transition overhead compared to the DiT forward passes. 

\paragraph{Spectral Transformation $T_\Phi$.} In Fig. \ref{fig:ablation-T}, we observe that DCT, DWT, and FFT achieve similar high-fidelity performance. We default to DCT for its real-valued computation and native support of non-power-of-two resolution scaling.

\paragraph{Error tolerance $\delta$.} In Fig. \ref{fig:ablation-delta}, we observe that increasing $\delta$ improves efficiency, but results in ghosting and halo artifacts near detailed edges.

\paragraph{Resolution stages $S$.} In Fig. \ref{fig:ablation-s}, we find that increasing $S$ leads to marginal speedup improvements and little image quality degradation. Since the primary speedup derives from bypassing native high-resolution steps, $S=2$ achieves the largest speedup while higher $S$ provides diminishing efficiency gains by further optimizing the already-inexpensive low-resolution trajectory.

\newpage
\begin{figure}
    \centering
    \includegraphics[width=\linewidth]{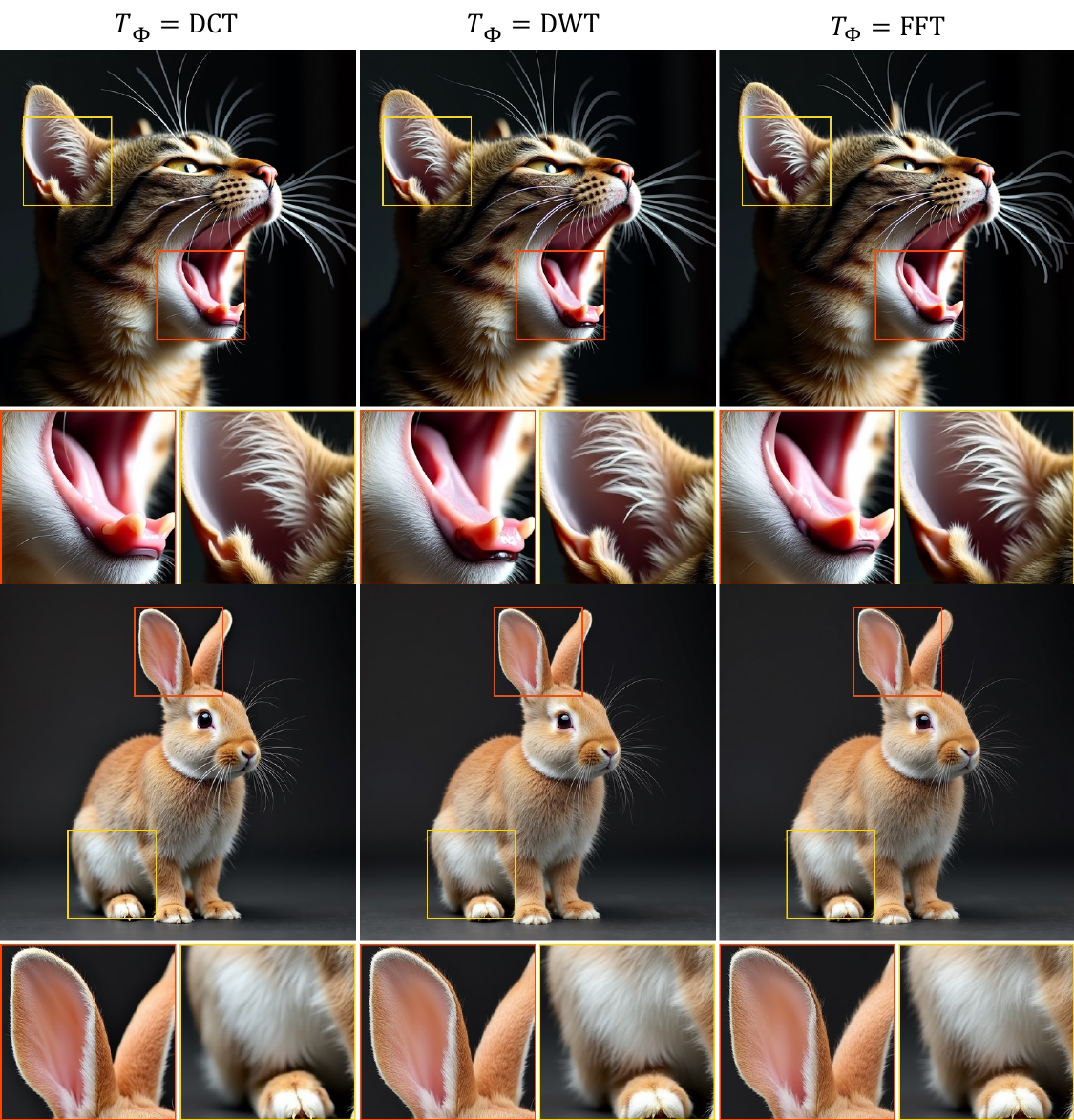}
    \caption{\textbf{Qualitative ablation on $T_\Phi$.} We see that all spectral transformations attain similar image quality. }
    \label{fig:ablation-T}
\end{figure}

\newpage
\begin{figure}
    \centering
    \includegraphics[width=\linewidth]{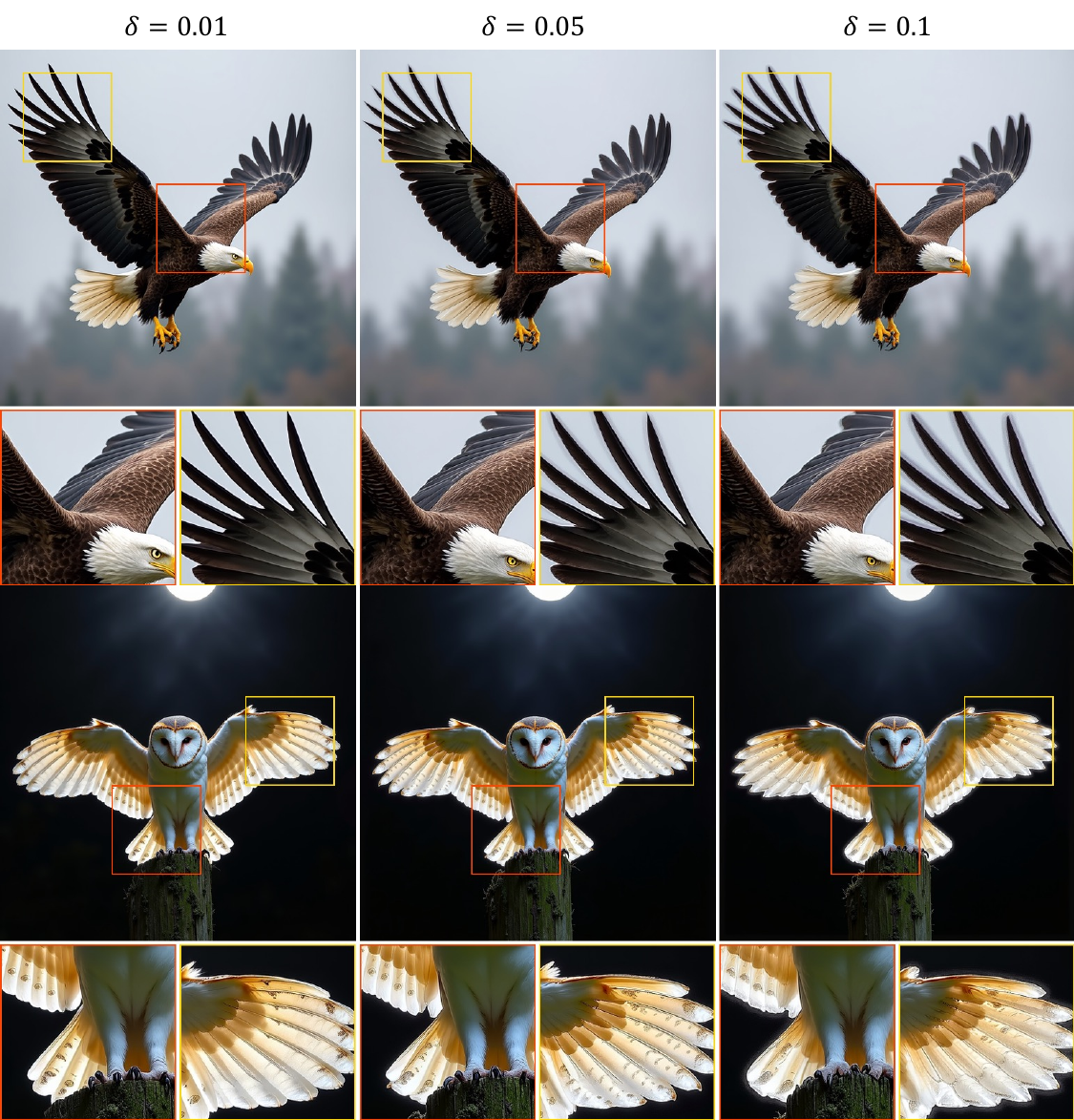}
    \caption{\textbf{Qualitative ablation on $\delta$.} We observe that increasing $\delta$ improves efficiency, but results in ghosting and halo artifacts near detailed edges. }
    \label{fig:ablation-delta}
\end{figure}

\newpage
\begin{figure}
    \centering
    \includegraphics[width=\linewidth]{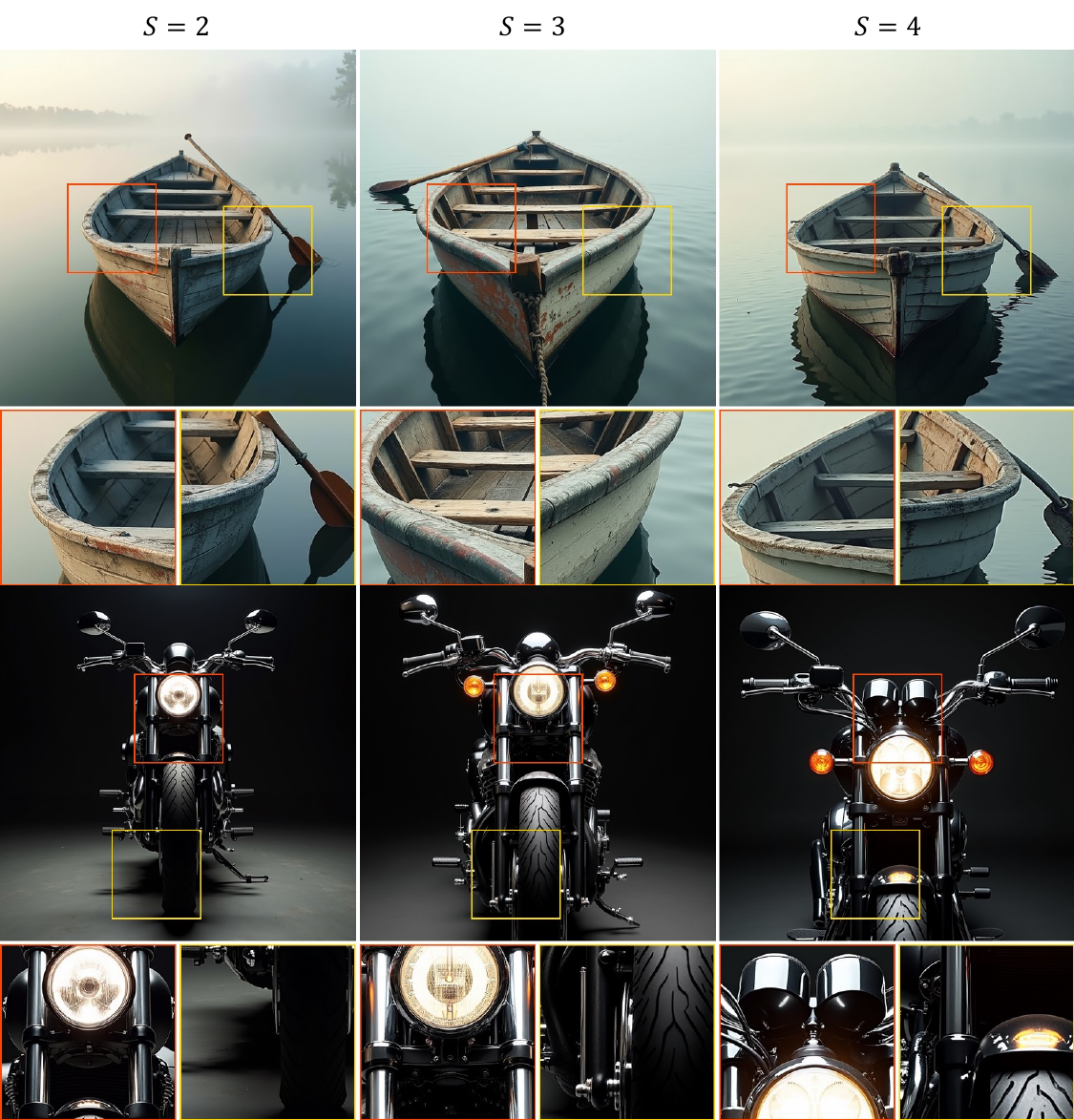}
    \caption{\textbf{Qualitative ablation on $S$.} We find that increasing $S$ leads to marginal speedup improvements and little image quality degradation.}
    \label{fig:ablation-s}
\end{figure}

\clearpage
\section{Additional Frequency-based Image Editing Details}
\label{sec:image_editing}

\paragraph{Detailed Procedure of Frequency-based Image Editing.} In our frequency-based editing pipeline, we perform the following operations given an input image $\mathbf{x}_\text{in}$, resolution scales $s_{1:S}$, and transition times $t_{1:S-1}$. We assume that the image editing process starts from transition time $t_k$, with $k \in \{1, 2, \ldots, S-1\}$.

\begin{enumerate}
    \item Compute the spectrum of the input image $\boldsymbol{\xi}_\text{in} = T_\Phi(\mathbf{x}_\text{in})$.
    \item Extract the low-frequency part of $\boldsymbol{\xi}_\text{in}$ corresponding to the representable spectrum $\Omega_{s_k}$ at scale $s_k$. 
    \item Fill the spectrum $\Omega_{s_{k+1}} \setminus \Omega_{s_k}$ with $t_k\epsilon$, where $\epsilon \sim \mathcal{N}(0, 1)$.
    \item Convert the expanded spectrum $\Omega_{s_{k+1}}$ back to the spatial domain $\mathbf{x}_{t_k}^{s_{k+1}}$ via $T^{-1}_\Phi$ and continue denoising starting from the transition time $t_k$.
\end{enumerate}

The total time of denoising is thus $1-t_k$.

We conduct image editing experiments using Z-Image \citep{zimage} as the backbone, evaluating our frequency-domain image editing pipeline against the SDEdit baseline \citep{meng2022sdedit}. We set $S=2$ and  $\delta=0.01$.

\paragraph{Extended Qualitative Results.} We show extended image editing results on two tasks: \textit{texture editing} and \textit{artistic stylization}. We configure our method with $\delta=0.01$ and $S=2$, which corresponds to a resolution transition at step 26 of the default 50-step denoising trajectory. We set $T_\Phi=\text{DCT}$, and evaluate the effect of different spectral transforms in the next section. For the SDEdit baseline, we initialize the process by corrupting the clean source image to the equivalent noise level at step 26 and subsequently proceed with the denoising schedule. This setting accurately reflects the editing method's capability to inject textural or style information, which are considered high-frequency details, in later denoising steps. 

As shown in Figures \ref{fig:texture_edit_1} to \ref{fig:art_stylization}, our frequency-based editing method achieves superior prompt alignment and geometric consistency compared to SDEdit, successfully changing the texture and artistic style of the input image while retaining the topological structure of the input. While SDEdit-style editing at earlier timesteps (higher noise levels at step 17) can facilitate stylistic changes, it introduces significant structural drift and geometric divergence from the source image. Furthermore, starting from early timesteps incurs substantial computational overhead and requires a brittle renoising to a specific, difficult-to-tune initialization timestep to balance editability with structural fidelity.

\paragraph{Effect of $T_\Phi$ on Image Editing.} As shown in Fig. \ref{fig:T_comparison_editing}, FFT-, DCT- and DWT-based editing achieve similar editing quality.

\begin{figure}
    \centering
    \includegraphics[width=\linewidth]{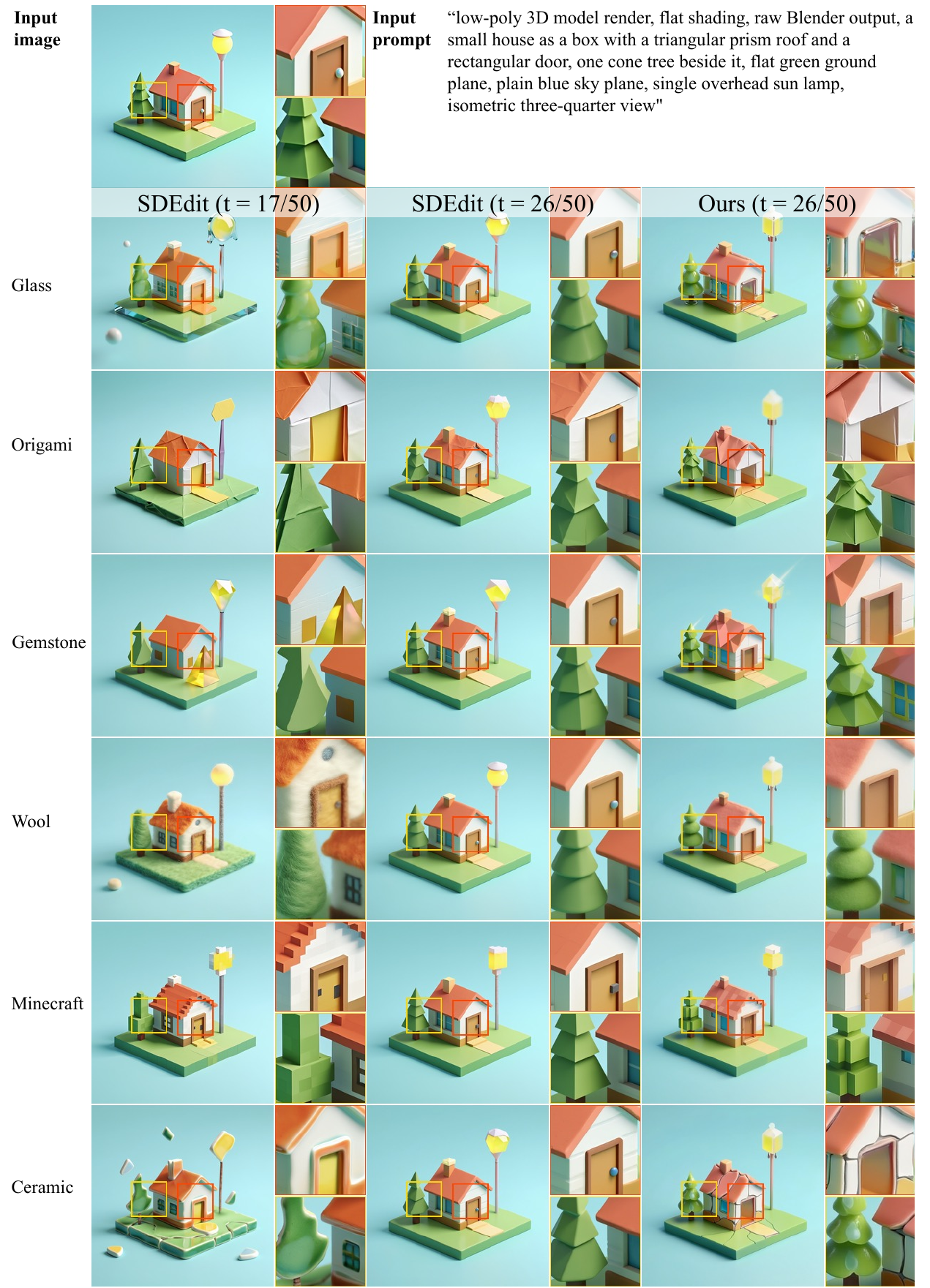}
    \caption{\textbf{Texture editing results.} Our frequency-based editing framework outperforms SDEdit, enabling high-fidelity texture transfer while preserving the geometric structure of the input image.}
    \label{fig:texture_edit_1}
\end{figure}

\begin{figure}
    \centering
    \includegraphics[width=\linewidth]{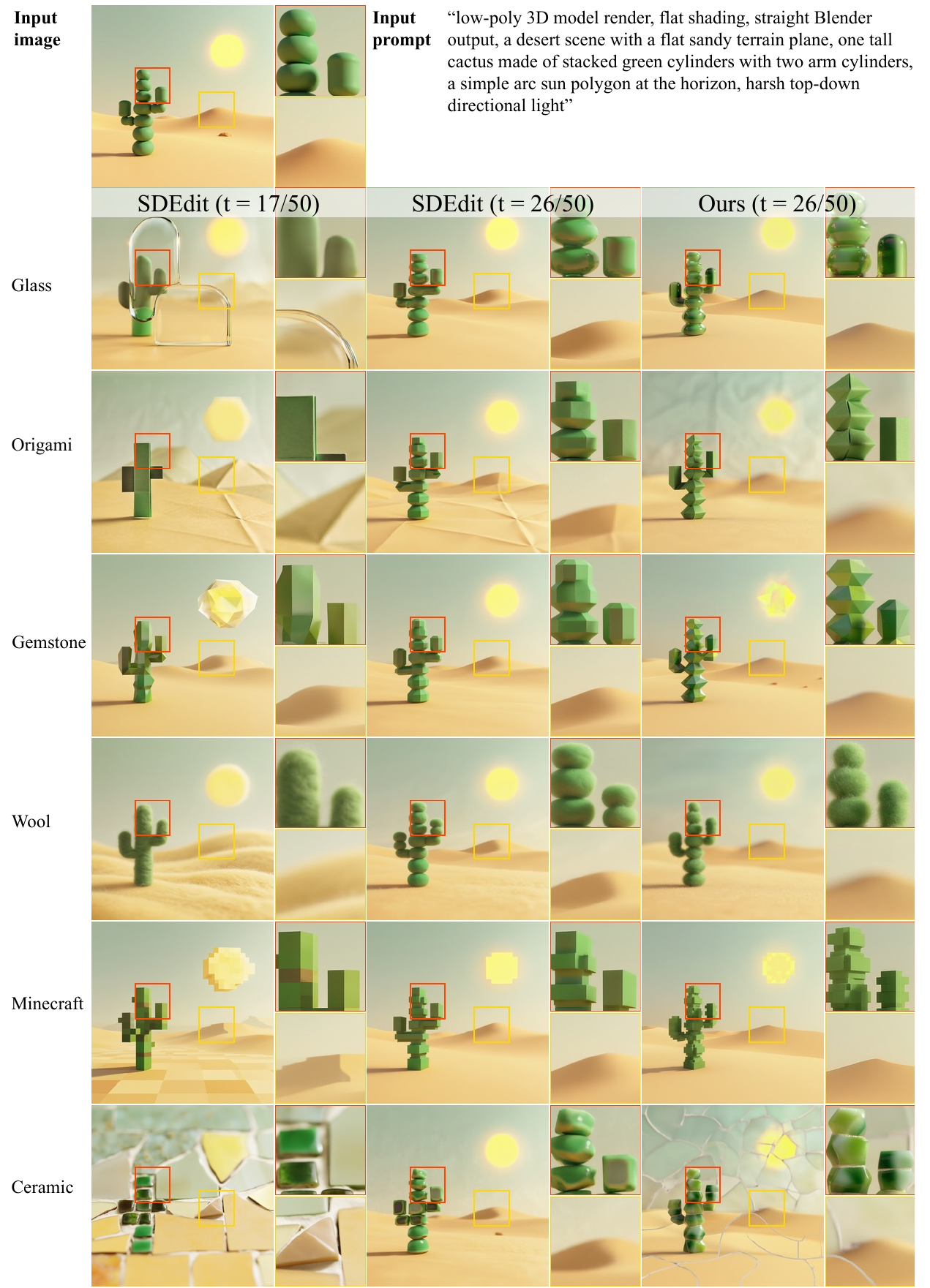}
    \caption{\textbf{Texture editing results.} Our frequency-based editing framework outperforms SDEdit, enabling high-fidelity texture transfer while preserving the geometric structure of the input image.}
    \label{fig:texture_edit_2}
\end{figure}

\begin{figure}
    \centering
    \includegraphics[width=\linewidth]{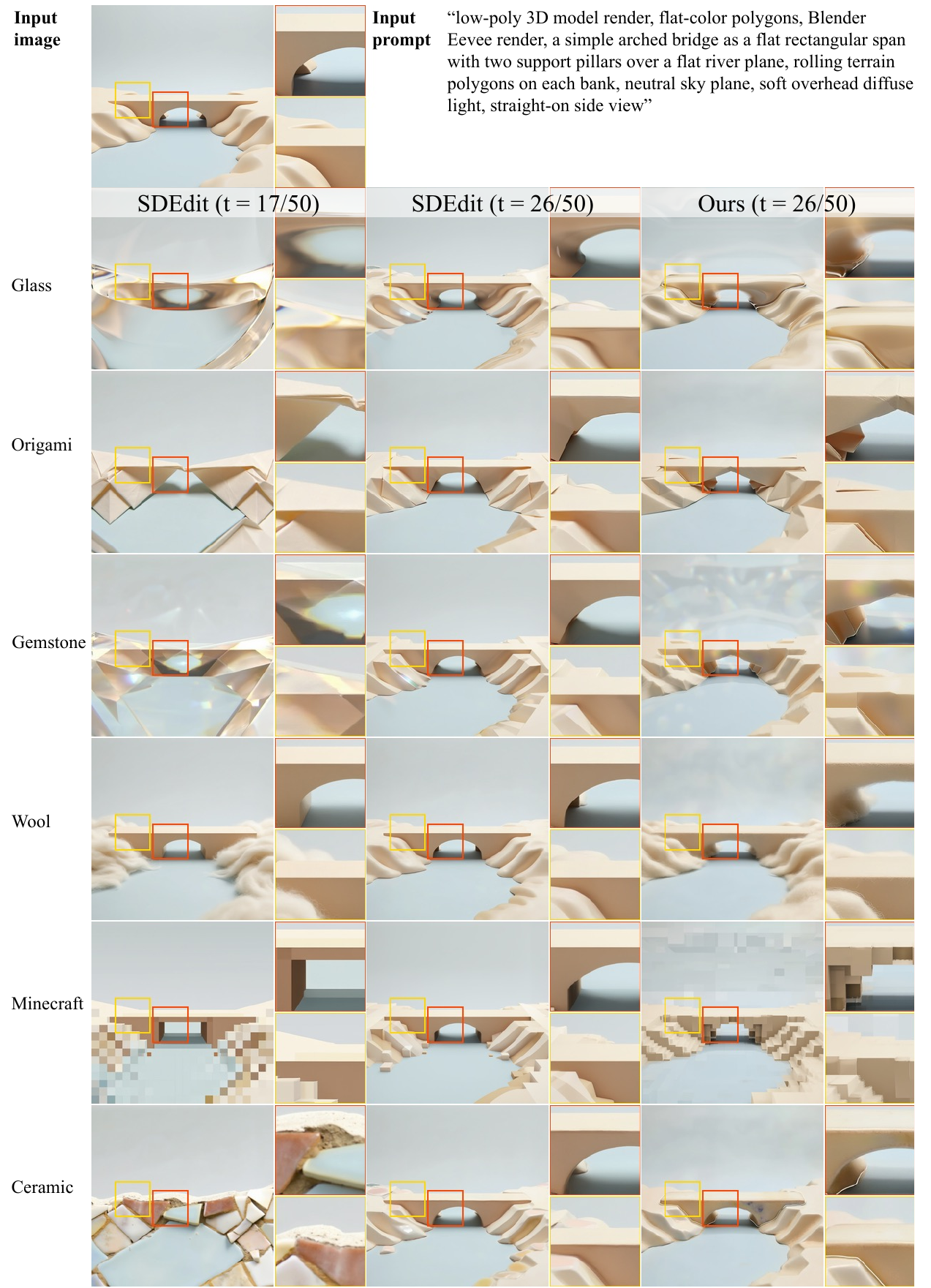}
    \caption{\textbf{Texture editing results.} Our frequency-based editing framework outperforms SDEdit, enabling high-fidelity texture transfer while preserving the geometric structure of the input image.}
    \label{fig:texture_edit_3}
\end{figure}

\begin{figure}
    \centering
    \includegraphics[width=\linewidth]{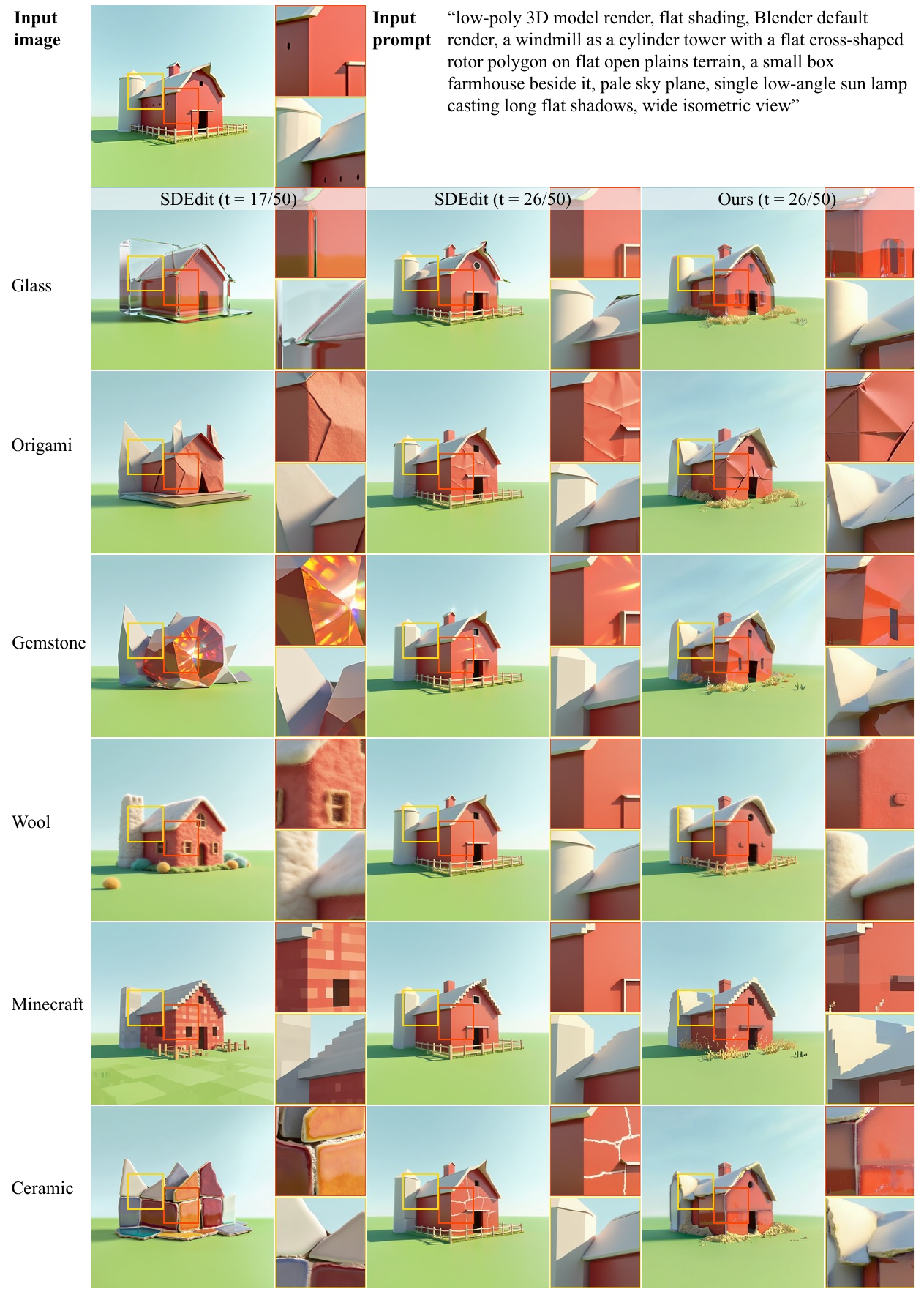}
    \caption{\textbf{Texture editing results.} Our frequency-based editing framework outperforms SDEdit, enabling high-fidelity texture transfer while preserving the geometric structure of the input image.}
    \label{fig:texture_edit_4}
\end{figure}

\begin{figure}
    \centering
    \includegraphics[width=\linewidth]{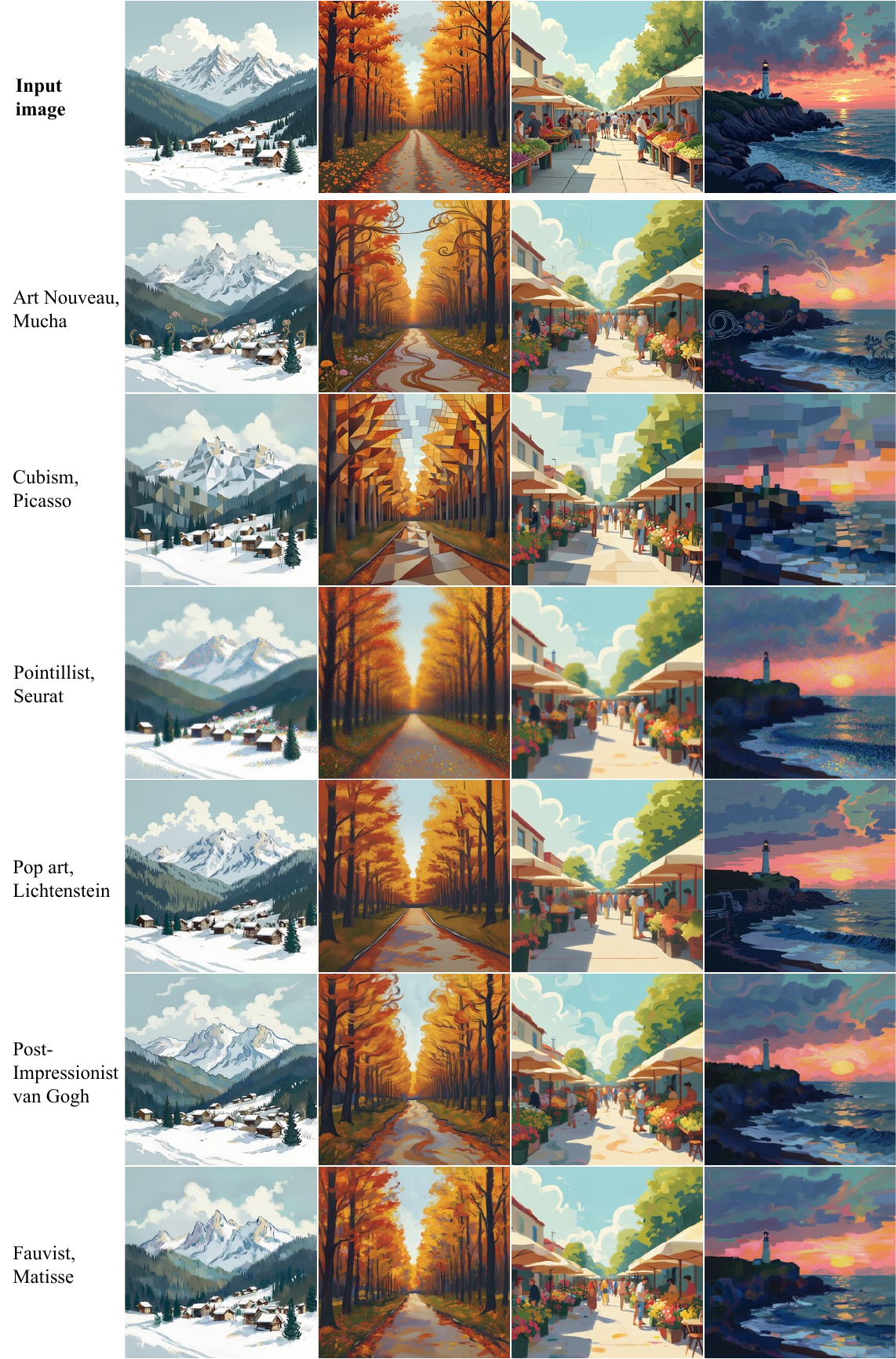}
    \caption{\textbf{Artistic stylization results.} Aside from texture editing, our frequency-based editing approach also supports artistic stylization given stylistic descriptions and a representative artist. }
    \label{fig:art_stylization}
\end{figure}

\begin{figure}
    \centering
    \includegraphics[width=\linewidth]{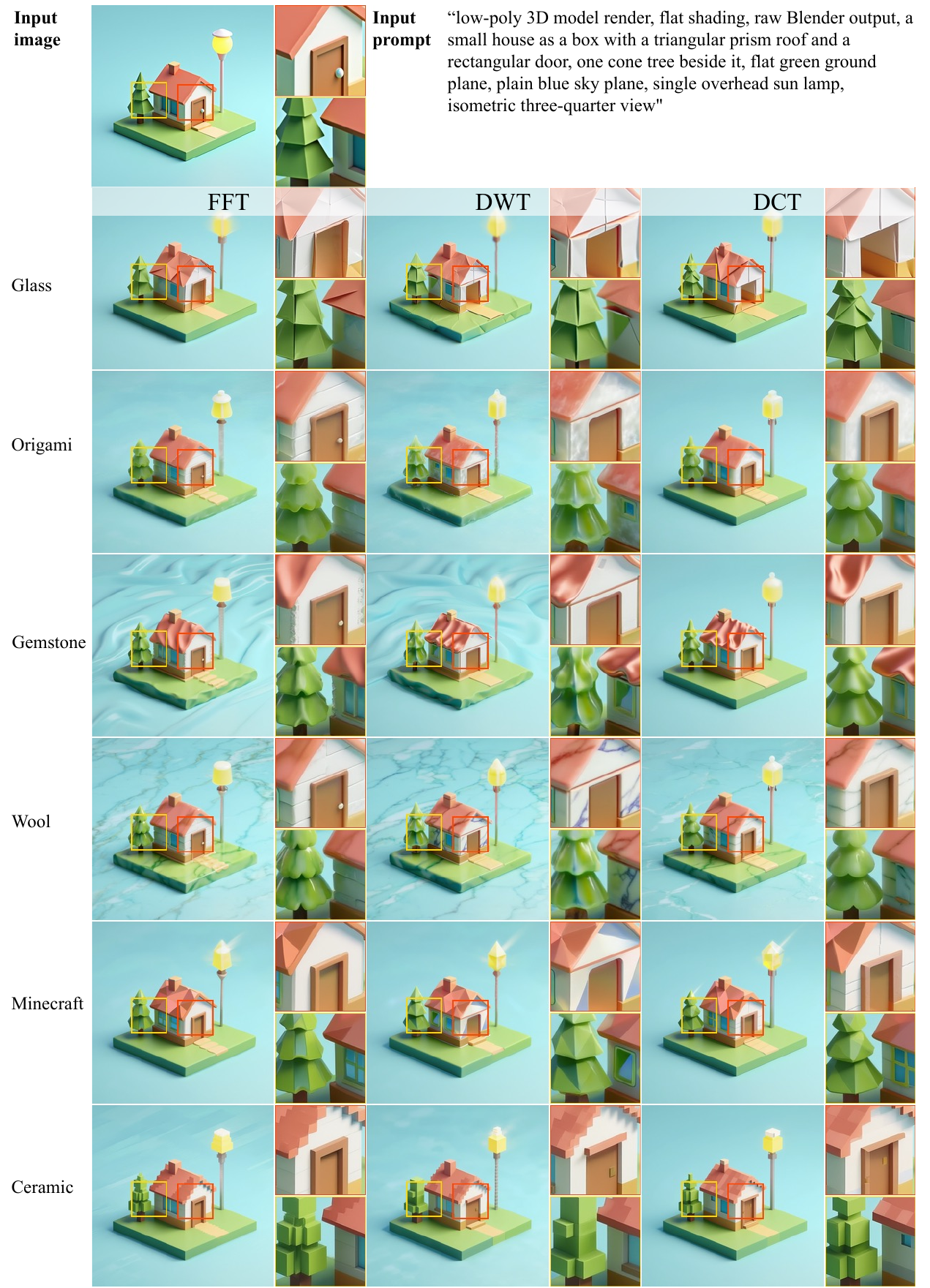}
    \caption{\textbf{Effect of $T_\Phi$ on image editing.} FFT-, DCT- and DWT-based editing achieve similar editing quality.}
    \label{fig:T_comparison_editing}
\end{figure}

\clearpage
\section{Broader Impacts}
\label{app:broader-impacts}

Spectral Progressive Diffusion is a principled method for improving the efficiency of pretrained image and video generation models, rather than a system targeted at a particular deployment domain. Its main positive impact is to reduce the compute, latency, and energy cost of generation, which can make experimentation and creative workflows more accessible and lower the environmental footprint of repeated generation. At the same time, because the method can accelerate existing generative models, we acknowledge that it could also lower the cost of producing synthetic media for undesirable uses such as disinformation, impersonation, or spam, depending on the pretrained model and release setting. We do not release new datasets or generation services, and any future release of fine-tuned adapters should follow the access terms, safeguards, and usage restrictions of the underlying pretrained models.

\section{Existing Assets and Licenses}
\label{app:assets-licenses}

Our experiments build on publicly available pretrained models, datasets, evaluation benchmarks, and software packages, including FLUX.1-dev~\citep{black2024flux}, Z-Image~\citep{zimage}, PixelGen~\citep{pixelgen}, WAN~\citep{wan}, MS-COCO~\citep{lin2014microsoft}, GenEval~\citep{geneval}, T2I-CompBench~\citep{t2icompbench}, VBench~\citep{vbench}, and the evaluation metrics used in Sec.~\ref{sec:exp}, including ImageReward~\citep{imagereward}, CLIP-IQA~\citep{wang2023clipiqa}, and NIQE~\citep{mittal2013niqe}. We cite the original sources for these assets and use them under their respective licenses and access terms.
The named licenses and access terms are: FLUX.1-dev is released under the FLUX.1 [dev] Non-Commercial License; Z-Image, PixelGen, WAN~2.1, VBench, ImageReward, VChitect-T2V-Dataverse, and the PyIQA implementation used for NIQE are under the Apache License 2.0; GenEval and T2I-CompBench are under the MIT License; and CLIP-IQA is under the NTU S-Lab License 1.0.
MS-COCO annotations are under Creative Commons Attribution 4.0 International (CC BY 4.0), while its images follow their original Flickr licenses; the public page for Aesthetic-Train-V2 does not list an explicit license, so we use it only under its stated access terms and do not redistribute the dataset.

\end{document}